\documentclass{article}

\usepackage[main, final]{neurips_2026}

\usepackage{amsmath}  
\usepackage{amssymb}  
\usepackage{amsthm}
\usepackage{mathtools}
\usepackage{comment}
\usepackage{graphicx}
\usepackage[utf8]{inputenc} 
\usepackage[T1]{fontenc}    
\usepackage{url}            
\usepackage{booktabs}       
\usepackage{amsfonts}       
\usepackage{nicefrac}       
\usepackage{microtype}      
\usepackage[dvipsnames,svgnames,x11names]{xcolor}         
\usepackage{subcaption}
\usepackage{algorithm}
\usepackage{algpseudocode}
\allowdisplaybreaks
\usepackage{enumitem}

\newtheorem{assumption}{Assumption}
\newtheorem{theorem}{Theorem}
\newtheorem{proposition}[theorem]{Proposition}

\newtheorem{lemma}[theorem]{Lemma}

\newcommand{\erf}{\text{erf}}

\newcommand{\half}{\nicefrac{1}{2}}

\newcommand{\X}{\mathbb{X}}

\newcommand{\Xhalf}{\X^{(\half)}}

\usepackage{xcolor}

\usepackage{xcolor}
\usepackage[
    colorlinks=true,
    linkcolor=blue!55!black,
    citecolor=green!45!black,
    urlcolor=purple!60!black,
    filecolor=magenta!60!black,
    bookmarks=true,
    bookmarksopen=true,
    bookmarksnumbered=true,
    unicode=true
]{hyperref}

\usepackage[nameinlink,noabbrev]{cleveref}

\crefname{equation}{Eq.}{Eqs.}
\Crefname{equation}{Equation}{Equations}

\crefname{figure}{Fig.}{Figs.}
\Crefname{figure}{Figure}{Figures}

\crefname{subfigure}{Fig.}{Figs.}
\Crefname{subfigure}{Figure}{Figures}

\crefname{table}{Table}{Tables}
\Crefname{table}{Table}{Tables}

\crefname{section}{Sec.}{Secs.}
\Crefname{section}{Section}{Sections}

\crefname{appendix}{App.}{Apps.}
\Crefname{appendix}{Appendix}{Appendices}

\crefname{algorithm}{Alg.}{Algs.}
\Crefname{algorithm}{Algorithm}{Algorithms}

\crefname{assumption}{Assumption}{Assumptions}
\Crefname{assumption}{Assumption}{Assumptions}

\crefname{theorem}{Theorem}{Theorems}
\Crefname{theorem}{Theorem}{Theorems}

\crefname{proposition}{Proposition}{Propositions}
\Crefname{proposition}{Proposition}{Propositions}

\crefname{corollary}{Corollary}{Corollaries}
\Crefname{corollary}{Corollary}{Corollaries}

\crefname{lemma}{Lemma}{Lemmas}
\Crefname{lemma}{Lemma}{Lemmas}

\crefname{definition}{Definition}{Definitions}
\Crefname{definition}{Definition}{Definitions}

\crefname{remark}{Remark}{Remarks}
\Crefname{remark}{Remark}{Remarks}

\title{Understanding axial attention in TSFMs through single location regression}

\author{%
  Andrei Tiberiu Pantea\thanks{Corresponding author: \texttt{andrei-tiberiu.pantea@polytechnique.edu}}\\
  CMAP, École polytechnique, IP Paris\\
  Electricité de France R\&D\\
  Palaiseau, France\\
  \And
  Aymeric Dieuleveut\\
  CMAP, École polytechnique, IP Paris\\
  Palaiseau, France\\
}

\begin{document}

\maketitle

\begin{abstract}
  Recent advances in Time Series Foundation Models (TSFMs) increasingly rely on axial attention mechanisms, but despite their empirical success, the theoretical properties of axial attention remain largely unexplored.
  To bridge this gap, we extend the Single Location Regression framework introduced in \citet{marion} to 2D structured data, in order to study sparse information settings where a target depends on a small number of tokens.
  We formulate an analytically tractable axial attention-type predictor and rigorously analyze its statistical properties and training dynamics. We prove that under a specific asymptotic regime and with knowledge of the underlying oracle parameters, our predictor achieves Bayes optimality, whereas a class of semi-linear simplifications strictly fails to do so, underlining the role of the inner non-linearity. 
  We prove that for a specific temperature scaling, Projected Gradient Descent converges globally to the optimal oracle parameters. Extensive numerical experiments validate our theoretical findings and highlight the critical role of inverse temperature scheduling in empirical convergence.
\end{abstract}

\section{Introduction}

Recent years have witnessed a surge in the development of Time Series Foundation Models (TSFMs) leveraging transformer architectures \citep{vaswani2017attention} (e.g., Chronos-2 \citep{chronos-2}, Moirai 2.0 \citep{moirai}, TimesFM \citep{timesfm}, TabPFN-TS \citep{tabpfnts}, Toto \citep{toto}, PatchTST-FM \citep{patchtst-fm}). 
However, applying standard global attention to high-dimensional multivariate time series presents a significant computational bottleneck. 
Standard attention treats inputs as a flattened 1D sequence, which scales quadratically with the total number of tokens and fails to explicitly leverage the natural 2D grid structure (time steps $\times$ channels) of multivariate data. 
To address this, axial attention introduced in \citet{ho2019axial}, and specifically for multivariate time series in \citet{crossformer}, has emerged as a powerful alternative already implemented in some of the state-of-the-art TSFMs, e.g., Chronos-2, TabPFN-TS and Toto, but also in task specific models such as CEReBrO \citep{cerebro}, an EEG foundation model that utilizes alternating intra-channel temporal and inter-channel spatial attention and STAIT (Spatio-Temporal Alternating Iterative Transformer) \citep{stait} for remote sensing. 
 
The axial-attention layer combines two token-mixing half-layers, as described in Figure~\ref{fig:axial_attention}. Formally, we consider our input time series  as $M \times N$ matrices of tokens $\X = (X_{ij})_{1 \leq i \leq M, 1 \leq j \leq N}$,  taking values in the embedding (or latent) space $\mathbb{R}^d$, where $M$ can be seen as the number of channels and $N$ as the number of time series patches.  Generically, given as hyperparameters two real numbers $\lambda_r,\lambda_c$, referred to as \textit{inverse temperatures}, and an activation function $\sigma: \mathbb R^d\to \mathbb R^d$, the standard axial self-attention layer is parametrized by weight matrices $Q_r, K_r \in \mathbb{R}^{d\times p_r}$, $V_r \in \mathbb{R}^{d \times d}$, $Q_c, K_c, V_c \in \mathbb{R}^{d\times p_c}$, and $O_c \in \mathbb{R}^{p_c \times o_c}$.  The subscripts $r$ and $c$ represent row-wise and, respectively, column-wise operations. The mapping is defined as, for any $j\in [N]$:
\begin{equation} \label{eq:transformer}
    T_{\lambda_r, \lambda_c, \sigma}^{(Q_{r,c}, K_{r,c}, V_{r,c}, O_{c})}(\X)_{\cdot j} = \sigma \Big(\lambda_c \underbrace{\Xhalf_{\cdot j} Q_c}_{M \times p_c} \underbrace{K_c^T \left(\Xhalf_{\cdot j}\right)^T}_{p_c \times M}\Big) \underbrace{\Xhalf_{\cdot j} V_c}_{M \times p_c} \underbrace{O_c}_{p_c \times o_c},
\end{equation}
where the initial row-wise computation is, for any $i\in [M]$:
\begin{equation}\label{eq:halfstep}
    \Xhalf_{i \cdot} = \sigma (\lambda_r \underbrace{\X_{i \cdot} Q_r}_{N \times p_r} \underbrace{K_r^T \X^T_{i \cdot}}_{p_r \times N}) \underbrace{\X_{i \cdot} V_r}_{N \times d},
\end{equation}
where we use the notations $[n] = \{1, 2, \dots, n\}$ for any positive integer $n$, and $W_{i \cdot}$ (resp.~$W_{\cdot j}$) to denote the $i$-th line (resp.~$j$-th column) of a matrix $W$. Note that in \eqref{eq:halfstep} (and throughout the paper when applied to a matrix) the activation function is used row-wise. When $\sigma$ is the softmax function,  this structure is one of the building blocks used in Chronos-2, TabPFN-TS and Toto, stripped of residual connections and normalization layers. Note that the order in which the row attention and column attention are performed has no impact on the results in our paper; we choose row attention followed by column attention just for presentation purposes.

\begin{figure}
    \centering
    \includegraphics[width=1\linewidth]{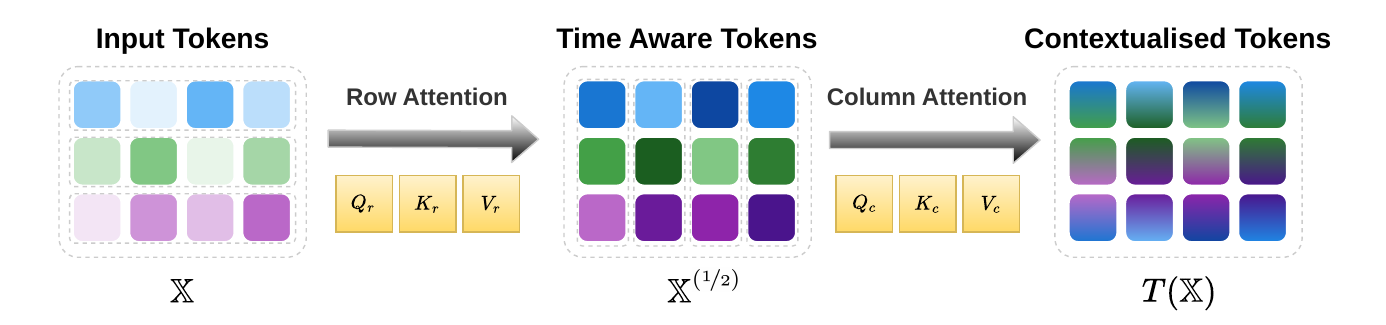}
    \caption{Schematic view of an axial attention layer. Raw input tokens are first processed by a time-wise self-attention layer that isolates features along the rows. A subsequent channel-wise attention layer vertically blends these extracted features, effectively capturing global cross-channel  and temporal context.}
    \label{fig:axial_attention}
\end{figure}

Although the empirical success of these axial architectures is evident across various domains, their theoretical properties remain largely unexplored. 
For example, in \citet{berthelier2026}, the authors have shown that pretrained Chronos-2 is not always capable of identifying simple relationships between targets and covariates in the token-wise sparse information setting. 
Our goal is to improve the theoretical understanding of axial attention by analyzing their ability to approximate and learn the optimal predictor in this setting. 
To this end, we build on the work of \citet{marion}, conducted in the context of classical transformers and motivated by natural language processing (NLP) tasks. 
In that paper, the authors introduced a new class of statistical models, namely Single Location Regression (SLR) models, through which they aimed to investigate the ability of standard attention layers to recover relevant information in a token-wise sparse information setting, where the target depends only on a relatively small number of tokens. 
They have provided theoretical guarantees for a modification of the standard transformer architecture, proving that one can recover optimal predictors for SLRs.
This naturally raises a fundamental, yet unanswered, question: \textit{Can we achieve the same statistical and optimization performance with axial attention as we do with normal global attention?} 
The answer is not obvious, since the information flow is much more restricted in an axial transformer, where each token only attends to tokens with which it shares an axis, as opposed to the free flow of information between any two tokens in a standard transformer.

Building upon the foundation established by \citet{marion}, we introduce a SLR model and extend the theoretical analysis of transformer architectures to axial attention mechanisms, as a method of assessing the theoretical capabilities of current state-of-the-art TSFMs. 
Our main contributions are structured as follows:
\begin{itemize}[leftmargin=*]
    \item In Section~\ref{sec:sec02}, we introduce an SLR statistical model for the token-wise sparse information setting adapted to 2D structured data (e.g., multivariate time series, tables). We then explain the connection between such a model and multivariate time series anomaly analysis, and finally, we introduce a trainable predictor and show that it corresponds to a slightly simplified axial attention layer.
    \item In Section~\ref{sec:sec04}, we analyze the performance of the predictor introduced and prove that there exists an asymptotic regime in which it is statistically optimal for the SLR model investigated, when using the oracle parameters. Furthermore, we show that removing the inner non-linearity in our predictor renders it suboptimal.
    \item In Section~\ref{sec:sec05}, we propose an optimization algorithm for our predictor and provide theoretical guarantees for its convergence, despite the non-convexity of the problem. We further conduct extensive numerical studies which reveal a complex optimization landscape that confirm our theoretical findings.
\end{itemize}

Our structure closely follows that of \citet{marion}, but our motivation (the study of TSFMs) and the technical challenges are, nevertheless, very different. 
Standard global attention benefits from symmetric token coupling, which facilitates the derivation of closed-form analytical solutions for the expected risk. 
By contrast, axial attention operates in a sequential manner. 
This decoupled architecture breaks the global symmetry and induces a cascading non-linear dependence between axes. 
As a result, the optimization landscape exhibits significant complexity, rendering standard closed-form approaches to statistical optimality and convergence of optimization methods intractable and necessitating novel analytical techniques.

\section{SLR model and simplified axial attention predictor} \label{sec:sec02}

In this section, we introduce an SLR model and underline its connection to multivariate time series anomaly analysis. Afterwards, we consider a simplification of axial attention that renders the statistical analysis in Section~\ref{sec:sec04} tractable.

\paragraph{Single location regression model.}
We focus on the following statistical model, where the output $Y \in \mathbb{R}$ is assumed to be given by
\begin{equation} \label{eq:task} \tag{SLR}
    Y = X_{I_0 J_0} ^T v^* + \xi,
\end{equation}
where $\xi \sim \mathcal{N}(0, \epsilon^2)$ is independent noise and  $I_0$, $J_0$ are discrete random variables over $[M]$ and $[N]$, respectively. No restriction on the distribution of $(I_0, J_0)$ is imposed unless otherwise specified. Conditionally on $(I_0, J_0)$, for $(\ell, \gamma^2, \epsilon^2, k^*, v^*) \in \mathbb{R}_+ \times \mathbb{R}^*_+ \times \mathbb{R}^*_+ \times \mathbb{S}^{d-1} \times \mathbb{S}^{d-1}$,
\begin{equation*}
    \begin{cases}
        X_{I_0 J_0} & \sim \quad \mathcal{N}\left(\ell k^*, \gamma^2 I_d\right), \\
        X_{ij} & \sim \quad \mathcal{N}\left(0, I_d\right), \text{ for } (i,j) \neq (I_0, J_0),
    \end{cases}
\end{equation*}
where $X_{ij}$ are independent for all $(i, j) \in [M] \times [N]$.

The parameters of \eqref{eq:task} are the shift magnitude $\ell$, the localization and noise variances $\gamma^2$ and $\epsilon^2$, the unobserved projection vector $v^*$ and the localization vector $k^*$, with both vectors taking values on the $d$-dimensional unit sphere $\mathbb{S}^{d-1}$. The target variable $Y$ depends exclusively on a single embedded token within the $M \times N$ input matrix, positioned at random discrete coordinates $(I_0, J_0)$. This active token is statistically identifiable because its expected value is shifted along the direction $k^*$. Consequently, the primary mathematical challenge of this task stems from its dynamic sparsity.

{\textbf{Remark 1.} Unlike in \citet{marion}, we consider $\ell$ to be decoupled from $d$ throughout our analysis, but note that for $\ell = \sqrt{\nicefrac{d}{2}}$ and $\gamma^2 = \nicefrac{1}{2}$, all tokens have the same expected norm, i.e. $\mathbb{E}[\|X_{I_0J_0}\|_2^2] = \mathbb{E}[\|X_{ij}\|_2^2] = d$, making them indistinguishable just based on their magnitude.}

\begin{figure}[htbp]
    \centering
    \includegraphics[width=1\linewidth]{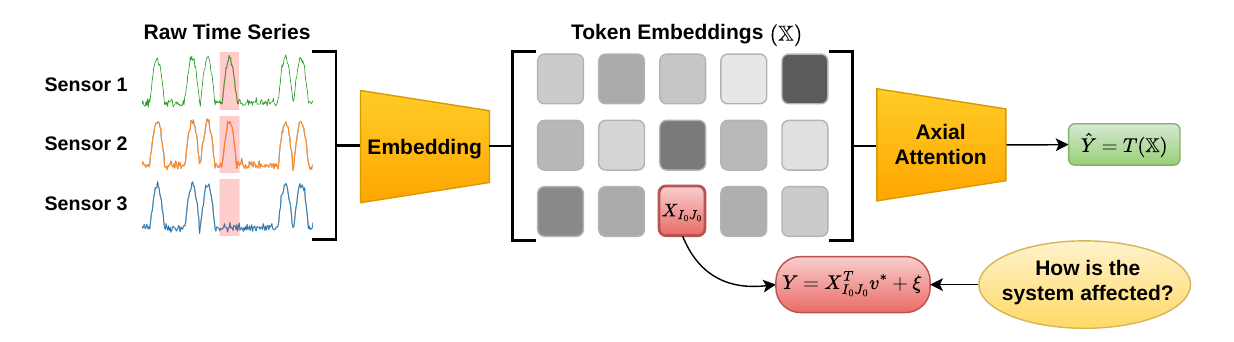}
    \caption{Under normal operating conditions, the background embeddings behave as uninformative noise, or as a standard token, modeled by the standard normal distribution $\mathcal{N}(0, I_d)$. When an anomaly occurs in one sensor, its specific signature is encoded into the token $X_{I_0 J_0}$, characterized by a shifted mean $\ell k^*$ and a localization variance $\gamma^2 I_d$. The regression target $Y$ represents a downstream consequence or system-level anomaly score that depends strictly on identifying this specific embedded state via the projection vector $v^*$. {Our goal is to investigate whether we can recover this target through an axial attention layer.}}
    \label{fig:SLR_Anomaly}
\end{figure}

\paragraph{Connection to multivariate time series anomaly analysis.} 
The Single Location Regression (SLR) model detailed above provides a natural theoretical framework for multivariate time series anomaly analysis, particularly when operating on learned representation embeddings $\X$. We consider system-wide anomalous events identifiable through correlated behaviors across multiple sensors, as pictured in Figure~\ref{fig:SLR_Anomaly}. We operate under the assumption that the embedding layer has already captured these correlated dynamics, localizing the signature of the system-wide anomaly into the embedding of a single primary driving sensor $I_0$ at a specific time step $J_0$. By investigating whether axial attention, which restricts information flow, can successfully recover the true parameters $k^*$ and $v^*$, we can theoretically assess its ability to isolate and leverage localized anomalies.

Furthermore, using axial attention layers instead of standard global attention is attractive also from a computational point of view. The global attention mechanism treats the input as a flattened 1D sequence of length $M \times N$ and this coupling requires $\mathcal{O}(M^2N^2)$ operations per layer, while axial attention only requires $\mathcal{O}(MN^2 + M^2N)$ (see \citet{ho2019axial} for a derivation).

\paragraph{An analytically tractable simplification of axial attention for SLR.} 

In what follows, we introduce a class of predictors that are direct simplifications of the axial attention layer. For an activation $\sigma \colon \mathbb{R}^d \to \mathbb{R}^d$ and inverse temperatures $\lambda_r, \lambda_c \in (0, \infty)$, let
\begin{equation} \label{eq:T_sigma}
    T_{\lambda_r, \lambda_c, \sigma}^{(k, v)}(\X) \coloneqq \sigma(\lambda_c \underbrace{\Xhalf k}_{M \times 1}) ^T \; \underbrace{\Xhalf v}_{M \times 1}, \;  \text{ where } \Xhalf_i \coloneqq \sigma(\lambda_r \underbrace{\X_{i \cdot}k}_{N \times 1})^T \; \underbrace{\X_{i \cdot}}_{N \times d},
\end{equation}
for $i \in [M]$ and learnable parameters $k, v \in \mathbb{S}^{d-1}$. Note that we slightly overload the notation and use $\Xhalf$ to denote the half step of both our predictor and of axial attention.

In the context of TSFMs, sequences are typically appended with designated \texttt{[REG]} tokens to act as readout tokens or attention sinks (see \citet{chronos-2}, for example). Assuming that the input $\X$ is appended with a $(N+1)$th column of row-aggregation \texttt{[REG\_ROW]} tokens and a column aggregation \texttt{[REG\_COL]} token at position $(M+1, N+1)$, we have the following lemma
\begin{lemma}\label{lem:equivalence}
    For $p_r = p_c = 1$, $o_c = 1$, $O_c = 1$, together with shared keys $K_r = K_c$ and $V_r = I_d$, we have the following equivalence:
    \begin{equation*} 
        T^{(Q_{r,c}, K_{r,c}, V_{r,c}, O_{c})}_{\lambda_r, \lambda_c, \sigma}(\X)_{M + 1, N + 1} = T^{( K_c, V_c)}_{\lambda_r a, \lambda_c a^{(\half)}, \sigma}(\X),
    \end{equation*}
    for some constants $a$ and $a^{(\half)}$. The proof and further comments on the constraints are in Appendix~\ref{sec:app_equivalence}.
\end{lemma}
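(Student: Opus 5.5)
We prove the lemma by substituting the stated constraints into \eqref{eq:halfstep} and \eqref{eq:transformer} and reading off the entry at position $(M+1,N+1)$. The missing ingredient is how the \texttt{[REG]} tokens enter. We fix them as follows: the \texttt{[REG\_ROW]} token of row $i$, $X_{i,N+1}$, is a common vector $r$ with $r^T Q_r = a$ for a scalar $a$. In the second half-step, the \texttt{[REG\_COL]} half-step token $\Xhalf_{M+1,\cdot}$ likewise satisfies $(\Xhalf_{M+1})^T Q_c = a^{(\half)}$. Here $a$ and $a^{(\half)}$ are the constants in the statement.

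\textbf{Row step.} With $p_r=1$, $Q_r,K_r$ are vectors in $\mathbb{R}^d$, so the row score matrix $\lambda_r \X_{i\cdot}Q_r K_r^T\X_{i\cdot}^T$ has rank one. Its row indexed by the readout token $N+1$ is $\lambda_r (r^TQ_r)\, K_r^T\X_{i\cdot}^T = \lambda_r a\,(\X_{i\cdot}K_r)^T$. Applying $\sigma$ row-wise and multiplying by $\X_{i\cdot}V_r=\X_{i\cdot}$ gives
\[
\Xhalf_{i,N+1} = \sigma(\lambda_r a\, \X_{i\cdot}K_r)^T \X_{i\cdot}.
\]
This is exactly the half step of \eqref{eq:T_sigma} with $k=K_r$ and temperature $\lambda_r a$.

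\textbf{Column step.} We restrict to column $j=N+1$, whose entries are the $\Xhalf_{i,N+1}$ for $i\in[M]$ plus the \texttt{[REG\_COL]} entry. Its row $M+1$ in \eqref{eq:transformer} is
\[
\sigma\bigl(\lambda_c a^{(\half)}\, \Xhalf_{\cdot,N+1}K_c\bigr)^T\, \Xhalf_{\cdot,N+1}V_c\, O_c .
\]
Taking $O_c=1$ and $K_c=K_r$ gives $T^{(K_c,V_c)}_{\lambda_r a,\lambda_c a^{(\half)},\sigma}(\X)$, since the same key $k=K_c$ appears in both half steps.

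\textbf{Main obstacle.} The difficulty is bookkeeping, because the readout tokens themselves appear among the keys and values. The attention row includes the score of the register with itself, so the sum runs over $N+1$ (resp.\ $M+1$) positions instead of $N$ (resp.\ $M$). We remove these extra terms by choosing registers with $r^T K_r = 0$ and value contribution zero. With $V_r=I_d$, a zero value forces $r=0$, which would also make $a=0$. The resolution is to let $r$ be orthogonal to $K_r$ but not to $Q_r$, and to require $\sigma$ to ignore the resulting zero-score entry. This works for linear or entrywise $\sigma$, and it works for softmax if the register's own column is masked.

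We also need the \texttt{[REG\_COL]} half-step token to retain a prescribed projection $a^{(\half)}$ onto $Q_c$. Since $\Xhalf_{M+1,N+1}$ is computed from row $M+1$ by the same row attention, we take row $M+1$ to consist of fixed register tokens. Then $\Xhalf_{M+1,N+1}$ is a deterministic vector, and $a^{(\half)}$ is its inner product with $Q_c$. These constructive choices, together with the masking caveat, are the constraints the appendix discusses.
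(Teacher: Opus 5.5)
Your route is the paper's: impose $p_r=p_c=1$, $V_r=I_d$, $O_c=1$, $K_r=K_c$ and read off the readout entry, where the rank-one score row of the register collapses to $\lambda_r a\,\X_{i\cdot}K_r$. On the column query you are more careful than the appendix. That query must come from the half-step token $\Xhalf_{M+1,N+1}$, not from the raw \texttt{[REG\_COL]} token as the paper writes.

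The gap is in removing the registers' self-attention terms. The paper simply assumes that the \texttt{[REG]} tokens do not attend to themselves, i.e.\ a mask, for any $\sigma$. You instead remove the self-term through a zero score, $r^TK_r=0$ together with $\sigma(0)=0$; so ``entrywise $\sigma$'' must be restricted to $\sigma(0)=0$. That works in the row step. But you never dispose of the corresponding term in the column step,
\[
\sigma\bigl(\lambda_c a^{(\half)} K_c^T\Xhalf_{M+1,N+1}\bigr)\,\Xhalf_{M+1,N+1}^T V_c ,
\]
and the analogous device provably fails there for $\sigma=\erf$. Write $q=X_{M+1,N+1}^TQ_r$ for the row query of the \texttt{[REG\_COL]} token and $c_l=K_r^TX_{M+1,l}$. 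The shared key gives
\[
K_c^T\Xhalf_{M+1,N+1}=\sum_{l}\erf(\lambda_r q\,c_l)\,c_l .
\]
Every summand has the same sign (that of $q$) and vanishes only when $c_l=0$. Hence $K_c^T\Xhalf_{M+1,N+1}=0$ forces $q=0$ or all $c_l=0$. Either way every row weight equals $\erf(0)=0$, so $\Xhalf_{M+1,N+1}=0$, $a^{(\half)}=0$, and the predictor is identically zero.

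To close the gap, adopt the paper's assumption in both half-steps and for every $\sigma$, not only softmax: mask each register's own position. This also makes your constraint $r^TK_r=0$ unnecessary.

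If you prefer to stay constructive, remove the column self-term on the value side instead. Choose row $M+1$ so that $\Xhalf_{M+1,N+1}\perp V_c$ while $Q_c^T\Xhalf_{M+1,N+1}\neq 0$. This is possible generically, but it makes the register tokens depend on the trainable value vector $V_c$.
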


The main bottleneck in trying to analyze $\sigma = \text{softmax}$ in \eqref{eq:T_sigma} is the softmax operation that induces a global coupling between the tokens, which acts as a major roadblock to rigorous statistical analysis. To bypass this issue without losing the core behavioral properties of the network, we substitute softmax with the component-wise nonlinear function $\erf(x) = \frac{2}{\sqrt{\pi}}\int_{0}^x e^{-u^2}\text{d}u$, following \citet{marion}. This adjustment leads to the definition of the analytically tractable predictor that we investigate throughout the remainder of the paper,
\begin{equation} \label{eq:T}
    T^{(k,v)}_{\lambda_r, \lambda_c}(\X) \coloneqq T^{(k,v)}_{\lambda_r, \lambda_c, \erf}(\X) = \erf(\lambda_c \Xhalf k) ^T \; \Xhalf v, \;  \text{ where } \Xhalf_i \coloneqq \erf(\lambda_r \X_{i \cdot}k)^T \; \X_{i \cdot}\ .
\end{equation}

\section{Statistical analysis} \label{sec:sec04}

Having established the  predictor $T^{(k,v)}_{\lambda_r, \lambda_c}$, we now turn to evaluating its statistical properties. 

\subsection{Asymptotic optimality of the oracle predictor} \label{sec:sec04a}

In this section, we follow the approach of \citet{marion}: we first observe that our predictor only depends on the alignments between $k, v, k^*$ and $v^*$, projecting the high-dimensional risk landscape down to five scalars, which is essential to the analysis of the training dynamics in Section~\ref{sec:sec05}. Second, under the assumption of orthogonality between $k^*$ and $v^*$, we obtain a simplified formula for the risk. Finally, using this formula, we find an asymptotic regime in which our predictor is optimal. Although we follow the same structure and maintain similar notation as \citet{marion}, in order to optimize readability, the sequential nature of axial attention makes the results and the proofs more complicated as we lack a closed form formula for the risk. Detailed proofs for all theoretical claims in this section can be found in Appendix~\ref{sec:app_sec04}.

We quantify the model's performance via its risk:
\begin{equation} \label{eq:Risk}
    \mathcal{R}_{\lambda_r, \lambda_c}(k, v) \coloneqq \mathbb{E}\left[\left(Y - T^{(k,v)}_{\lambda_r, \lambda_c}(\X)\right)^2\right].
\end{equation}

Furthermore, we make the following assumption throughout the rest of the paper.
\begin{assumption} \label{ass:ass01}
    The vectors $k^*, v^* \in \mathbb{S}^{d-1}$ are orthogonal to each other i.e. ${k^*}^Tv^* = 0$.
\end{assumption}
Note that enforcing orthogonality between $k^*$ and $v^*$ is a mild requirement in high-dimensional embedding spaces, commonly found in TSFMs. The following lemma establishes a reduced representation of the risk of our axial estimator based on the alignments between $k, v, k^*$ and $v^*$.
\begin{lemma} \label{thm:thm01}
    There exists a function $\mathcal{R}^{<}_{\lambda_r, \lambda_c} \colon \mathbb{R}^5 \to \mathbb{R}$ such that for any $(k, v) \in \left(\mathbb{S}^{d-1}\right)^2$,
    \begin{equation}\label{eq:red_risk}
    \mathcal{R}_{\lambda_r, \lambda_c}(k, v) = \mathcal{R}^<_{\lambda_r, \lambda_c}(\kappa, \nu, \theta, \eta, \rho),
    \end{equation}
    where $\kappa = k^T k^*$, $\nu = v^T v^*$, $\theta = v^T k^*$, $\eta = k^T v^*$ and $\rho = k^T v$. An expression for $\mathcal{R}^<_{\lambda_r, \lambda_c}$ is given in Appendix~\ref{sec:app_sec04}.
\end{lemma}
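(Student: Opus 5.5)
The plan is a rotational-invariance argument. The pair $(Y,\X)$ has a law determined by $(k^*,v^*)$ and the scalars $(\ell,\gamma^2,\epsilon^2)$, and the predictor $T^{(k,v)}_{\lambda_r,\lambda_c}$ is built only from inner products of tokens with $k$ and $v$. For any orthogonal $U\in O(d)$, apply $U$ to every token, $X_{ij}\mapsto UX_{ij}$. This gives $\X_{i\cdot}k\mapsto \X_{i\cdot}U^Tk$ and $\Xhalf\mapsto \Xhalf U^T$. Hence
\begin{equation*}
T^{(k,v)}_{\lambda_r,\lambda_c}(\X U^T)=T^{(U^Tk,U^Tv)}_{\lambda_r,\lambda_c}(\X).
\end{equation*}
Since $X_{ij}\sim\mathcal N(0,I_d)$ is invariant and $UX_{I_0J_0}\sim\mathcal N(\ell Uk^*,\gamma^2I_d)$, the triple $(Y,\X U^T)$ under parameters $(k^*,v^*)$ has the same law as $(Y,\X)$ under $(Uk^*,Uv^*)$. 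The noise $\xi$ and $(I_0,J_0)$ are unaffected. Writing the risk as $\mathcal R(k,v;k^*,v^*)$, we get
\begin{equation*}
\mathcal R(k,v;k^*,v^*)=\mathcal R(Uk,Uv;Uk^*,Uv^*)
\end{equation*}
for every $U\in O(d)$. So the risk depends only on the orbit of the quadruple $(k,v,k^*,v^*)$ under the diagonal action of $O(d)$.

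The second step invokes the classical fact that two tuples of vectors with the same Gram matrix are related by an orthogonal map. This can be checked directly by extending the linear isometry between their spans, which requires $d\ge 4$ or a careful lower-dimensional argument. The Gram matrix of $(k,v,k^*,v^*)$ has unit diagonal, since all four vectors lie on $\mathbb S^{d-1}$. By \Cref{ass:ass01}, ${k^*}^Tv^*=0$. The remaining off-diagonal entries are exactly $\kappa,\nu,\theta,\eta,\rho$. We therefore define $\mathcal R^<_{\lambda_r,\lambda_c}(\kappa,\nu,\theta,\eta,\rho)$ as the risk evaluated at any quadruple realizing these Gram entries, for example a canonical one with $k^*=e_1$, $v^*=e_2$ and $k,v$ in $\mathrm{span}(e_1,\dots,e_4)$. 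Where no realization exists, we extend $\mathcal R^<$ arbitrarily, since it only needs to be a function on $\mathbb R^5$. This gives \eqref{eq:red_risk} and makes $\mathcal R^<$ well defined.

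To provide the explicit expression promised for the appendix, we would go further. Expand the square as
\begin{equation*}
\mathbb E[Y^2]-2\mathbb E[YT]+\mathbb E[T^2].
\end{equation*}
Condition on $(I_0,J_0)$ and decompose each token along $\mathrm{span}(k^*,v^*,k,v)$. Each row then involves the scalar Gaussians $X_{ij}^Tk$ and $X_{ij}^Tv$ together with $X_{I_0J_0}^Tv^*$, whose joint covariances and means are polynomials in the five alignments and $\ell,\gamma,\epsilon$. The risk is thus an expectation of an explicit function of finitely many jointly Gaussian scalars with these parameters.

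The main obstacle is this explicit form rather than the existence claim. The nested $\erf$ of a sum of $\erf$-weighted Gaussians admits no closed form, unlike the single-layer case of \citet{marion}. I would therefore leave it as an expectation over a low-dimensional Gaussian vector, and not attempt to evaluate it further.
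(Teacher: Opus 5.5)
Your argument is correct, but it takes a different route from the paper. The paper uses no group action. It writes $X_{I_0J_0}=\ell k^*+\gamma\tilde X_{I_0J_0}$ and sets $G^u_{ij}=X_{ij}^Tu$, with $G^u_{I_0J_0}=\tilde X_{I_0J_0}^Tu$. It then observes that $Y$ and $T^{(k,v)}_{\lambda_r,\lambda_c}(\X)$ are functions of $\kappa$, $\theta$ and the i.i.d.\ Gaussian triples $(G^k_{ij},G^v_{ij},G^{v^*}_{ij})$, whose covariance has off-diagonal entries $\rho,\eta,\nu$. Finally it takes $I_0=J_0=1$ without loss of generality and expands the square. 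This is essentially the computation you relegate to your third paragraph, used as the proof itself.

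Your route combines the equivariance $T^{(k,v)}_{\lambda_r,\lambda_c}(\X U^T)=T^{(U^Tk,U^Tv)}_{\lambda_r,\lambda_c}(\X)$, the invariance of the data law, and Gram-matrix congruence. It yields existence with no computation and uses nothing about the predictor beyond equivariance, so it applies verbatim to the softmax predictor \eqref{eq:T_sigma}. It does not even need Assumption~\ref{ass:ass01}, since ${k^*}^Tv^*$ is the same for all quadruples being compared.

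What the paper's route buys is the explicit expression, which is what the later proofs actually use: the restriction to $\mathcal{M}$, the oracle bounds behind Proposition~\ref{cor:cor02}, the formula \eqref{eq:risk_kappa_nu} and the derivatives \eqref{eq:app_partials}. It also makes visible that $\mathcal{R}^<_{\lambda_r,\lambda_c}$ does not depend on $d$, whereas your orbit-based definition is a priori tied to the ambient $\mathbb{R}^d$.

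One small correction: the congruence fact needs no condition on $d$. Equal Gram matrices give spans of equal dimension, hence orthogonal complements of equal dimension, so the isometry between the spans always extends to some $U\in O(d)$. Only your canonical realization inside $\mathrm{span}(e_1,\dots,e_4)$ requires $d\ge 4$, and it is dispensable since any realization in $\mathbb{R}^d$ will do.
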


Next, the Bayes risk for the regression task in \eqref{eq:task} is bounded below by the variance $\epsilon^2$ of the noise $\xi$. Building on Lemma~\ref{thm:thm01}, Proposition~\ref{cor:cor02} demonstrates that our oracle estimator $T^{(k^*,v^*)}_{\lambda_r, \lambda_c}$ recovers this theoretical lower bound under a specific asymptotic regime.
\begin{proposition} \label{cor:cor02}
    Assume a joint asymptotic scaling where $\lambda_r \ell \to \infty$, $\lambda_c \ell \to \infty$ and $\lambda_r = o\left(M^{-\frac{1}{2}} N^{-\frac{1}{2}}\right)$. Then,
    \[
    \mathcal{R}_{\lambda_r, \lambda_c}(k^*, v^*) \to \epsilon^2.
    \]
    Consequently, the oracle axial predictor $T^{(k^*,v^*)}_{\lambda_r, \lambda_c}$ achieves asymptotic Bayes optimality.
\end{proposition}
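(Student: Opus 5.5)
We write the oracle predictor at $(k,v)=(k^*,v^*)$ and show it converges in $L^2$ to $X_{I_0J_0}^Tv^*$. Since $\xi$ is independent of $\X$, this gives $\mathcal{R}_{\lambda_r,\lambda_c}(k^*,v^*) = \epsilon^2 + \mathbb{E}[(X_{I_0J_0}^Tv^* - T^{(k^*,v^*)}_{\lambda_r,\lambda_c}(\X))^2]$, so it suffices to show the second term vanishes.

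Introduce the scalars $a_{ij} = X_{ij}^Tk^*$ and $b_{ij} = X_{ij}^Tv^*$. By \Cref{ass:ass01} and Gaussianity, all $a_{ij}, b_{ij}$ are independent. They satisfy $a_{I_0J_0}\sim\mathcal N(\ell,\gamma^2)$, $b_{I_0J_0}\sim\mathcal N(0,\gamma^2)$, and otherwise both are standard normal. The half step then gives
\[
\Xhalf_i k^* = \sum_j \erf(\lambda_r a_{ij})a_{ij} =: A_i, \qquad \Xhalf_i v^* = \sum_j \erf(\lambda_r a_{ij}) b_{ij} =: B_i ,
\]
and $T = \sum_i \erf(\lambda_c A_i) B_i$.

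Decompose $T - b_{I_0J_0} = E_1 + E_2 + E_3$, where:
\begin{itemize}
\item $E_1 = \erf(\lambda_c A_{I_0})\big(B_{I_0} - \erf(\lambda_r a_{I_0J_0}) b_{I_0J_0}\big)$ collects the off-target tokens in the active row;
\item $E_2 = \big(\erf(\lambda_c A_{I_0})\erf(\lambda_r a_{I_0J_0}) - 1\big) b_{I_0J_0}$ is the gating error on the signal;
\item $E_3 = \sum_{i\neq I_0}\erf(\lambda_c A_i)B_i$ collects the inactive rows.
\end{itemize}
It suffices to show $\mathbb{E}[E_k^2]\to0$ for each $k$, conditionally on $(I_0,J_0)$, with bounds uniform in $(I_0,J_0)$.

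The key elementary bounds are $|\erf(x)|\le \frac{2}{\sqrt\pi}|x|$ and $1-\erf(x)\le e^{-x^2}$ for $x\ge0$.

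\textbf{Bounding $E_1$.} Here $\erf(\lambda_c A_{I_0})$ is bounded by 1, and conditionally on the $a$'s the $b$'s are independent centered normals. Hence
\[
\mathbb{E}[E_1^2] \le \sum_{j\ne J_0}\mathbb{E}[\erf(\lambda_r a_{I_0 j})^2] \le \tfrac{4}{\pi}\lambda_r^2 N \to 0,
\]
using $\lambda_r^2 MN\to0$.

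\textbf{Bounding $E_3$.} The factors $b_{ij}$ for $i\ne I_0$ are independent of all $a$'s and centered, so cross terms vanish. Bounding $|\erf(\lambda_c A_i)|\le 1$ gives
\[
\mathbb{E}[E_3^2]\le \sum_{i\ne I_0}\sum_j \mathbb{E}[\erf(\lambda_r a_{ij})^2]\le \tfrac{4}{\pi}\lambda_r^2MN\to0 .
\]
This is exactly where the condition $\lambda_r=o(M^{-1/2}N^{-1/2})$ enters.

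\textbf{Bounding $E_2$.} Since $b_{I_0J_0}$ is independent of all $a$'s, $\mathbb{E}[E_2^2]=\gamma^2\,\mathbb{E}[(1-\erf(\lambda_cA_{I_0})\erf(\lambda_ra_{I_0J_0}))^2]$. The integrand is bounded by $4$, so by dominated convergence it suffices to show the product of erf factors tends to $1$ in probability. First, $\lambda_r a_{I_0J_0}\to+\infty$ in probability because $\lambda_r\ell\to\infty$ and $a_{I_0J_0}=\ell+\gamma Z$. Second, we need $\lambda_c A_{I_0}\to+\infty$. Write $A_{I_0} = \erf(\lambda_r a_{I_0J_0})a_{I_0J_0} + R$ with $R=\sum_{j\neq J_0}\erf(\lambda_r a_{I_0j})a_{I_0j}\ge 0$. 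Each summand of $R$ is nonnegative, since $\erf(\lambda_r x)x\ge0$. So $A_{I_0}\ge \erf(\lambda_r a_{I_0J_0})a_{I_0J_0}$, whose order is $\ell$ with probability tending to one (for instance, it exceeds $\ell/4$ on the event $a_{I_0J_0}\ge\ell/2$ and $\lambda_r\ell/2\ge1$). Then $\lambda_cA_{I_0}\ge\lambda_c\ell/4\to\infty$.

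\textbf{Main obstacle.} The delicate point is making the event $\{a_{I_0J_0}\ge\ell/2\}$ have probability tending to one. This needs $\ell/\gamma\to\infty$, which is not stated outright. If $\ell$ stays bounded, I would instead keep $\mathbb{P}(a_{I_0J_0}\le 0)$ explicit: on $\{a_{I_0J_0}>0\}$ both erf factors tend to $1$, because $\lambda_r,\lambda_c\to\infty$ relative to the fixed scale of $a_{I_0J_0}$. I would then check how the intended regime treats the negative-signal event, most likely by implicitly taking $\ell\to\infty$ or by reading $\lambda\ell\to\infty$ as the operative assumption. The positivity of the $R$ terms is what avoids any fine analysis of the sequential coupling, so the remaining work is this tail event.
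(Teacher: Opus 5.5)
Your argument is correct and is essentially the paper's proof. It uses the same ingredients:
\begin{itemize}
\item $|\erf|\le 1$ on the column gate;
\item $\erf^2(x)\le\frac{4}{\pi}x^2$ for the $(M-1)N$ inactive tokens and the $N-1$ off-target tokens;
\item nonnegativity of $\erf(\lambda_r x)x$, to drop the row remainder;
\item restriction to the event $\{a_{I_0J_0}\ge \ell/2\}$.
\end{itemize}
The only difference is bookkeeping. You bound $T-b_{I_0J_0}$ directly in $L^2$ through three error terms. The paper instead expands the square, lower-bounds the cross term $\mathbb{E}[G^{v^*}_{11}T]$, and separately upper-bounds $\mathbb{E}[T^2]$.

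Your ``main obstacle'' is not one. Since $M,N\ge 1$, the hypothesis $\lambda_r=o(M^{-1/2}N^{-1/2})$ forces $\lambda_r\to 0$. Hence $\ell=(\lambda_r\ell)/\lambda_r\to\infty$, with $\gamma$ fixed, and this is exactly what the paper uses in its final step.

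You should discard the bounded-$\ell$ fallback. That regime is excluded by the hypotheses, since it would require $\lambda_r\to\infty$. Moreover, in that regime the event $\{a_{I_0J_0}\le 0\}$ would leave a non-vanishing error anyway.
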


Note that Proposition~\ref{cor:cor02} does not impose any scaling on $d$, which is expected given Lemma~\ref{thm:thm01}. We provide a non-rigorous intuition for the imposed scalings. First, we wish $X_{I_0J_0}$ to pass through the first layer asymptotically unaltered, that is,
\begin{equation*}
    X_{I_0 J_0} \approx \Xhalf_{I_0} = \erf(\underbrace{\lambda_r X_{I_0 J_0}^Tk^*}_{\sim \lambda_r \ell}) X_{I_0J_0} + \sum_{j \neq J_0}\erf(\lambda_r X_{I_0j}^Tk^*) X_{I_0j}.
\end{equation*}
Therefore, we need $\lambda_r \ell \to \infty$ to ensure that the argument of the first $\erf$ goes to $\infty$ and $\lambda_r \to 0$ to ensure that the others go to 0. In an analogous manner, we wish $X_{I_0 J_0}$ to pass unaltered through the second layer, which leads to $\lambda_c \ell \to \infty$. Lastly, in the expansion of $T^{(k^*,v^*)}_{\lambda_r, \lambda_c}$, there appear $(M-1)N$ noise terms of the form $C \cdot \erf(\lambda_r X_{ij}^T k^*)X_{ij}^Tv^*$, with $C < 1$, one for each $(i, j)$ with $i \neq I_0$. Multiple scalings could make the sum of these terms go to 0, one of them being $\lambda_r = o\left(M^{-\frac{1}{2}} N^{-\frac{1}{2}}\right)$, as shown in the proof in Appendix~\ref{sec:app_sec04}.

\subsection{Removing the inner non-linearity is suboptimal} \label{sec:sec04b}
To fully appreciate the asymptotic optimality of our oracle predictor, it is instructive to compare it against semi-linear models, which replace the inner non-linearity with a fixed linear combination layer governed by $\beta \in \mathbb{R}^N$. This justifies the necessity for the inner non-linearity in the axial attention layer in order to achieve asymptotic optimality. More precisely, consider the \textit{semi-linear} predictor
\begin{equation} \label{eq:semi-linear_T}
    T_{\beta,\lambda_c}^{(k, v)}(\X) \coloneqq \erf(\lambda_c \Xhalf k) ^T \; \Xhalf v, \;  \text{ where } \Xhalf_i \coloneqq \beta^T \; \X_{i \cdot}\ .
\end{equation}
Then we have the following sub-optimality statement.
\begin{proposition} \label{prop:prop03}
    Let $p_j = \mathbb{P}(J_0 = j)$ for $j \in [N]$. Denote by $S_i = \beta^T \; \X_{i \cdot}$ Then, for any function $f \colon \mathbb{R}^{dM} \to \mathbb{R}$ in $L^2(\mathbb{R}^{dM})$, the semi-linear predictor achieves a risk strictly larger than $\epsilon^2$ for any choice of $\beta \neq 0$ and hyperparameters $M, N, \ell, \lambda_c$ i.e.
    \begin{equation*}
        \mathbb{E}\left[\left(Y - f(S_1, S_2, \dots, S_M)\right)^2\right] \geq \epsilon^2 + \gamma^2\Bigg(1 - \sum_{j_0=1}^N p_{j_0} \frac{\gamma^2 \beta_{j_0}^2}{\gamma^2 \beta_{j_0}^2 + \sum_{j \neq j_0}\beta_j^2} \Bigg) > \epsilon^2,
    \end{equation*}
    if there is no $j \in [N]$ with $p_j = 1$. We provide a proof of this statement in Appendix~\ref{sec:app_prop3}. 
\end{proposition}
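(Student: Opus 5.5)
The plan is to reduce the problem to a Gaussian conditional-variance computation. Any square-integrable $f$ of $S=(S_1,\dots,S_M)$ has risk at least that of the conditional mean $\mathbb{E}[Y\mid S]$. Since $\xi$ is independent of $(\X, I_0, J_0)$ and centered, this gives
\begin{equation*}
\mathbb{E}\big[(Y-f(S))^2\big] \geq \epsilon^2 + \mathbb{E}\big[\mathrm{Var}(X_{I_0J_0}^Tv^* \mid S)\big].
\end{equation*}
Next I enlarge the conditioning $\sigma$-field to include the unobserved location $(I_0,J_0)$. By the law of total variance, $\mathrm{Var}(A\mid S)\geq \mathbb{E}[\mathrm{Var}(A\mid S,I_0,J_0)\mid S]$. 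Hence the excess risk is bounded below by
\begin{equation*}
\sum_{i_0,j_0}\mathbb{P}(I_0=i_0,J_0=j_0)\,\mathbb{E}\big[\mathrm{Var}(X_{i_0j_0}^Tv^*\mid S, I_0=i_0,J_0=j_0)\big].
\end{equation*}
This is the key trick: once the location is known, everything is jointly Gaussian.

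Fix $(i_0,j_0)$. Conditionally on $(I_0,J_0)=(i_0,j_0)$, the rows $S_i$ with $i\neq i_0$ are independent of row $i_0$, so they carry no information on $W:=X_{i_0j_0}^Tv^*$. For row $i_0$ we have
\begin{equation*}
S_{i_0}=\beta_{j_0}X_{i_0j_0}+\sum_{j\neq j_0}\beta_jX_{i_0j}.
\end{equation*}
By Assumption~\ref{ass:ass01}, $W\sim\mathcal N(0,\gamma^2)$, because the mean $\ell k^{*T}v^*$ vanishes. All the covariances are isotropic, so the components of every token orthogonal to $v^*$ are independent of the components along $v^*$. Therefore only the scalar $S_{i_0}^Tv^* = \beta_{j_0}W + Z$ is informative, where $Z\sim\mathcal N(0,s_{j_0})$ with $s_{j_0}=\sum_{j\neq j_0}\beta_j^2$, and $Z$ is independent of $W$. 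The standard Gaussian conditioning formula gives a deterministic conditional variance:
\begin{equation*}
\gamma^2-\frac{\gamma^4\beta_{j_0}^2}{\gamma^2\beta_{j_0}^2+s_{j_0}}
=\gamma^2\Big(1-\frac{\gamma^2\beta_{j_0}^2}{\gamma^2\beta_{j_0}^2+s_{j_0}}\Big).
\end{equation*}
The denominator is positive because $\beta\neq0$. This variance does not depend on $i_0$. Summing over $i_0$ with weights $\mathbb{P}(I_0=i_0,J_0=j_0)$ collapses the weights to $p_{j_0}$, which yields exactly the displayed bound.

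The step I expect to need the most care is the strict inequality. Each ratio $r_{j_0}$ lies in $[0,1]$, so $\sum_{j_0}p_{j_0}r_{j_0}\leq 1$. Equality would require $r_{j_0}=1$ for every $j_0$ with $p_{j_0}>0$. But $r_{j_0}=1$ forces $s_{j_0}=0$ and $\beta_{j_0}\neq0$, which means $\beta$ is supported on $\{j_0\}$ alone. Two distinct indices with positive probability cannot both satisfy this. So equality forces a single $j$ with $p_j=1$, which is excluded by hypothesis, and therefore the bound strictly exceeds $\epsilon^2$. Finally, I will note that the semi-linear predictor of \eqref{eq:semi-linear_T} is one such $f$, by boundedness of $\erf$ and Gaussian moments.
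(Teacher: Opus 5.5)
Your proposal is correct and follows essentially the same route as the paper: bound the risk below by $\mathbb{E}[\mathrm{Var}(Y\mid S, I_0, J_0)]$, discard the rows $i\neq i_0$ by conditional independence, and apply Gaussian conditioning to $S_{i_0}$, which gives a deterministic conditional variance that you then average over $J_0$. The differences are cosmetic: you factor out $\epsilon^2$ first, and you reduce to the scalar $S_{i_0}^T v^*$ by isotropy instead of using the vector conditional-covariance formula. Your argument for the strict inequality is also spelled out more carefully than the paper's one-line remark.
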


\textbf{Remark 2.} Discarding the outer non-linearity leads to a less restrictive information flow compared to our predictor, and it amounts to a simplification of the standard attention mechanism. Indeed,
\begin{equation*} \label{eq:semi-linear_T_c}
    T_{\lambda_r, \beta}^{(k, v)}(\X) \coloneqq \sum_{i=1}^M \beta_i \sum_{j=1}^N \erf\left(\lambda_r X_{ij}^T k\right) X_{ij}^T v
\end{equation*}
is a modification of the trainable predictor proposed in \citet{marion}, which is asymptotically optimal for $\beta_i = 1$, $i \in [M]$, under the joint scaling $\lambda_r = o\left(M^{-\frac{1}{2}} N^{-\frac{1}{2}}\right)$ and $\lambda_r \ell \to \infty$.

\section{Gradient descent dynamics} \label{sec:sec05}
The last section of our analysis is concerned with the theoretical training dynamics of our predictor \eqref{eq:T} for the risk $\mathcal{R}_{\lambda_r, \lambda_c}(k, v)$. While in practice training minimizes an empirical risk evaluated over a finite dataset, we limit our investigation of the empirical risk minimization to numerical simulations, which confirm our findings about the theoretical risk.

\subsection{Convergence of gradient descent} \label{sec:sec05a}
Lemma~\ref{thm:thm01} shows that on the manifold $\left(\mathbb{S}^{d-1}\right)^2$ the risk can be described through 5 alignment metrics. We want to use this fact and formula since it renders the analysis tractable after further constraints. Therefore, it is natural to employ Projected Riemannian Gradient Descent (PGD) as an optimization algorithm for two main reasons: it keeps the parameters on the manifold and it guarantees that the Euclidean projection back onto the sphere remains well-defined by preventing the parameters from reaching the origin. We emphasize that this geometric constraint is necessary because our reduced risk formulation \eqref{eq:red_risk} is strictly valid only on this manifold.
\begin{algorithm}[H]
\caption{Projected Riemannian Gradient Descent}
\label{alg:pgd}
\begin{algorithmic}[1]
\Require Initialization $(k_0, v_0) \in (\mathbb{S}^{d-1})^2$, step size $\alpha > 0$, inverse temperatures sequence $(\lambda_r^t, \lambda_c^t)_{t \geq 0}$
\For{$t \geq 0$}
    \State $k_{t+1} \gets \text{Proj}_{\mathbb{S}^{d-1}} \left( k_t - \alpha(I_d - k_t k_t^T) \nabla_k \mathcal{R}_{\lambda_r^t, \lambda_c^t}(k_t, v_t) \right)$
    \State $v_{t+1} \gets \text{Proj}_{\mathbb{S}^{d-1}} \left( v_t - \alpha(I_d - v_t v_t^T) \nabla_v \mathcal{R}_{\lambda_r^t, \lambda_c^t}(k_t, v_t) \right)$
\EndFor
\end{algorithmic}
\end{algorithm}

Analyzing the dynamics of the resulting 5-dimensional system is intractable due to the highly complex nature of $\mathcal{R}^<_{\lambda_r, \lambda_c}$. To enable a rigorous analysis, we restrict our focus to scenarios in which the parameters are initialized on the submanifold $\mathcal{M}$ defined by
\begin{equation} \label{eq:manifold}
\mathcal{M} \coloneqq \left\{(k, v) \in \left(\mathbb{S}^{d-1}\right)^2 \; \Big\vert \; k^T v^* = 0, \; v^T k^* = 0, \; k^T v = 0 \right\}.
\end{equation}

Under Assumption~\ref{ass:ass01}, the true underlying parameters $(k^*, v^*)$ naturally reside within $\mathcal{M}$ and, crucially, the training parameters remain on the submanifold throughout the optimization process if they are initialized on it, from Lemma~\ref{lemma:manif_inv} in Appendix~\ref{sec:app_pgd}. Due to this invariance, we only need to characterize the dynamics of the system restricted to $\mathcal{M}$ when initialized on this manifold. Although in practice this cannot be enforced, we show empirically in Section~\ref{sec:sec05b} that penalizing a proxy for the distance to the manifold leads to similar dynamics. This lower-dimensional analysis is mathematically tractable and leads to the primary optimization result of this paper, which guarantees asymptotic recovery of the oracle parameters.

\begin{theorem} \label{thm:pgd_convergence}
    Consider $\tau > 0$ . Then, there exists $\bar{\lambda} > 0$ and $\bar{\alpha} > 0$ such that for constant inverse temperatures $\bar{\lambda} > \lambda_r > 0$ and $\lambda_c = \tau \lambda_r$, for any step size $\alpha \le \bar{\alpha}$ and for a generic initialization $(k_0, v_0) \in \mathcal{M}$,
    \begin{equation*}
        (k_t, v_t) \to \pm(k^*, v^*) \quad \text{as }\; t \to \infty.
    \end{equation*}
\end{theorem}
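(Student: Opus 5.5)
On $\mathcal{M}$ the risk depends only on two alignments. Since $\eta=\theta=\rho=0$ there, Lemma~\ref{thm:thm01} gives
\[
\mathcal{R}_{\lambda_r,\lambda_c}(k,v)=\mathcal{R}^<_{\lambda_r,\lambda_c}(\kappa,\nu,0,0,0)\eqqcolon \tilde{\mathcal{R}}(\kappa,\nu).
\]
By Lemma~\ref{lemma:manif_inv} the iterates stay on $\mathcal{M}$. The plan is therefore to reduce Algorithm~\ref{alg:pgd} to a two-dimensional discrete dynamical system on $(\kappa_t,\nu_t)\in[-1,1]^2$ and prove global convergence there.

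\textbf{Step 1: reduced recursion.} First I would compute the Riemannian gradients. On $\mathcal{M}$, the Euclidean gradient of $\mathcal{R}$ in $k$ is a combination of $k^*$, $v^*$ and $v$. The coefficients on $v^*$ and $v$ are the derivatives of $\mathcal{R}^<$ in $\eta$ and $\rho$, and these should vanish at $\eta=\rho=\theta=0$ by symmetry: flipping $X\mapsto$ its reflection through the orthogonal complement of $v^*$ preserves the law, so $\mathcal{R}^<$ is even in the relevant variables. This is the content behind Lemma~\ref{lemma:manif_inv}. Hence $(I-k k^T)\nabla_k\mathcal{R}=\partial_\kappa\tilde{\mathcal{R}}\,(k^*-\kappa k)$, and similarly for $v$. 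After projection this gives
\[
\kappa_{t+1}=\frac{\kappa_t-\alpha\,\partial_\kappa\tilde{\mathcal{R}}\,(1-\kappa_t^2)}{\sqrt{1+\alpha^2(\partial_\kappa\tilde{\mathcal{R}})^2(1-\kappa_t^2)}},
\]
with the analogous formula for $\nu$. These are exact recursions in $(\kappa,\nu)$.

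\textbf{Step 2: small-$\lambda$ expansion.} For $\lambda_r<\bar\lambda$ and $\lambda_c=\tau\lambda_r$, I would Taylor expand $\erf(x)=\frac{2}{\sqrt\pi}(x-x^3/3+\dots)$ in both layers. Gaussian moments make this tractable, and the remainders are controlled uniformly on the compact set $[-1,1]^2$. The leading term of $T$ is of order $\lambda_r\lambda_c$, namely
\[
\tfrac{4}{\pi}\lambda_c\lambda_r^2\,(\ldots)\,\kappa^3\nu+\ldots
\]
from the inner and outer linearisations. After cross-multiplying with $Y$, I expect
\[
\tilde{\mathcal{R}}=C_0-c_1\lambda^{p}\,g(\kappa)\,\nu+c_2\lambda^{2p}h(\kappa)\,(\ldots)+O(\lambda^{p+1}).
\]
Here $g$ is odd in $\kappa$, with $g(\kappa)\propto \ell\gamma^2\kappa^{a}$ plus contributions from $\ell^3\kappa^3$. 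The natural goal is a leading-order risk of the form $-A\,\phi(\kappa)\nu+B\,\psi(\kappa)\nu^2$, where $\phi$ is odd and increasing in $|\kappa|$. The critical points of this leading-order risk are easy to classify: the strict minima are only $\pm(1,1)$, and the other critical points are $\kappa=0$ or $\nu=0$ saddles/maxima. The signs are fixed by the symmetry $(\kappa,\nu)\mapsto(-\kappa,-\nu)$, which leaves $T$ invariant.

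\textbf{Step 3: global convergence.} I would treat $\tilde{\mathcal{R}}$ as a Morse-type function for $\lambda$ small, since its critical set is a perturbation of the leading-order one, which is nondegenerate. For $\alpha\le\bar\alpha$, smaller than the inverse Lipschitz constant of the gradient, the sufficient-decrease lemma applies, and iterates converge to critical points. The step map is a local diffeomorphism for small $\alpha$, so the stable-manifold theorem gives measure zero for the set of initialisations converging to saddles or maxima; this is where ``generic'' is used. The corner points $|\kappa|=1$ or $|\nu|=1$ are fixed by the recursion but must be checked to be minima only at $\pm(1,1)$. Mixed corners like $(1,-1)$ should be unstable in the $\nu$ direction because $\partial_\nu\tilde{\mathcal{R}}$ has the wrong sign there. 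Finally, $\kappa,\nu\to\pm1$ with matching signs means $(k_t,v_t)\to\pm(k^*,v^*)$, because on the sphere $\kappa=1$ forces $k=k^*$.

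\textbf{Main obstacle.} The main difficulty is Step 2: obtaining a sufficiently explicit expansion of $\mathcal{R}^<$ with no closed form available. In particular, I need to show that no spurious critical points appear in the interior for every $\ell,\gamma,M,N,\tau$, and that the perturbation remainder is uniformly smaller than the leading terms. The hardest part is near $\kappa=0$, where leading terms vanish at high order. I would first verify that the leading-order landscape has the product structure $\phi(\kappa)\nu$, and then handle neighbourhoods of degenerate points separately using the exact parity of $\tilde{\mathcal{R}}$.
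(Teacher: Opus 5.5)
Your outer scaffolding matches the paper's: invariance of $\mathcal{M}$ from the joint sign symmetry in $(\theta,\eta,\rho)$, the exact recursion $g$ in $(\kappa,\nu)$, sufficient decrease for small $\alpha$, and a center-stable-manifold argument. The difference, and the weak point, is Step 2. There you classify stationary points by a global small-$\lambda$ perturbation and guess a form containing a $\psi(\kappa)\nu^2$ term. That form is wrong, and the correct form removes your stated main obstacle. On $\mathcal{M}$, $\rho=\eta=0$ makes $G^k_{ij}$ independent of $(G^v_{ij},G^{v^*}_{ij})$. Hence $\mathbb{E}[T^2]$ does not depend on $\nu$ at all, and $\mathbb{E}[G^{v^*}_{11}T]$ is linear in $\nu$. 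The paper therefore writes, exactly and for every $\lambda$,
\[
\mathcal{R}^<(\kappa,\nu)=-2\gamma^2\nu\,\mathbb{E}[U(\kappa)]+S(\kappa)+C .
\]
Here $U(\kappa)$ is the outer $\erf$ of row $I_0$ times $\erf(\lambda_r(\kappa\ell+\gamma G^k_{11}))$, which is a.s.\ strictly increasing in $\kappa$, and $S$ is even. So $\partial_\nu\mathcal{R}^<=-2\gamma^2\mathbb{E}[U(\kappa)]$ vanishes only at $\kappa=0$, and evenness of $S$ then forces $\nu=0$. The Hessian at the origin has off-diagonal entry $-2\gamma^2\mathbb{E}[\partial_\kappa U(0)]\neq 0$, so the origin is a nondegenerate saddle. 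This settles the interior with no expansion, and there is no degeneracy at $\kappa=0$, contrary to your worry.

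What your plan genuinely misses is the boundary. Stationary points of the projected recursion are not just critical points of your $\tilde{\mathcal{R}}$ plus corners. Any point on an edge $\nu=\pm1$ with $\partial_\kappa\mathcal{R}^<(\kappa,\pm1)=0$ is fixed by $g$, because the factor $1-\nu^2$ freezes $\nu$. Since $\partial_\nu\mathcal{R}^<\neq0$ there, such points are invisible to a Morse analysis of $\tilde{\mathcal{R}}$. The edges $\kappa=\pm1$ are harmless because $\mathbb{E}[U(\pm1)]\neq0$. The $\nu=\pm1$ edges are the only place where the hypothesis $\lambda_r<\bar\lambda$ enters the paper's proof, and the numerics in Section~\ref{sec:sec05b} suggest this step really fails for larger $\lambda$. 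The paper shows
\[
\partial_\kappa\mathcal{R}^<(\kappa,-1)=\frac{16\tau\ell\gamma^2}{\pi\sqrt{\pi}}\lambda_r^3\,\mathbb{E}\Bigl[\sum_{i}Z_i^2+2Z_1^2\Bigr]+\mathcal{O}(\lambda_r^4)>0
\]
uniformly in $\kappa\in[-1,1]$, and handles $\nu=1$ by the symmetry $(\kappa,\nu)\mapsto(-\kappa,-\nu)$.

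Your perturbative route can be completed. Up to a constant and higher-order terms, the leading part of the risk is $-\frac{16\tau\gamma^2}{\pi\sqrt{\pi}}\lambda_r^3\,\ell\,\nu\,\kappa(\ell^2\kappa^2+3\gamma^2+N-1)$, and it has exactly the right stationary structure on the square, including $\partial_\kappa\neq0$ on the edges $\nu=\pm1$. Closing the argument that way, however, requires three things: uniform $C^1$ control of the remainder on the whole square, $C^2$ control near the origin, and the edge check you left out. The paper's exact affine-in-$\nu$ structure reduces all of this to a single first-order expansion on one edge.
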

The convergence to $(-k^*, -v^*)$ is not a problem, as by symmetry 
around 0, 
$T^{(k^*,v^*)}_{\lambda_r, \lambda_c} = T^{(-k^*,-v^*)}_{\lambda_r, \lambda_c}$.

\textbf{Remark 3.} Although some assumptions made throughout the paper are identical to the ones in \citet{marion}, our main result Theorem~\ref{thm:pgd_convergence} is qualitatively different from theirs. In their simplified standard (single-axis) attention layer case,  convergence is always reached, regardless of the inverse temperatures. On the contrary, here, as shown analytically in the proof of Theorem~\ref{thm:pgd_convergence} and numerically in Section~\ref{sec:sec05b} and Appendix~\ref{sec:app_decoupled_lambda}, the values of the inverse temperatures greatly impact the optimization landscape and convergence is not attainable in general. In particular, global convergence is only achieved below a specific threshold $\bar \lambda$.

\textbf{Remark 4 (Training and inference temperatures).}
The inverse temperatures used during optimization and inference serve different purposes and are therefore allowed to differ. During training, we use inverse temperatures $(\lambda_r^{\mathrm{train}},\lambda_c^{\mathrm{train}})$ satisfying the conditions of Theorem~\ref{thm:pgd_convergence}. In particular, they are sufficiently small and satisfy $\lambda_c^{\mathrm{train}}=\tau\lambda_r^{\mathrm{train}}$, which guarantees recovery of $(k^*,v^*)$ up to their joint sign for generic initializations on $\mathcal M$. After training, the recovered parameters are frozen and the inverse temperatures are reset to inference values $(\lambda_r^{\mathrm{inf}},\lambda_c^{\mathrm{inf}})$ satisfying the scaling conditions of Proposition~\ref{cor:cor02}. The resulting predictor is then asymptotically Bayes optimal.

Thus, the two results provide a sequential two-stage guarantee: parameter recovery is performed in an optimization-friendly temperature regime, followed by prediction in a statistically optimal inference regime. This separation is also used in our experiments, where the evaluation risk is computed using fixed inverse temperatures independently of the schedule used for optimization.

\subsection{Understanding the dynamics - numerical experiments} \label{sec:sec05b}
In practice, one cannot use the theoretical risk for optimization using PGD, since there is no closed form formula for it. Furthermore, one also does not have knowledge of what values for $\lambda_r$ and $\lambda_c$ are small enough. In our experiments, we bypass these issues by using batch stochastic PGD for the optimization process and implementing a decreasing schedule for the inverse temperatures.
\begin{figure}[htbp]
    \centering
    \begin{subfigure}[b]{0.24\textwidth}
        \centering
        \includegraphics[width=\textwidth]{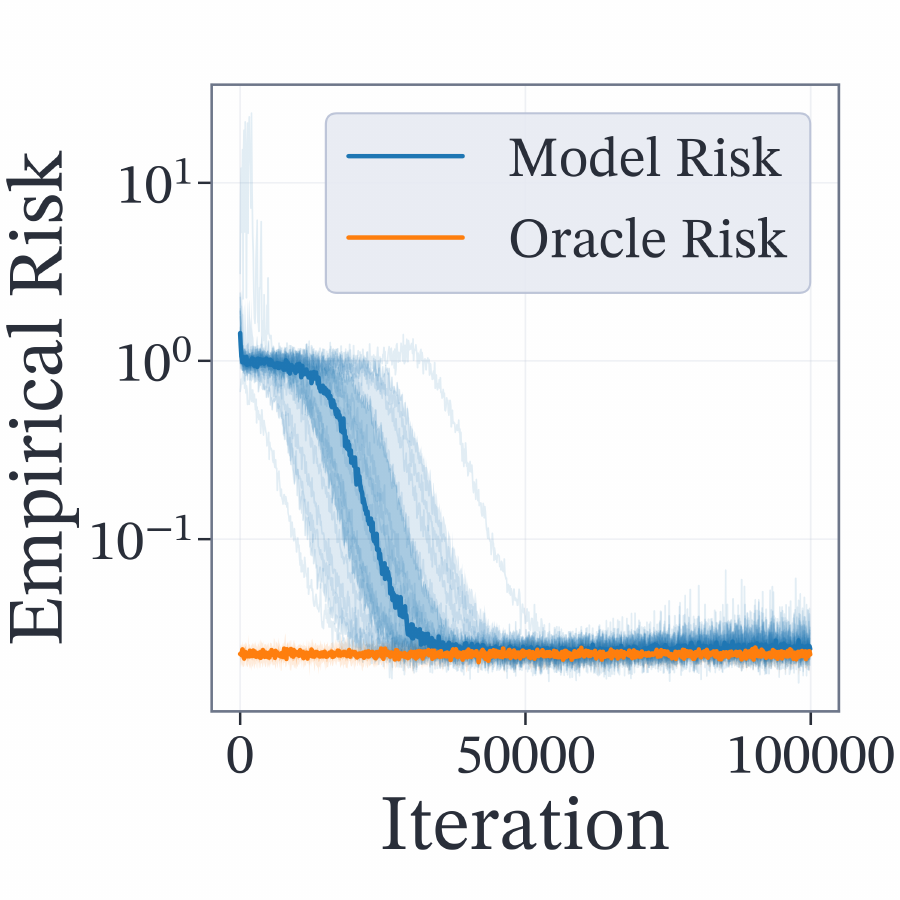}
        \label{fig:on_manifold_loss}
    \end{subfigure}
    \hfill
    \begin{subfigure}[b]{0.24\textwidth}
        \centering
        \includegraphics[width=\textwidth]{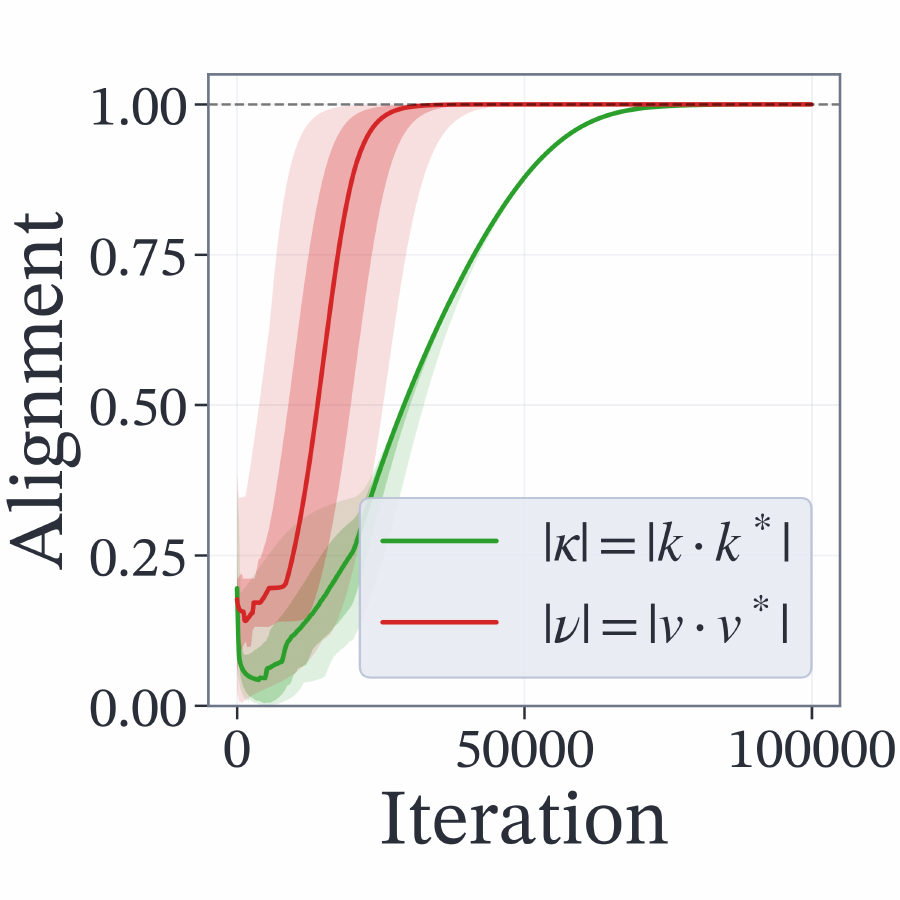}
        \label{fig:on_manifold_alignment}
    \end{subfigure}
    \hfill
    \begin{subfigure}[b]{0.24\textwidth}
        \centering
        \includegraphics[width=\textwidth]{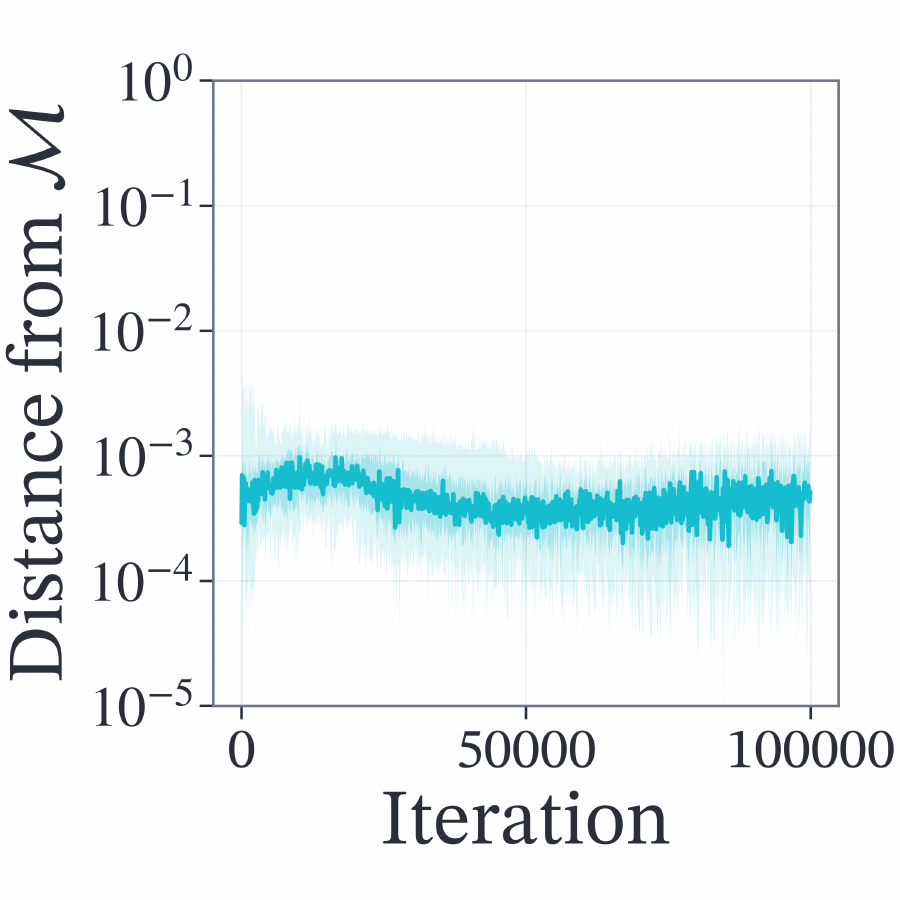}
        \label{fig:on_manifold_distance}
    \end{subfigure}
    \hfill
    \begin{subfigure}[b]{0.25\textwidth}
        \centering
        \includegraphics[width=\textwidth]{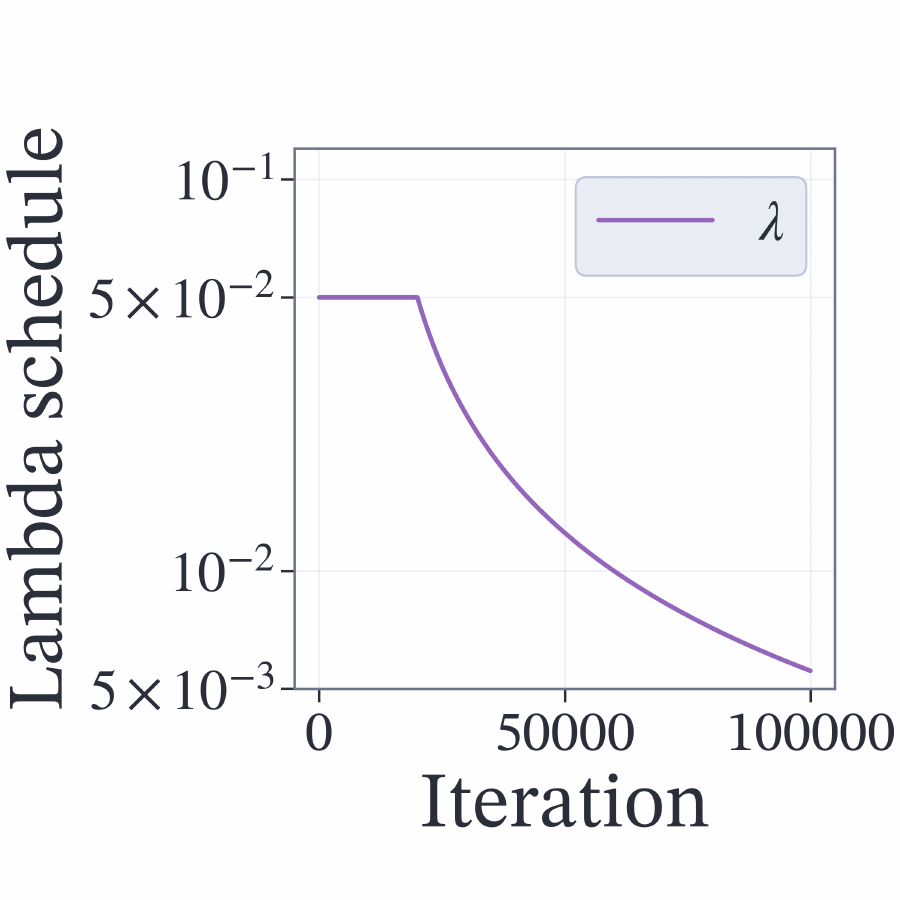}
        \label{fig:on_manifold_lambdas}
    \end{subfigure}
    
    \caption{Dynamics of stochastic PGD to the oracle parameters over 20 independent runs. The solid curve represents the empirical median across the runs and the bands represent the 0.05-0.95 and, respectively, the 0.25-0.75 quantile intervals (although not always visible). \textbf{Left:} The empirical risk obtained by our predictor on the training batches of data using both the training parameters $k$, $v$ and the oracle parameters $k^*$, $v^*$. For computing the oracle and model losses we use the initial inverse temperatures. \textbf{Middle-left:} The alignment between the training and oracle parameters $\kappa = k^Tk^*$, $\nu = v^Tv^*$. \textbf{Middle-right:} The distance from the manifold $\mathcal{M}$ of the training parameters $(k, v)$. \textbf{Right:} The schedule used for $\lambda = \lambda_r = \lambda_c$. The hyperparameters of the experiment are $M = N = 5$, $d = 20$, $\gamma = 1$, $\ell = 100$ and $\lambda_t = 5\cdot 10^{-2} / (1 + \max(t - 2\cdot 10^4, 0) \cdot 10^{-4})$.}
    \label{fig:on_manifold_full}
\end{figure}

Figure~\ref{fig:on_manifold_full} presents numerical results of the optimization obtained for initializations on the manifold using stochastic PGD. We perfectly recover the oracle parameters, up to numerical error, and there is  no departure from the manifold, again, up to numerical error and noise, as was predicted by theory. 

We emphasize the fact that the schedule for the inverse temperatures is crucial for the dynamics. To understand this, in Figure~\ref{fig:vary_lambda_gradients} we run an experiment in which we train 4 predictors using different schedulers: 20k warm-up steps with $\lambda_c = \lambda_r = 0.1$, followed by different values of constant inverse temperatures $\lambda = \lambda_c = \lambda_r$.  The leftmost plots show two different behaviors for $\nu$ and $\kappa$: $\nu$ converges fast, usually before the warm up period ends, while the convergence of $\kappa$ is highly dependent on the value of $\lambda$. The heat maps present the respective partial derivatives of $\mathcal{R}^<_{\lambda_r, \lambda_c}$ for the different values of $\lambda$. 
They show that for $\lambda$ small enough, minus the gradient always points towards the global minima when far from the origin.  For large values of $\lambda$, the dynamics push away from the origin much faster, but the gradients vanish and have the wrong sign close to the boundaries $\kappa = \pm 1$. As a consequence, we see that for those values, $\kappa$ does not converge to one, while it does for $\lambda = 0.01$.
The warm start is needed, as for small $\lambda$, the dynamics would remain stuck around the axis $\kappa = 0$ where the values of the partial derivatives are extremely small, as visible on the plot. 

\begin{figure}[htbp] 
    \centering
    \begin{subfigure}{\textwidth}
        \centering
        \includegraphics[width=1\textwidth]{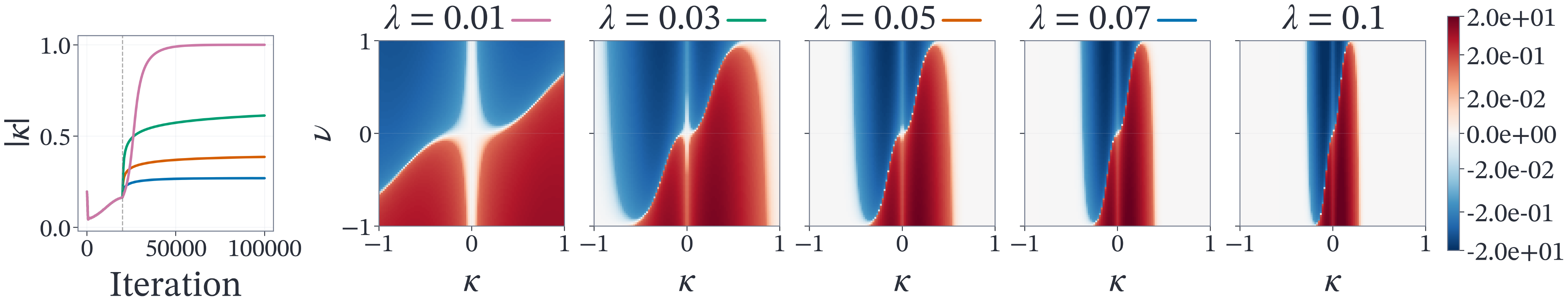}
        \caption{Alignment $\kappa$ and the partial derivative $\partial_\kappa\mathcal{R}^<_{\lambda, \lambda}$.}
        \label{fig:gradients_kappa}
    \end{subfigure}
    
    \vspace{1em}

    \begin{subfigure}{\textwidth}
        \centering
        \includegraphics[width=1\textwidth]{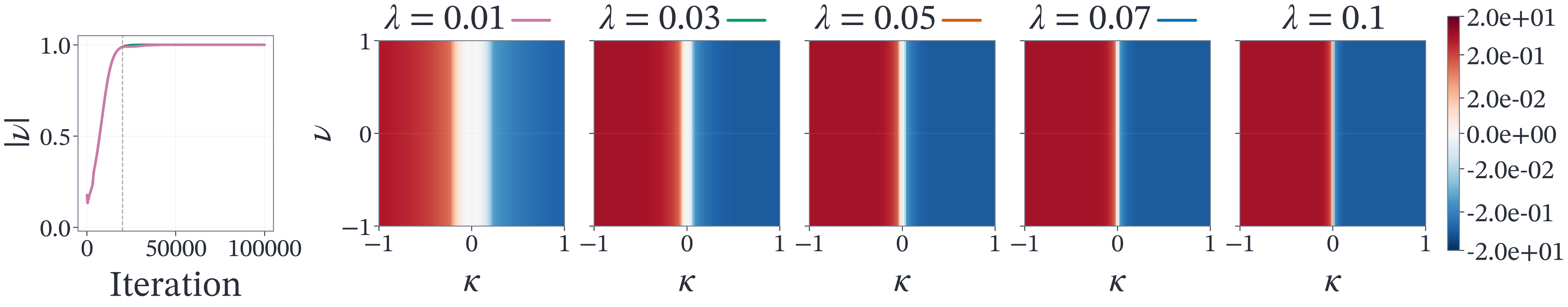}
        \caption{Alignment $\nu$ and the partial derivative $\partial_\nu\mathcal{R}^<_{\lambda, \lambda}$.}
        \label{fig:gradients_nu}
    \end{subfigure}
    \caption{\textbf{Leftmost:} Alignment dynamics during the training of our predictor with an inverse temperature warm-up period of 20k steps using $\lambda_c = \lambda_r = 0.1$, followed by constant $\lambda_c = \lambda_r = \lambda$ for $\lambda \in \{0.01, 0.03, 0.05, 0.07\}$. We plot the median across 20 runs for each value of $\lambda$. The dashed vertical line represents the warm up period. \textbf{Center-left to right:} The numerically computed gradients of $\mathcal{R}^<_{\lambda_r, \lambda_r}(\kappa, \nu)$ for $\lambda_r = \lambda_c = \lambda \in \{0.01, 0.03, 0.05, 0.07, 0.1\}$, $N = 5$ and $\ell = 100$. The gradient, given by \eqref{eq:app_partials}, is approximated using the Monte Carlo method with a sample size of 10,000.}
    \label{fig:vary_lambda_gradients}
\end{figure}

One last and very important problem remains: in practice we cannot enforce the initialization to be on the manifold $\mathcal{M}$, since that would require knowing the oracle parameters $k^*$ and $v^*$. We can ensure that $k^Tv = 0$ at initialization and penalize their inner product throughout the training. Furthermore, a noisy proxy for $v^Tk^*$ is $\frac{1}{\ell} v^T \sum_{i,j}X_{ij}$. Therefore, in the case of initialization outside the manifold, we impose $k^Tv = 0$ at initialization and optimize for the \textit{penalized} risk
\begin{equation} \label{eq:R_penalized}
\textstyle     \bar{\mathcal{R}}_{\lambda_r, \lambda_c}(k,v) = \mathcal{R}_{\lambda_r, \lambda_c}(k,v) + \mu_1 (k^Tv)^2 + \mu_2 \mathbb{E}\big[\big(v^T\sum_{i,j} X_{ij}\big)^2\big],
\end{equation}
where $\mu_1, \mu_2 \in (0, \infty)$ are fixed penalization coefficients.
A similar use of regularization to encourage the orthogonality conditions defining an invariant manifold is explored by \citet{attentionclustering} for attention-based clustering, using a different penalty to handle initializations outside the manifold.

Indeed, Figure~\ref{fig:comparison_manifold} presents the results obtained when optimizing for this penalized empirical risk from a generic random initialization on $(\mathbb{S}^{d-1})^2$. The top row (Figure~\ref{fig:no_manifold_erf}) displays the training dynamics for our trainable predictor \eqref{eq:T}, employing the $\erf$ non-linearity. We observe that the penalty terms effectively guide the parameters towards the invariant manifold $\mathcal{M}$ throughout the training and the convergence is slower, even though we have used a higher initial inverse temperature. The lower row (Figure~\ref{fig:no_manifold_softmax}) presents the identical experimental setup applied to the predictor \eqref{eq:T_sigma} with $\sigma = \text{softmax}$. Although the training dynamics are not identical, both models converge on the manifold $\mathcal{M}$ and there are phase transitions in the empirical loss curves for both models. In the case of the $\erf$ model, the phase transition appears to be correlated with $\nu$ reaching near-perfect alignment, while in the case of the softmax model, the phase transition presents itself when both $\nu$ and $\kappa$ reach near-perfect alignment.

As we have shown in the present experiments, the inverse temperature schedule is of the utmost importance to the performance of the training procedure, and finding a suitable one that performs well across all initializations adds a layer of complexity to the optimization process. A more in-depth analysis of and further details related to the numerical experiments can be found in Appendix~\ref{sec:app_numerical}.

\begin{figure}[htbp] 
    \centering
    \begin{subfigure}{\textwidth}
        \centering
        \includegraphics[width=0.24\textwidth]{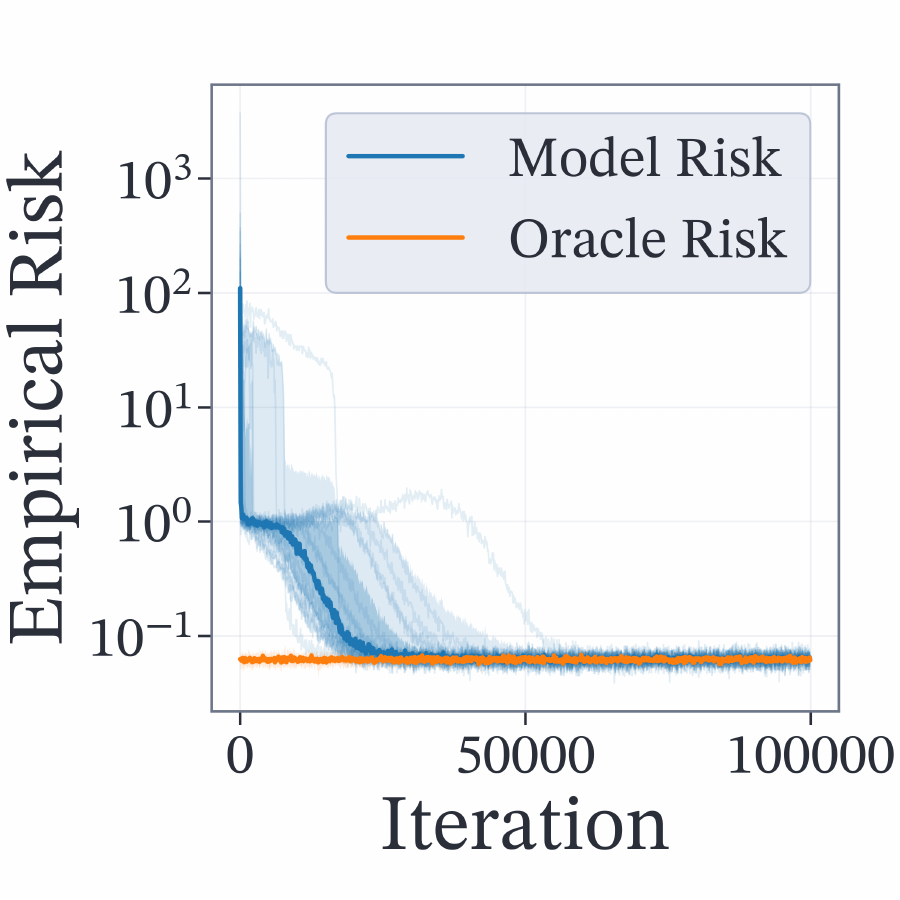}\hfill
        \includegraphics[width=0.24\textwidth]{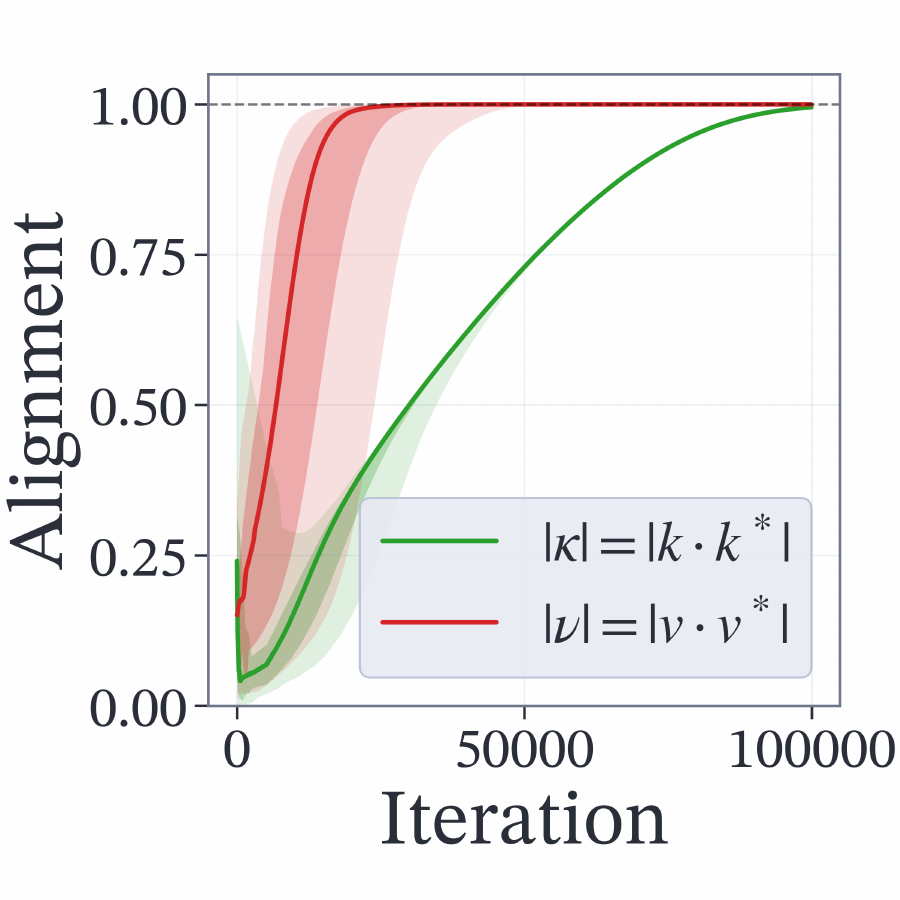}\hfill
        \includegraphics[width=0.24\textwidth]{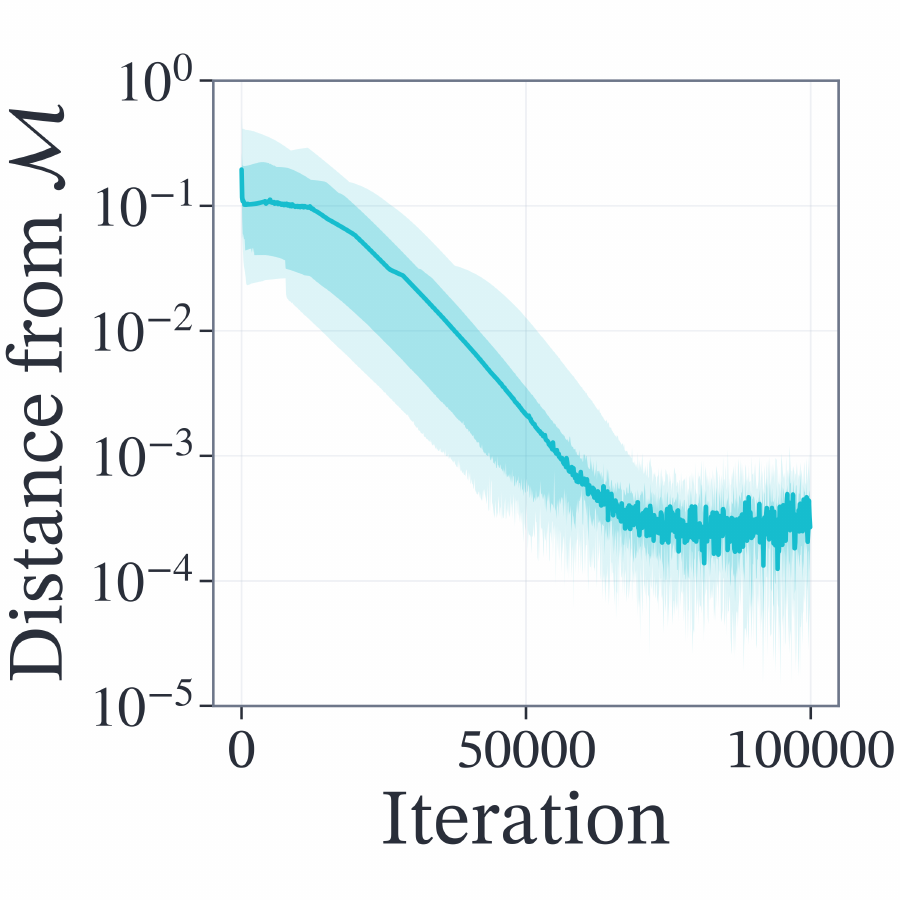}\hfill
        \includegraphics[width=0.24\textwidth]{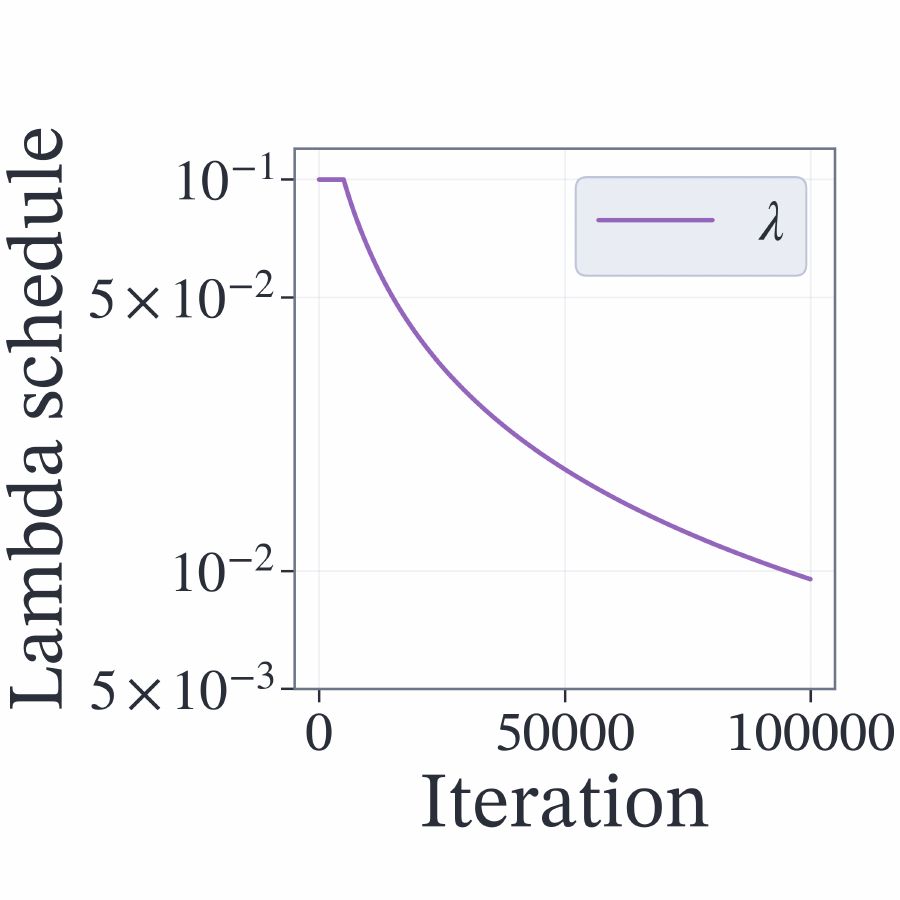}
        \caption{Training dynamics of the trainable predictor \eqref{eq:T} (i.e. \eqref{eq:T_sigma} with $\sigma = \erf$).}
        \label{fig:no_manifold_erf}
    \end{subfigure}
    
    \vspace{1em}

    \begin{subfigure}{\textwidth}
        \centering
        \includegraphics[width=0.24\textwidth]{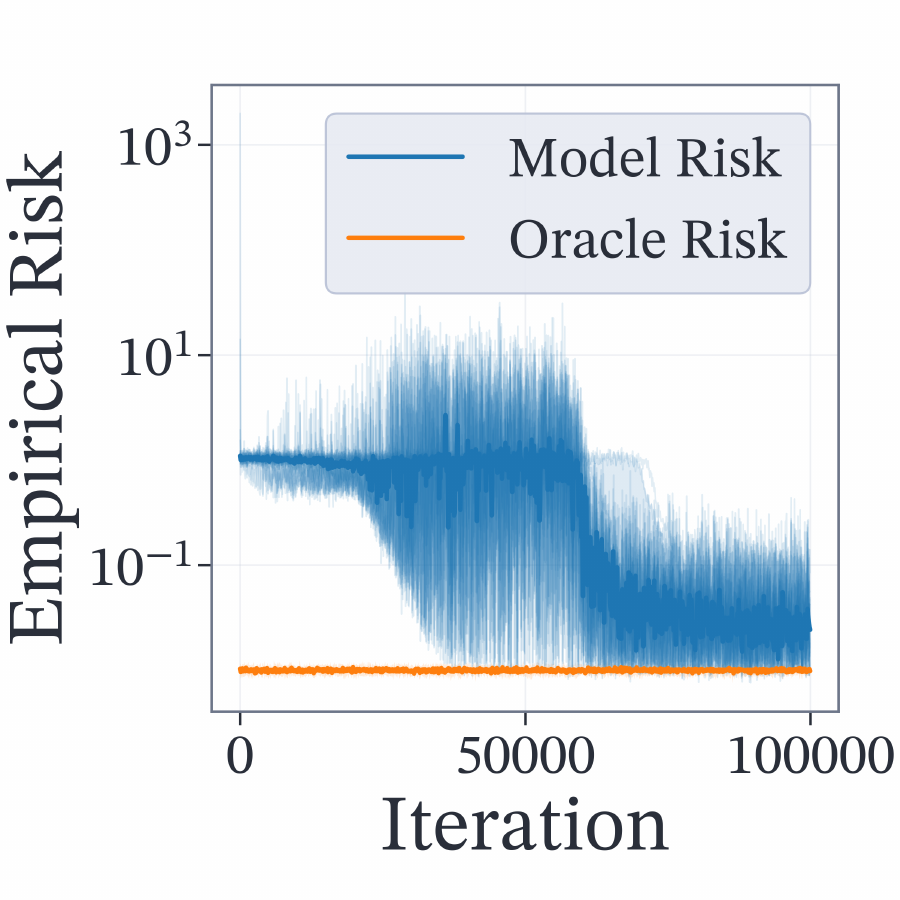}\hfill
        \includegraphics[width=0.24\textwidth]{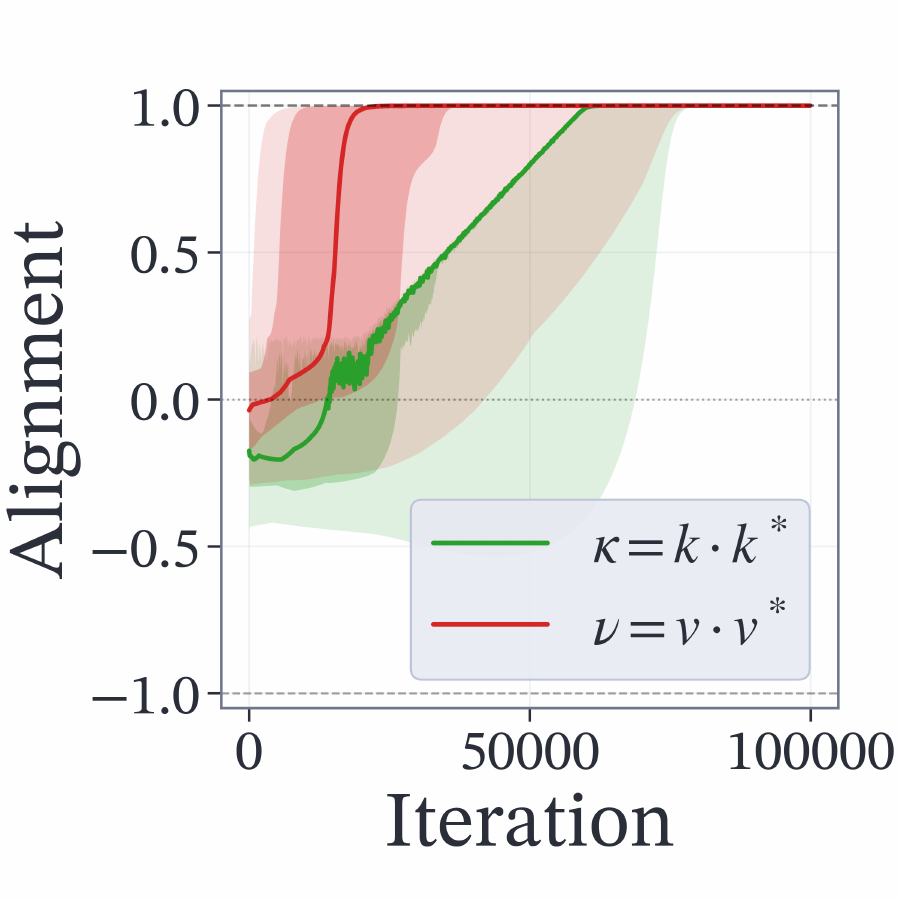}\hfill
        \includegraphics[width=0.24\textwidth]{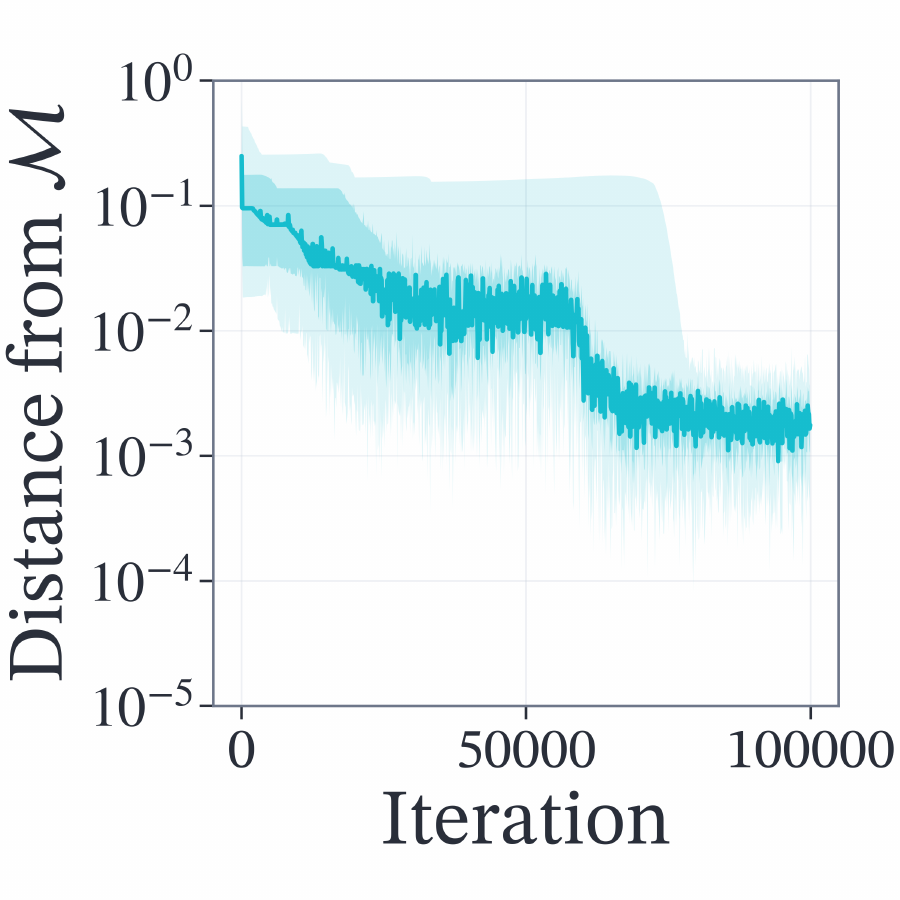}\hfill
        \includegraphics[width=0.24\textwidth]{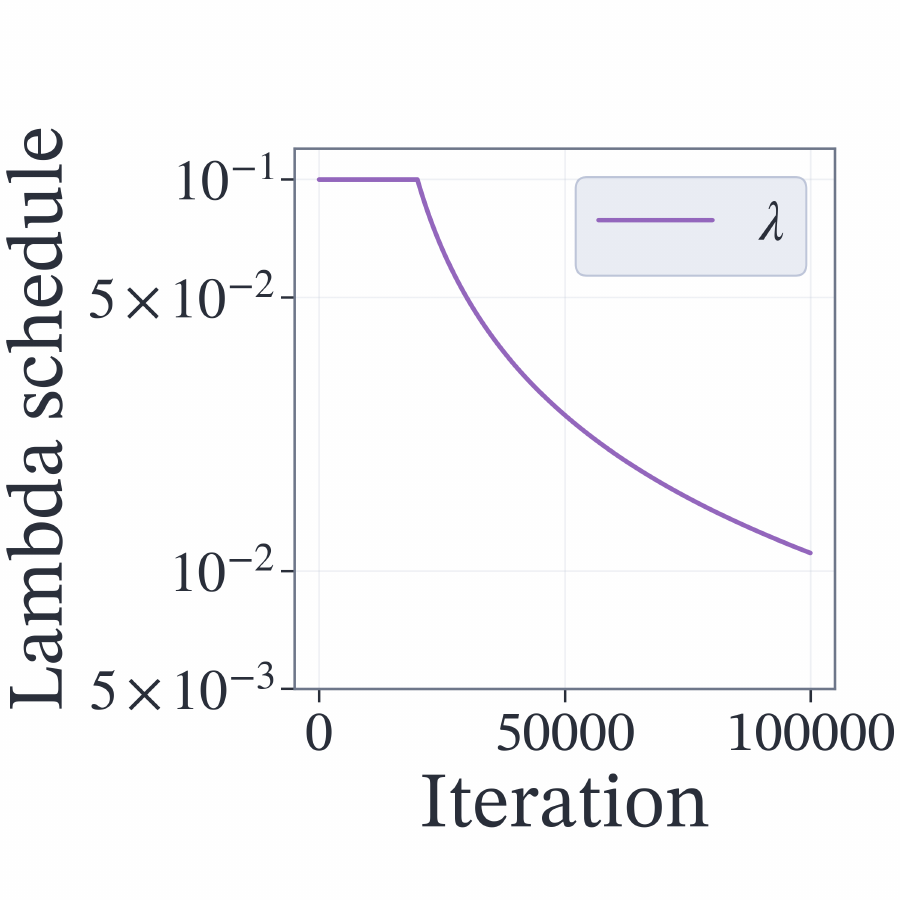}
        \caption{Training dynamics of the softmax predictor \eqref{eq:T_sigma} with $\sigma = \text{softmax}$.}
        \label{fig:no_manifold_softmax}
    \end{subfigure}
    \caption{Convergence of stochastic PGD over 20 runs with random initializations on $\mathbb{S}^{d-1} \times \mathbb{S}^{d-1}$, optimizing for the penalized empirical risk \eqref{eq:R_penalized}. The solid curve represents the empirical median across the runs. The hyperparameters of the experiment are $M = N = 5$, $d = 20$, $\gamma = 1$, $\ell = 100$, $\mu_1 = 10$, $\mu_2 = 1$. The inverse temperature schedules are $\lambda_t =  10^{-1} / (1 + \max(t - 5\cdot 10^3, 0) \cdot 10^{-4})$, for the $\erf$ predictor and $\lambda_t =  10^{-1} / (1 + \max(t - 2\cdot 10^4, 0) \cdot 10^{-4})$ for the softmax predictor. For computing the oracle and model losses we use the initial inverse temperatures.}
    \label{fig:comparison_manifold}
\end{figure}

\section{Conclusion and discussion} \label{sec:sec06}

We presented a rigorous theoretical framework for understanding axial attention in TSFMs by extending the SLR model to 2D grids. We proved that an analytically tractable simplification of axial attention achieves asymptotic Bayes optimality for the oracle parameters, whereas semi-linear models in which the inner non-linearity is removed strictly fail to do so. Furthermore, we established that Projected Gradient Descent converges to these optimal parameters for small enough inverse temperatures. Extensive numerical experiments validate our theoretical findings and highlight the critical role of inverse temperature scheduling in converging to the global minima within the non-convex optimization landscape.

Our statistical framework assumes fixed temporal and spatial dimensions, independent background tokens, and a target that depends on a single localized token. Relaxing these assumptions provides several natural directions for future work. In particular, incorporating autoregressive temporal dynamics and dense cross-channel correlations would better reflect the structure of real multivariate time series, while allowing the dimensions or number of informative tokens to grow would broaden the range of sparse information settings covered by the model. These extensions would introduce additional dependencies into both the statistical model and the optimization landscape and would therefore require analytical tools beyond those used in the present work.

It is also necessary to explicitly reiterate the architectural constraints that make the analysis possible. Our results are heavily dependent on a rank-1 constraint applied to the query and key projection matrices, as well as the substitution of the standard softmax activation with the $\erf$ function and the shared-key constraint $K_r=K_c$. The theoretical results in this paper apply to this simplified $\erf$ predictor, while our experiments with the corresponding softmax predictor provide encouraging evidence that similar optimization behavior persists beyond the activation function used in the analysis. Full TSFMs additionally contain realistic projection matrices, residual connections, normalization layers, and multiple axial-attention blocks; the implications of our results for these architectures are therefore qualitative. Nevertheless, our Single Location Regression model preserves the core structural challenge of identifying and exploiting a sparsely located informative token, making it highly relevant for the theoretical study of axial attention in TSFMs.

Finally, while our current focus is restricted to a single informative token, the same framework suggests a natural extension to multiple-token regression through multi-head attention. If the target depends on several informative locations, different heads could learn distinct localization directions and specialize in selecting different tokens, while their outputs could be combined to recover the target. Under suitable identifiability and separation assumptions, parts of the statistical analysis could then be applied head-wise, following an approach similar to the one developed here. The optimization problem would, however, become substantially richer: the heads would be trained jointly, their roles would be interchangeable, and several heads could compete for or collapse onto the same informative location. We leave this extension, together with the study of the corresponding statistical and optimization guarantees, for future work.

\paragraph{AI use statement.} 
We have used generative AI tools to assist with writing and cleaning code, and with editing the manuscript.
The authors reviewed all AI-assisted outputs and take responsibility for the final content of this work.

\paragraph{Acknowledgments.}
We thank Pierre Marion, whose insightful comments helped us improve the final version of this paper. The work of Andrei Tiberiu Pantea and Aymeric Dieuleveut is supported by French State aid managed by the Agence Nationale de la Recherche (ANR) under the France 2030 program ANR-23-IACL-0005 (Hi! PARIS).

\bibliographystyle{plainnat}
\bibliography{references}


\appendix

\section{Proof of Lemma~\ref{lem:equivalence} and further discussion about architectural constraints} \label{sec:app_equivalence}

We recall the definition of the standard axial self-attention layer.
\begin{equation*}
    T_{\lambda_r, \lambda_c, \sigma}^{(Q_{r,c}, K_{r,c}, V_{r,c}, O_{c})}(\X)_{\cdot j} = \sigma\left(\lambda_c \underbrace{\Xhalf_{\cdot j} Q_c}_{M \times p_c} \underbrace{K_c^T \left(\Xhalf_{\cdot j}\right)^T}_{p_c \times M}\right) \underbrace{\Xhalf_{\cdot j} V_c}_{M \times p_c} \underbrace{O_c}_{p_c \times o_c},
\end{equation*}
where the initial row-wise computation is:
\begin{equation*}
    \Xhalf_{i \cdot} = \sigma(\lambda_r \underbrace{\X_{i \cdot} Q_r}_{N \times p_r} \underbrace{K_r^T \X^T_{i \cdot}}_{p_r \times N}) \underbrace{\X_{i \cdot} V_r}_{N \times d}
\end{equation*}

In order to justify our simplified estimator \eqref{eq:T_sigma}, we explicitly map it to the mechanism of an actual axial attention block. As mentioned in the main paper, sequences are typically appended with designated \texttt{[REG]} tokens to act as either readout tokens or attention sinks in the context of TSFMs. Following this convention, we assume that the input's final column consists of specialized \texttt{[REG\_ROW]} tokens. To collapse the 2D grid into a single regression scalar, we then incorporate an additional \texttt{[REG\_COL]} token at the specific index $(M + 1, N + 1)$, which acts as a global aggregator during the subsequent column-wise operation. We further assume the two \texttt{[REG]} tokens do not attend to themselves, which is a very light assumption, since a priori these tokens contain no information. Isolating the output for this specific token yields the following
\begin{equation*}
    \Xhalf_{i ,N+1} = \sum_{l=1}^{N} \sigma(\lambda_r X_{\text{[REG\_ROW]}}^TQ_r K_r^T \X_{i \cdot} ^T)_l X_{i l}^TV_r  = \sigma(\lambda_r a K_r^T \X_{i \cdot}^T) \X_{i \cdot} \in \mathbb{R}^d
\end{equation*}
if we assume that the projection $V_r\in \mathbb{R}^{d \times d}$ is the identity matrix and encapsulate the query interaction as a vector $a \coloneqq X_{\text{[REG\_ROW]}}^TQ_r \in \mathbb{R}^{1 \times p_r}$. By applying an analogous definition for the column-wise query $a^{(1/2)} \coloneqq X_{\text{[REG\_COL]}}^T Q_c \in \mathbb{R}^{1 \times p_c}$, the output of the final attention block becomes:
\begin{equation*}
    T_{\lambda_r, \lambda_c, \sigma}^{(Q_{r,c}, K_{r,c}, V_{r,c}, O_{c})}(\X)_{M+1, N+1} = \sum_{i = 1}^{M} \sigma\left(\lambda_c a^{(1/2)} K_c^T \Xhalf_{i , N+1}\right)_i \left(\Xhalf_{i , N+1}\right)^T V_c O_c 
\end{equation*}

To perfectly align this with our theoretical model, we enforce rank-1 restrictions on the projections, setting $p_r = p_c = 1$. With these constraints in place, along with $o_c = 1$, $O_c = 1$, and shared keys $K_r = K_c$, the full axial mechanism cleanly reduces to our formulated predictor:
\begin{equation*} 
    T^{(Q_{r,c}, K_{r,c}, V_{r,c}, O_{c})}_{\lambda_r, \lambda_c}(\X)_{M+1,N + 1} = T^{(K_c, V_c)}_{\lambda_r a, \lambda_c a^{(1/2)}, \sigma}(\X)
\end{equation*}

\paragraph{Discussion. }

In this paragraph, we address the main architectural constraints and modifications imposed in our analysis.
\begin{itemize}
    \item \textbf{\texttt{REG} tokens.} The addition of \texttt{REG} tokens is not a relevant modification, since any transformer layer whose output must be a single value, be it scalar of vector, will implement a global aggregation mechanism for the input tokens and will be trained for it. \texttt{REG} tokens remove the inductive biases that come with pooling methods or positional decoding and allow for a learned aggregation mechanism.
    \item \textbf{Rank-1 restriction on projection matrices.} This constraint is not at all harsh in the context of SLR, where there exists a single informative direction in the latent space, namely $k^*$. Moreover, \citet{yu2025understanding} has shown that patch-based TSFMs admit low-rank approximations of the projection matrices.
    \item \textbf{Shared keys.} $K_r = K_c$ is indeed a strong constraint in practice. However, in the particular case of the studied SLR, there is a single special direction in the embedding space, $k^*$, and, heuristically, token selection should be made according to this direction. 
    \item \textbf{Non-linearity.} For the theoretical analysis, we have restricted the value of $\sigma$ to $\erf$, maintaining the same convention as in \citet{marion}. However, only a couple of properties of the $\erf$ function are essential to the present proofs, such as global boundedness of $\erf$ and of its derivatives, the asymptotics $\lim_{x\to\infty}\erf(x) = 1$ and $\lim_{x\to-\infty}\erf(x) = -1$, $\erf(x) = 0$ etc. Extending the proofs and the numerical analysis to other activation functions seems feasible and would be an interesting further research direction.
    Furthermore, the statistical role of the activation function in single-location regression is investigated by \citet{duranthon2026statistical}, who establish advantages of softmax over alternative activations in a single-axis setting. Understanding how these comparisons extend to sequential axial attention is an interesting direction for future work.
\end{itemize}

\section{Proofs for Section~\ref{sec:sec04}} \label{sec:app_sec04}
In what follows, we shall denote by
\begin{equation*}
    \mathcal{R}^<(\kappa, \nu) \coloneqq \mathcal{R}^<(\kappa, \nu, 0, 0, 0),
\end{equation*}
the value of the risk when restricted to $\mathcal{M}$.

\subsection{Proofs of Lemma~\ref{thm:thm01} and of Proposition~\ref{cor:cor02}} \label{sec:app_sec04a}

We recall the formula for the risk
\begin{equation} \label{eq:app_risk}
\mathcal{R}_{\lambda_r, \lambda_c}(k, v) = \mathbb{E}\left[\left(Y - \sum_{i = 1}^M \erf\left(\lambda_c \sum_{j=1}^N \erf(\lambda_r X_{ij}^Tk)X_{ij}^Tk\right) \; \sum_{j=1}^N \erf(\lambda_r X_{ij}^Tk)X_{ij}^T v\right)^2\right]
\end{equation}
Let us now denote by $G_{ij}^u = X_{ij}^T u$, for $u \in \mathbb{R}^d$. Note that we consider $X_{I_0 J_0} = \ell k^* + \gamma \tilde{X}_{I_0 J_0}$ and $G_{I_0 J_0}^u = \tilde{X}_{I_0 J_0}^T u$. Then, our predictor only depends on $\kappa$, $\theta$ and the alignment random variables $G_{ij}^v, G_{ij}^{v^*}, G_{ij}^k$ which are i.i.d. and follow the distribution
\[
\begin{pmatrix}
G_{ij}^k \\
G_{ij}^v \\
G_{ij}^{v^*}
\end{pmatrix}
\sim
\mathcal{N}\left(
0,
\begin{pmatrix}
1 \quad \rho \quad \eta \\
\rho \quad 1 \quad \nu \\
\eta \quad \nu \quad 1
\end{pmatrix}
\right).
\]
We further assume, without loss of generality, that $I_0 = J_0 = 1$. Then $Y = \gamma G_{11}^{v^*} + \xi$ and, dropping the subscripts of $\mathcal{R}_{\lambda_r, \lambda_c}$, one indeed has
\begin{equation} \label{eq:app01}
\begin{split}
    \mathcal{R}(k,v) &= \mathbb{E}\left[\left(\gamma G_{11}^{v^*} + \xi - T_{\lambda_r, \lambda_c}^{(k,v)}(\X)\right)^2\right] \\
    &=\epsilon^2 + \mathbb{E}\left[\left(\gamma G_{11}^{v^*} - T_{\lambda_r, \lambda_c}^{(k,v)}(\X)\right)^2\right] \\
    &= \epsilon^2 + \gamma^2 - 2\gamma\,\mathbb{E}\left[G_{11}^{v^*} T_{\lambda_r, \lambda_c}^{(k,v)}(\X)\right] + \mathbb{E}\left[T_{\lambda_r, \lambda_c}^{(k,v)}(\X)^2\right].
\end{split}
\end{equation}
where
\begin{equation} \label{eq:app02}
\begin{split}
    T_{\lambda_r, \lambda_c}^{(k,v)}(\X) & = \sum_{i = 2}^M \erf\left(\lambda_c \sum_{j=1}^N \erf(\lambda_r G_{ij}^k) G_{ij}^k \right) \sum_{j=1}^N \erf(\lambda_r G_{ij}^k) G_{ij}^v \\
    & + \erf\left(\lambda_c \erf\left(\lambda_r \kappa \ell + \lambda_r \gamma G_{11}^k \right)\left(\kappa \ell + \gamma G_{11}^k \right) + \lambda_c \sum_{j=2}^N \erf\left(\lambda_r G_{1j}^k\right)G_{1j}^k\right) \cdot \\
    & \quad \cdot \left(\erf\left(\lambda_r \kappa \ell + \lambda_r \gamma G_{11}^k \right) \left( \theta \ell + \gamma G_{11}^v \right) + \sum_{j=2}^N \erf\left(\lambda_r G_{1j}^k\right) G_{1j}^v \right)
\end{split}
\end{equation}

Since Equation~\eqref{eq:app01} and Equation~\eqref{eq:app02} only depend on the 5 parameters $\kappa, \nu, \rho, \eta, \theta$, this ends the proof of Lemma~\ref{thm:thm01}. On the manifold $\mathcal{M}$, where $\rho = \theta = \eta = 0$, we can simplify the expression by dropping the $\theta \ell$ term in the last row of Equation~\eqref{eq:app02}. We continue our proof by analyzing each term in Equation~\eqref{eq:app01} individually.

The first term can be rewritten as
\begin{align} \label{eq:app03_nomanifold}
    \mathbb{E}&\left[G_{11}^{v^*} T_{\lambda_r, \lambda_c}^{(k,v)}(\X)\right] = \nonumber \\
    &= \textcolor{red}{\sum_{i = 2}^M \mathbb{E}\left[\erf\left(\lambda_c \sum_{j=1}^N \erf(\lambda_r G_{ij}^k) G_{ij}^k \right) \sum_{j=1}^N \erf(\lambda_r G_{ij}^k) G_{ij}^vG_{11}^{v^*}\right]} \nonumber \\
    &\phantom{=} + \textcolor{blue}{\mathbb{E}\left[\erf\left(\lambda_c \erf\left(\lambda_r \kappa \ell + \lambda_r \gamma G_{11}^k \right)\left(\kappa \ell + \gamma G_{11}^k \right) + \lambda_c \sum_{j=2}^N \erf\left(\lambda_r G_{1j}^k\right)G_{1j}^k\right) \cdot \right.} \nonumber \\
    &\phantom{=+} \textcolor{blue}{\left.\cdot \left(\erf\left(\lambda_r \kappa \ell + \lambda_r \gamma G_{11}^k \right) (\theta \ell + \gamma G_{11}^v) G_{11}^{v^*} + \sum_{j=2}^N \erf\left(\lambda_r G_{1j}^k \right) G_{1j}^v G_{11}^{v^*}\right)\right]}
\end{align}
On the manifold $\mathcal{M}$, this becomes:
\begin{align} \label{eq:app03}
    \mathbb{E}&\left[G_{11}^{v^*} T_{\lambda_r, \lambda_c}^{(k,v)}(\X)\right] = \nonumber \\
    &= \textcolor{red}{\sum_{i = 2}^M \mathbb{E}\left[\erf\left(\lambda_c \sum_{j=1}^N \erf(\lambda_r G_{ij}^k) G_{ij}^k \right) \sum_{j=1}^N \erf(\lambda_r G_{ij}^k) G_{ij}^vG_{11}^{v^*}\right]} \nonumber \\
    &\phantom{=} + \textcolor{blue}{\mathbb{E}\left[\erf\left(\lambda_c \erf\left(\lambda_r \kappa \ell + \lambda_r \gamma G_{11}^k \right)\left(\kappa \ell + \gamma G_{11}^k \right) + \lambda_c \sum_{j=2}^N \erf\left(\lambda_r G_{1j}^k\right)G_{1j}^k\right) \cdot \right.} \nonumber \\
    &\phantom{=+} \textcolor{blue}{\left.\cdot \left(\erf\left(\lambda_r \kappa \ell + \lambda_r \gamma G_{11}^k \right) \gamma G_{11}^v G_{11}^{v^*} + \sum_{j=2}^N \erf\left(\lambda_r G_{1j}^k \right) G_{1j}^v G_{11}^{v^*}\right)\right]} \nonumber \\
    &= \textcolor{red}{\sum_{i = 2}^M \sum_{j=1}^N\mathbb{E}\left[\erf\left(\lambda_c \sum_{j=1}^N \erf(\lambda_r G_{ij}^k) G_{ij}^k \right)\erf(\lambda_r G_{ij}^k)G_{ij}^v\right] \underbrace{\mathbb{E}\left[G_{11}^{v^*}\right]}_{=0}} \nonumber\\
    &\phantom{=} + \textcolor{blue}{\mathbb{E}\left[\erf\left(\lambda_c \erf\left(\lambda_r \kappa \ell + \lambda_r \gamma G_{11}^k \right)\left(\kappa \ell + \gamma G_{11}^k \right) + \lambda_c \sum_{j=2}^N \erf\left(\lambda_r G_{1j}^k\right)G_{1j}^k\right) \cdot \right.} \nonumber \\
    &\phantom{= +} \textcolor{blue}{\cdot \; \erf\left(\lambda_r \kappa \ell + \lambda_r \gamma G_{11}^k \right)\Bigg] \gamma \underbrace{\mathbb{E}\left[G_{11}^v G_{11}^{v^*}\right]}_{= \nu}} \nonumber \\
    &\phantom{=} + \textcolor{blue}{ \sum_{j=2}^N\mathbb{E}\left[\erf\left(\lambda_c \erf\left(\lambda_r \kappa \ell + \lambda_r \gamma G_{11}^k \right)\left(\kappa \ell + \gamma G_{11}^k \right) + \lambda_c \sum_{j=2}^N \erf\left(\lambda_r G_{1j}^k\right)G_{1j}^k\right)\right. \cdot} \nonumber \\
    &\phantom{= + \sum \mathbb{E}} \textcolor{blue}{\cdot \; \erf\left(\lambda_r G_{1j}^k \right) G_{1j}^v\Bigg] \underbrace{\mathbb{E}\left[G_{11}^{v^*}\right]}_{=0}} \nonumber \\
    &= \gamma \nu \mathbb{E}\!\left[ \erf\!\left(\lambda_c \erf\!\left(\lambda_r \kappa \ell + \lambda_{r} \gamma G_{11}^k\right) (\kappa \ell + \gamma G_{11}^k) + \lambda_c \sum_{j=2}^{N} \erf\!\left(\lambda_r G_{1j}^k\right) G_{1j}^k\right)\right. \cdot\ \nonumber \\
    &\phantom{=\gamma \nu \mathbb{E}}\cdot \erf\!\left(\lambda_r \kappa \ell + \lambda_{r} \gamma G_{11}^k\right)\Bigg],
\end{align}
where we have used the independence between $G_{11}^{v^*}$ and $G_{ij}^v, G_{ij}^k$. 

Due to the independence between $G_{ij}^\cdot$ and $G_{rs}^\cdot$ for $(i,j) \neq (r,s)$, there appear no cross-terms when expanding the expectation of $T_{\lambda_r, \lambda_c}^{(k,v)}(\X)^2$. Therefore, under $\rho = 0$, the second term simplifies as
\begin{align} \label{eq:app04_nomanifold}
    \mathbb{E}\!&\left[ T_{\lambda_r, \lambda_c}^{(k,v)}(\X)^2\right] \nonumber \\
    &= \textcolor{ForestGreen}{\sum_{i=2}^{M} \sum_{j=1}^{N} \mathbb{E}\!\left[\erf^2\!\left(\lambda_c \sum_{j=1}^{N} \erf\!\left(\lambda_r G_{ij}^k\right) G_{ij}^k\right) \cdot \erf^2\!\left(\lambda_r G_{ij}^k\right)(G_{ij}^v)^2\right]} \nonumber \\
    &\phantom{=}+ \textcolor{teal}{\mathbb{E}\!\left[\erf^2\!\left(\lambda_c \erf\!\left(\lambda_r \kappa \ell + \lambda_{r} \gamma  G_{11}^k\right)\!\left(\kappa \ell + \gamma G_{11}^k\right) + \lambda_c \sum_{j=2}^{N} \erf\!\left(\lambda_r G_{1j}^k\right) G_{1j}^k\right)\right.} \nonumber \\
    & \phantom{= + \mathbb{E}}\textcolor{teal}{\cdot \left(\erf^2\!\left(\lambda_r \kappa \ell + \lambda_{r} \gamma G_{11}^k\right) (\theta \ell + \gamma G_{11}^v)^2 + \sum_{j=2}^{N} \erf^2\!\left(\lambda_r G_{1j}^k\right)(G_{1j}^v)^2\right.} \nonumber \\
    &\phantom{= + \mathbb{E}[ \cdot \;} \textcolor{teal}{+ 2 \theta \ell\;  \erf\left(\lambda_r \kappa \ell + \lambda_r \gamma G_{11}^k \right) \sum_{j=2}^N\erf\left(\lambda_r G_{1j}^k\right)G_{1j}^v \Bigg)\Bigg]}
\end{align}
On the manifold $\mathcal{M}$ ($\rho = \theta = \eta = 0$), this becomes
\begin{align*}
    \mathbb{E}\!&\left[ T_{\lambda_r, \lambda_c}^{(k,v)}(\X)^2\right] \nonumber \\
    &= \textcolor{ForestGreen}{\sum_{i=2}^{M} \sum_{j=1}^{N} \mathbb{E}\!\left[\erf^2\!\left(\lambda_c \sum_{j=1}^{N} \erf\!\left(\lambda_r G_{ij}^k\right) G_{ij}^k\right) \cdot \erf^2\!\left(\lambda_r G_{ij}^k\right)\right]} \nonumber \\
    &\phantom{=}+ \textcolor{teal}{\mathbb{E}\!\left[\erf^2\!\left(\lambda_c \erf\!\left(\lambda_r \kappa \ell + \lambda_{r} \gamma  G_{11}^k\right)\!\left(\kappa \ell + \gamma G_{11}^k\right) + \lambda_c \sum_{j=2}^{N} \erf\!\left(\lambda_r G_{1j}^k\right) G_{1j}^k\right)\right.} \nonumber \\
    & \phantom{= + \mathbb{E}}\textcolor{teal}{\cdot \left.\left(\gamma^2 \erf^2\!\left(\lambda_r \kappa \ell + \lambda_{r} \gamma G_{11}^k\right) + \sum_{j=2}^{N} \erf^2\!\left(\lambda_r G_{1j}^k\right)\right)\!\right]}
\end{align*}

In what follows, we shall prove that under the joint scaling $\lambda_r \ell \to \infty$, $\lambda_c \ell \to \infty$ and $\lambda_r = o\left(M^{-\frac{1}{2}} N^{-\frac{1}{2}}\right)$, the risk converges to $\epsilon^2$.
For the oracle parameters, $\kappa = \nu = 1$. Moreover, $\erf(\lambda_r x) x \geq 0$ for all $x \in \mathbb{R}$ and $|\erf(x)| \leq 1$. Therefore, Equation~\eqref{eq:app03} implies
\begin{align*}
    \mathbb{E}\left[G_{11}^{v^*} T_{\lambda_r, \lambda_c}^{(k,v)}(\X)\right] &\geq \gamma \mathbb{E}\Bigg[ \erf\bigg(\lambda_c \erf\left(\lambda_r \ell + \lambda_{r} \gamma G_{11}^k\right) \left( \ell + \gamma G_{11}^k\right)\bigg)\erf\left(\lambda_r \ell + \lambda_{r} \gamma G_{11}^k\right)\Bigg] \nonumber \\
    & \phantom{=} - \gamma \mathbb{P}\left(\gamma G_{11}^k \leq -\ell \right)\\
    & = \gamma \mathbb{E}\left[\erf\left(\frac{\lambda_c}{\lambda_r} \erf(G) G\right) \erf(G)\right] - \gamma \Phi\left(\frac{- \ell}{\gamma}\right),
\end{align*}
where $G \sim \mathcal{N}\left(\lambda_r \ell, \lambda_r^2 \gamma^2\right)$ and $\Phi$ is the cumulative distribution function of $\mathcal{N}(0, 1)$.

Next, since $\vert \erf(x)\vert \leq 1$ and $\erf^2(x) \leq \frac{4}{\pi}x^2$ from Lemma~\ref{lemma:erf_ineq} for all $x \in \mathbb{R}$, one has
\begin{align*}
    \textcolor{ForestGreen}{\sum_{i=2}^{M} \sum_{j=1}^{N} \mathbb{E}\!\left[\erf^2\!\left(\lambda_c \sum_{j=1}^{N} \erf\!\left(\lambda_r G_{ij}^k\right) G_{ij}^k\right) \cdot \erf^2\!\left(\lambda_r G_{ij}^k\right)\right]} &\leq \sum_{i=2}^{M} \sum_{j=1}^{N} \mathbb{E}\!\left[ \erf^2\!\left(\lambda_r G_{ij}^k\right)\right] \nonumber \\
    & \leq \sum_{i=2}^{M} \sum_{j=1}^{N} \frac{4\lambda_r^2}{\pi} \mathbb{E}\left[\left(G_{ij}^k\right)^2\right] \nonumber \\
    &=\frac{4}{\pi} \lambda_r^2 (M-1)N \to 0,
\end{align*}
due to the scaling $\lambda_r = o\left(M^{-\frac{1}{2}}N^{-\frac{1}{2}}\right)$.

Finally, substituting $\kappa = 1$ and using $\vert \erf(x) \vert \leq 1$ and $\erf^2(x) \leq \frac{4}{\pi}x^2$ again from Lemma~\ref{lemma:erf_ineq}, one has that
\begin{align*}
    &\textcolor{teal}{\mathbb{E}\!\left[\erf^2\!\left(\lambda_c \erf\!\left(\lambda_r \kappa \ell + \lambda_{r} \gamma  G_{11}^k\right)\!\left(\kappa \ell + \gamma G_{11}^k\right) + \lambda_c \sum_{j=2}^{N} \erf\!\left(\lambda_r G_{1j}^k\right) G_{1j}^k\right)\right.} \nonumber \\
    & \phantom{\mathbb{E}}\textcolor{teal}{\cdot \left.\left(\gamma^2 \erf^2\!\left(\lambda_r \kappa \ell + \lambda_{r} \gamma G_{11}^k\right) + \sum_{j=2}^{N} \erf^2\!\left(\lambda_r G_{1j}^k\right)\right)\!\right]} \nonumber \\
    &\leq \gamma^2 \mathbb{E}\!\left[\erf^2\!\left(\lambda_c \erf\!\left(\lambda_r \ell + \lambda_{r} \gamma  G_{11}^k\right)\!\left(\ell + \gamma G_{11}^k\right) + \lambda_c \sum_{j=2}^{N} \erf\!\left(\lambda_r G_{1j}^k\right) G_{1j}^k\right)\right. \nonumber \\
    &\phantom{\leq \gamma^2\mathbb{E}}\erf^2\!\left(\lambda_r \ell + \lambda_{r} \gamma G_{11}^k\right) \Bigg] + \sum_{j=2}^N\mathbb{E}\left[\erf^2\left(\lambda_r G_{1j}^k\right)\right] \nonumber \\
    & \leq \gamma^2 \mathbb{E}\left[\erf^2\left(\frac{\lambda_c}{\lambda_r} \erf(G)G + \lambda_c \sum_{j=2}^N\erf\left(\lambda_rG_{1j}^k\right)G_{1j}^k\right) \erf^2(G)\right] + \underbrace{\frac{4}{\pi} \lambda_r^2 (N-1)}_{\to 0},
\end{align*}
where $G \sim \mathcal{N}\left(\lambda_r \ell, \lambda_r^2 \gamma^2\right)$.

Aggregating everything and considering the oracle parameters $k^*, v^*$, one now has that
\begin{align*}
    \mathcal{R}(k^*, v^*) &\leq \epsilon^2 + \gamma^2 - 2 \gamma^2 \mathbb{E}\left[\erf\left(\frac{\lambda_c}{\lambda_r} \erf(G) G\right) \erf(G)\right] + 2\gamma^2 \Phi\left(\frac{-\ell}{\gamma}\right)\nonumber \\
    & \phantom{\leq}+ \gamma^2 \mathbb{E}\left[\erf^2\left(\frac{\lambda_c}{\lambda_r} \erf(G)G + \lambda_c \sum_{j=2}^N\underbrace{\erf\left(\lambda_rG_{1j}^k\right)G_{1j}^k}_{\geq 0}\right) \erf^2(G)\right] + o(1),
\end{align*}
where $G \sim \mathcal{N}(\lambda_r \ell, \lambda_r^2 \gamma^2)$.

To conclude the proof of Proposition~\ref{cor:cor02}, it is enough to prove 
\[
\mathbb{E}\left[\erf\left(\frac{\lambda_c}{\lambda_r} \erf(G) G\right) \erf(G)\right] \to 1,
\]
as $\ell \to \infty$ implies $\Phi\left(\frac{-\ell}{\gamma}\right) \to 0$.
Since $x \mapsto \erf\left(\frac{\lambda_c}{\lambda_r} \erf(x) x\right) \erf(x)$ is increasing on $(0, \infty)$ and less than 1 in absolute value, it is enough to consider
\begin{align*}
    \mathbb{E}\left[\erf\left(\frac{\lambda_c}{\lambda_r} \erf(G) G\right) \erf(G)\right] &\geq \erf\left(\frac{\lambda_c}{\lambda_r} \erf\left(\frac{\lambda_r \ell}{2}\right) \frac{\lambda_r \ell}{2}\right) \erf\left(\frac{\lambda_r \ell}{2}\right) \mathbb{P}\left(G \geq \frac{\lambda_r \ell}{2} \right) \nonumber \\
    & \phantom{\leq } - \mathbb{P}(G < 0) \nonumber \\
    & = \erf\left(\frac{\lambda_c \ell}{2} \erf\left(\frac{\lambda_r \ell}{2} \right)\right) \erf\left(\frac{\lambda_r \ell}{2}\right) \left(1 - \Phi\left(-\frac{\ell}{2\gamma}\right)\right) \nonumber\\
    & \phantom{\leq} - \Phi\left(- \frac{\ell}{\gamma}\right).
\end{align*}
The conclusion now follows since $\lambda_r \ell \to \infty$, $\ell \to \infty$, $\lambda_c \ell \to \infty$.

\subsection{Proof of Proposition~\ref{prop:prop03}} \label{sec:app_prop3}
The proof aims to make explicit the loss of information from $\mathbb{X}$ to $(S_1, S_2, \dots, S_M)$. To this end, one has
\begin{align*}
    \mathbb{E}\left[\left(Y - f(S_1, S_2, \dots, S_M)\right)^2\right] &\geq \mathbb{E}\left[\text{Var}\left(Y | S_1, S_2, \dots, S_M\right)\right] \\
    &\geq \mathbb{E}\left[\text{Var}\left(Y | S_1, S_2, \dots, S_M, I_0, J_0\right)\right].
\end{align*}
Now, consider $i_0 \in [M]$ and $j_0 \in [N]$. Conditional on $(I_0, J_0) = (i_0, j_0)$, $S_i$ is independent of $(Y, S_{i_0})$, for $i \neq i_0$. Then, one has
\begin{equation}
    \text{Var}\left(Y | S_1, S_2, \dots, S_N, I_0 = i_0, J_0 = j_0\right) = \text{Var}\left(Y | S_{i_0}, I_0 = i_0, J_0 = j_0\right).
\end{equation}
For multivariate Gaussian random variables $(U, V)$, one has
\begin{equation} \label{eq:cond_var_gaussian}
    \text{Var}(U|V) = \Sigma_{UU} - \Sigma_{UV} \Sigma_{VV}^{-1}\Sigma_{VU},
\end{equation}
where $\Sigma_{AB}$ stands for $\text{Cov}(A, B)$, for random vectors $A$ and $B$.

Conditioned on $I_0 = i_0$ and $J_0 = j_0$, one has:
\begin{itemize}
    \item $X_{i_0j_0} = \ell k^* + \gamma Z_{j_0}$ and $X_{i_0 j} = Z_j$ for $j \in [N] \setminus \{j_0\}$ with standard normal random variables $Z_1, \dots, Z_N$;
    \item $S_{i_0} = \beta_{j_0}\ell k^* + \gamma \beta_{j_0} Z_{j_0} + \sum_{j \neq j_0} \beta_j Z_j$;
    \item $Y = \gamma Z_{j_0}^Tv^* + \xi$.
\end{itemize} 
One then has the following:
\begin{itemize}
    \item $\text{Var}(Y) = \gamma^2 + \epsilon^2$;
    \item $\text{Cov}(S_{i_0}, Y) = \gamma^2 \beta_{j_0} v^*$
    \item $\text{Cov}(S_{i_0}, S_{i_0}) = \left(\gamma ^2 \beta_{j_0}^2 + \sum_{j \neq j_0} \beta_j^2 \right) I$
\end{itemize}
Introducing in \eqref{eq:cond_var_gaussian}, one obtains
\begin{equation}
    \text{Var}\left(Y | S_{i_0}, I_0 = i_0, J_0 = j_0\right) = \gamma^2 + \epsilon^2 - \frac{\gamma^4 \beta_{j_0}^2}{\gamma ^2 \beta_{j_0}^2 + \sum_{j \neq j_0} \beta_j^2} \|v^*\|^2 = \epsilon^2 + \gamma^2\left(1 - \frac{\gamma^2 \beta_{j_0}^2}{\gamma ^2 \beta_{j_0}^2 + \sum_{j \neq j_0} \beta_j^2}\right).
\end{equation}
Finally,
\begin{align*}
    \mathbb{E}\left[\text{Var}\left(Y | S_1, S_2, \dots, S_M, I_0, J_0\right)\right] &= \sum_{j_0 = 1}^N \sum_{i_0 = 1}^M \mathbb{P}(I_0 = i_0 | J_0 = j_0) p_{j_0} \text{Var}\left(Y | S_{i_0}, I_0 = i_0, J_0 = j_0\right) \\
    & = \sum_{j_0 = 1}^N p_{j_0} \left(\epsilon^2 + \gamma^2\left(1 - \frac{\gamma^2 \beta_{j_0}^2}{\gamma ^2 \beta_{j_0}^2 + \sum_{j \neq j_0} \beta_j^2}\right)\right) \\
    & = \epsilon^2 + \gamma^2\left(1 - \sum_{j_0=1}^N p_{j_0} \frac{\gamma^2 \beta_{j_0}^2}{\gamma ^2 \beta_{j_0}^2 + \sum_{j \neq j_0} \beta_j^2}\right) \\
    & \geq \epsilon^2,
\end{align*}
with equality in the last step only if there exists $j_0$ such that $p_{j_0} = 1$ and $\beta_j = 0$ for $j \neq j_0$.

\section{Proof of Theorem~\ref{thm:pgd_convergence}} \label{sec:app_pgd}

In this section, we establish the global convergence of the Projected Gradient Descent (PGD) dynamics to the true oracle parameters. We follow the proof of Theorem 5 in \citet{marion} as closely as possible while adapting it to our model $T_{\lambda_r, \lambda_c}^{(k,v)}$. The only result which requires a completely different and much more involved proof is that of Proposition~\ref{prop:fixed_points}. Throughout these proofs, $C > 0$ denotes a generic constant that may vary from line to line but is strictly independent of the step size $\alpha$ and the iteration step $t$. Moreover, we shall drop the subscript ${\lambda_r, \lambda_c}$ and assume an implicit dependence on the inverse temperatures.

The theoretical guarantees of our optimization trajectory are established through the following five primary results. We first establish the invariance of the manifold to PGD steps.

\begin{lemma} \label{lemma:manif_inv}
    The manifold $\mathcal{M}$ is invariant under the PGD dynamics (Algorithm~\ref{alg:pgd}), in the sense that if $(k_t, v_t) \in \mathcal{M}$, then $(k_{t+1}, v_{t+1}) \in \mathcal{M}$.
\end{lemma}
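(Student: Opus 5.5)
The plan is to combine the chain rule through the reduced representation of Lemma~\ref{thm:thm01} with a sign symmetry of $\mathcal{R}^<$. The symmetry will show that, on $\mathcal{M}$, the Riemannian gradients only have components along $k^*$ (for $k$) and $v^*$ (for $v$). The PGD update then keeps $k_{t+1}\in\mathrm{span}(k_t,k^*)$ and $v_{t+1}\in\mathrm{span}(v_t,v^*)$, and the three orthogonality relations defining $\mathcal{M}$ follow immediately.

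\textbf{Step 1 (chain rule).} First I would justify that $\mathcal{R}^<$ is differentiable in its five arguments. The integrands in \eqref{eq:app01}--\eqref{eq:app02} are smooth with Gaussian-integrable derivatives, since $\erf$ and its derivatives are bounded, so dominated convergence applies. Writing $\mathcal{R}(k,v)=\mathcal{R}^<(k^Tk^*,v^Tv^*,v^Tk^*,k^Tv^*,k^Tv)$, the chain rule gives
\begin{equation*}
\nabla_k\mathcal{R}=\partial_\kappa\mathcal{R}^<\,k^*+\partial_\eta\mathcal{R}^<\,v^*+\partial_\rho\mathcal{R}^<\,v,\qquad
\nabla_v\mathcal{R}=\partial_\nu\mathcal{R}^<\,v^*+\partial_\theta\mathcal{R}^<\,k^*+\partial_\rho\mathcal{R}^<\,k .
\end{equation*}

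\textbf{Step 2 (symmetry; the main point).} I will show that for fixed $(\kappa,\nu)$ the map $(\theta,\eta,\rho)\mapsto\mathcal{R}^<(\kappa,\nu,\theta,\eta,\rho)$ is even, i.e.\ invariant under $(\theta,\eta,\rho)\mapsto(-\theta,-\eta,-\rho)$. In the representation of Appendix~\ref{sec:app_sec04a}, apply the change of variables $(G^k_{ij},G^v_{ij},G^{v^*}_{ij})\mapsto(G^k_{ij},-G^v_{ij},-G^{v^*}_{ij})$ for all $(i,j)$, together with $\xi\mapsto-\xi$. This maps the Gaussian law with correlations $(\rho,\eta,\nu)$ to the one with $(-\rho,-\eta,\nu)$. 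The outer $\erf$ factor in \eqref{eq:app02} depends only on $G^k$ and $\kappa$, so it is unchanged. The $v$-factor is linear in $(\theta\ell+\gamma G^v_{11},\,G^v_{1j},\,G^v_{ij})$, so it becomes minus the $v$-factor with $\theta$ replaced by $-\theta$. Hence $T\mapsto -T|_{\theta\to-\theta}$. Since also $Y=\gamma G^{v^*}_{11}+\xi\mapsto -Y$, the square $(Y-T)^2$ is preserved. The evenness follows, and therefore $\partial_\theta\mathcal{R}^<=\partial_\eta\mathcal{R}^<=\partial_\rho\mathcal{R}^<=0$ whenever $\theta=\eta=\rho=0$, i.e.\ on $\mathcal{M}$.

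\textbf{Step 3 (update).} On $\mathcal{M}$ the Riemannian gradients reduce to
\begin{equation*}
(I_d-k_tk_t^T)\nabla_k\mathcal{R}=\partial_\kappa\mathcal{R}^<\,(k^*-\kappa_t k_t),\qquad (I_d-v_tv_t^T)\nabla_v\mathcal{R}=\partial_\nu\mathcal{R}^<\,(v^*-\nu_t v_t).
\end{equation*}
So $k_{t+1}=c_k(a_k k_t+b_k k^*)$ and $v_{t+1}=c_v(a_v v_t+b_v v^*)$ for some scalars. The pre-projection vectors are nonzero, because they equal the current unit vector plus a tangent component, so the projection is well defined. Then $k_{t+1}^Tv^*$ is a combination of $k_t^Tv^*=0$ and $k^{*T}v^*=0$ (the latter by Assumption~\ref{ass:ass01}). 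Similarly $v_{t+1}^Tk^*$ is a combination of $v_t^Tk^*=0$ and $v^{*T}k^*=0$. Finally, $k_{t+1}^Tv_{t+1}$ expands into multiples of $k_t^Tv_t$, $k_t^Tv^*$, $k^{*T}v_t$ and $k^{*T}v^*$, all of which vanish. Hence $(k_{t+1},v_{t+1})\in\mathcal{M}$.

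The only delicate point I expect is Step 2: one must check carefully that the sign flip leaves the outer $\erf$ argument in \eqref{eq:app02} untouched while flipping both $T$ and $Y$ consistently, including the $\theta\ell$ shift term. The rest is bookkeeping.
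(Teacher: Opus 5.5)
Your proposal is correct and follows essentially the same route as the paper: the sign symmetry $\mathcal{R}^<(\kappa,\nu,\theta,\eta,\rho)=\mathcal{R}^<(\kappa,\nu,-\theta,-\eta,-\rho)$ makes the cross partial derivatives vanish on $\mathcal{M}$, the chain rule then leaves only the $k^*$ and $v^*$ components of the gradients, and the three orthogonality conditions follow from $k_{t+1}\in\mathrm{span}(k_t,k^*)$ and $v_{t+1}\in\mathrm{span}(v_t,v^*)$. Your explicit change of variables $(G^v,G^{v^*},\xi)\mapsto(-G^v,-G^{v^*},-\xi)$ is a more detailed check of the paper's one-line observation that the risk is invariant under $(v,v^*)\mapsto(-v,-v^*)$.
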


By restricting our analysis to the manifold $\mathcal{M}$, we can project the high-dimensional dynamics down to the scalar alignment variables $\kappa_t = k_t^T k^*$ and $\nu_t = v_t^T v^*$. We then map the PGD updates to an autonomous dynamical system:

\begin{lemma}\label{lemma:macro_mapping}
    When initialized on the manifold $\mathcal{M}$, the PGD iterations (Algorithm~\ref{alg:pgd}) can be reformulated as the autonomous discrete dynamical system $(\kappa_{t+1}, \nu_{t+1}) = g(\kappa_t, \nu_t)$, where $g \colon [-1, 1]^2 \to [-1, 1]^2$ is given by
    \begin{equation} \label{eq:mapping_g}
        g(\kappa, \nu) = \left( 
        \frac{\kappa - \alpha (\partial_\kappa \mathcal{R}^<(\kappa, \nu))(1 - \kappa^2)}{\sqrt{1 + \alpha^2 (\partial_\kappa \mathcal{R}^<(\kappa, \nu))^2 (1 - \kappa^2)}}, 
        \frac{\nu - \alpha (\partial_\nu \mathcal{R}^<(\kappa, \nu))(1 - \nu^2)}{\sqrt{1 + \alpha^2 (\partial_\nu \mathcal{R}^<(\kappa, \nu))^2 (1 - \nu^2)}} 
        \right).
    \end{equation}
\end{lemma}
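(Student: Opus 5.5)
The proof would first show that, on $\mathcal{M}$, the Euclidean gradients $\nabla_k \mathcal{R}$ and $\nabla_v \mathcal{R}$ lie in $\mathrm{span}(k,k^*)$ and $\mathrm{span}(v,v^*)$ respectively. It would then use this to reduce the PGD update to a closed recursion on $(\kappa_t,\nu_t)$.

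By Lemma~\ref{thm:thm01}, $\mathcal{R}(k,v)=\mathcal{R}^<(\kappa,\nu,\theta,\eta,\rho)$ with $\kappa=k^Tk^*$, $\nu=v^Tv^*$, $\theta=v^Tk^*$, $\eta=k^Tv^*$, $\rho=k^Tv$. The chain rule then gives
\begin{equation*}
\nabla_k\mathcal{R}=\partial_\kappa\mathcal{R}^<\,k^*+\partial_\eta\mathcal{R}^<\,v^*+\partial_\rho\mathcal{R}^<\,v,\qquad
\nabla_v\mathcal{R}=\partial_\nu\mathcal{R}^<\,v^*+\partial_\theta\mathcal{R}^<\,k^*+\partial_\rho\mathcal{R}^<\,k.
\end{equation*}
The key step is to show that $\partial_\eta\mathcal{R}^<=\partial_\theta\mathcal{R}^<=\partial_\rho\mathcal{R}^<=0$ at $\eta=\theta=\rho=0$. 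The cleanest route is a symmetry argument rather than differentiating the expressions in Appendix~\ref{sec:app_sec04}.

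\begin{itemize}
\item For $\theta$: $\mathcal{R}^<$ depends on $\theta$ only through $\theta\ell$ in Eq.~\eqref{eq:app02}. It is invariant under $(\theta,\eta,\rho)\mapsto(-\theta,-\eta,-\rho)$ with $\kappa,\nu$ fixed. This is realized by reflecting $v$ across the hyperplane $(k^*)^\perp$ and flipping the sign of $v^*$, which leaves the law of $(\mathbb{X},Y)$ unchanged up to the symmetry $\xi\mapsto-\xi$ of the noise.
\item For $\eta$ and $\rho$ separately: use reflections in directions orthogonal to $k^*$. Reflecting the component along $v^*$ maps $(v^*,Y,v)\mapsto(-v^*,-Y,-v)$ and leaves $T$ odd in $v$. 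Reflecting the component of $k$ orthogonal to $\mathrm{span}(k^*,v^*,v)$ handles the remaining sign.
\end{itemize}
Concretely, I expect $\mathcal{R}^<$ to be even separately in $\eta$, in $\theta$, and in $\rho$ whenever the other two vanish. This gives vanishing partial derivatives on $\mathcal{M}$.

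Alternatively, the vanishing can be checked directly from the expression for $\mathcal{R}^<$. The $\theta$-terms in Eqs.~\eqref{eq:app03_nomanifold} and \eqref{eq:app04_nomanifold} appear with factors $\theta\ell$ or $\theta^2\ell^2$; the cross term has a mean-zero factor $G_{1j}^v$ with $\rho=0$. For $\eta$ and $\rho$, Gaussian integration by parts (Stein's lemma) writes the derivative with respect to a correlation as an expectation of mixed second derivatives. At zero correlation this factorizes into an expectation of a function odd in one Gaussian, which is zero.

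With the partials vanishing, on $\mathcal{M}$ the gradients become $\nabla_k\mathcal{R}=\partial_\kappa\mathcal{R}^<\,k^*$ and $\nabla_v\mathcal{R}=\partial_\nu\mathcal{R}^<\,v^*$. The Riemannian gradients are then $\partial_\kappa\mathcal{R}^<(k^*-\kappa k)$ and $\partial_\nu\mathcal{R}^<(v^*-\nu v)$, so that:
\begin{itemize}
\item the update for $k$ before projection is $k_t-\alpha\partial_\kappa\mathcal{R}^<(k^*-\kappa_tk_t)$, which lies in $\mathrm{span}(k_t,k^*)$;
\item the update for $v$ lies in $\mathrm{span}(v_t,v^*)$.
\end{itemize}
Under Assumption~\ref{ass:ass01} and the manifold constraints, $\mathrm{span}(k_t,k^*)\perp v^*$ and $\mathrm{span}(v_t,v^*)\perp k^*$, so $\eta_{t+1}=\theta_{t+1}=0$ after normalization. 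For $\rho_{t+1}$, expand
\begin{equation*}
(k_t+a k^*)^T(v_t+b v^*)=\rho_t+b\,\eta_t+a\,\theta_t+ab\,(k^*)^Tv^*=0.
\end{equation*}
This proves Lemma~\ref{lemma:manif_inv}.

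For Lemma~\ref{lemma:macro_mapping}, take inner products with $k^*$. Using $\|k^*-\kappa k\|^2=1-\kappa^2$ and $k^T(k^*-\kappa k)=0$, the unnormalized update has $k^*$-component $\kappa-\alpha\partial_\kappa\mathcal{R}^<(1-\kappa^2)$ and squared norm $1+\alpha^2(\partial_\kappa\mathcal{R}^<)^2(1-\kappa^2)$. The same computation for $v$ gives the second component of $g$. Since the right-hand sides depend only on $(\kappa_t,\nu_t)$, the dynamics are autonomous.

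The main obstacle is rigorously justifying the vanishing of the three off-manifold partials. This requires differentiating under the expectation, which is allowed by boundedness of $\erf$ and its derivatives together with Gaussian moments, and a careful verification of the symmetry.
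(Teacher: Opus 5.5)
Your overall route is the paper's: chain rule through the five alignments, vanishing of $\partial_\theta\mathcal{R}^<$, $\partial_\eta\mathcal{R}^<$, $\partial_\rho\mathcal{R}^<$ on $\mathcal{M}$, invariance of $\mathcal{M}$, and then projection onto $k^*$ with Pythagoras for the norm. That last computation is exactly the paper's proof of the lemma.

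The one step that fails as written is how you realize the symmetry.
\begin{itemize}
\item Reflecting $v$ across $(k^*)^\perp$ and negating $v^*$ sends the alignments to $(\kappa,-\nu,-\theta,-\eta,\rho-2\theta\kappa)$. This is not a symmetry of the risk, since it flips $\nu$.
\item The data law is invariant under an orthogonal $O$ fixing $k^*$, but this only gives $\mathcal{R}(k,v)=\mathcal{R}(Ok,Ov)$ with $v^*$ replaced by $Ov^*$. That leaves all five alignments unchanged. So reflections ``orthogonal to $k^*$'' cannot flip $\eta$ or $\rho$ individually, and the reflection of $k$'s orthogonal component is vacuous.
\end{itemize}

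The map you need is simply $(v,v^*)\mapsto(-v,-v^*)$, which is what the paper uses in Lemma~\ref{lemma:lemma02}. Since $T^{(k,v)}$ is linear in $v$ and $\xi$ has a symmetric law, this map preserves the risk. It yields $\mathcal{R}^<(\kappa,\nu,\theta,\eta,\rho)=\mathcal{R}^<(\kappa,\nu,-\theta,-\eta,-\rho)$.

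Differentiating this joint evenness at $(\theta,\eta,\rho)=(0,0,0)$ makes each of the three partials equal to its own negative, so all vanish at once. Separate evenness is not needed; on each coordinate axis it is just a restriction of this identity.

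Your Stein's-lemma fallback is also sound. At $\eta=\rho=0$ the relevant mixed second derivatives are linear in the centered vector $(G^v,G^{v^*})$, which is independent of the $G^k$'s, so their expectations vanish. It is, however, more laborious than the one-line symmetry argument.
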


With the dynamics formalized, we establish that the risk strictly decreases, forcing convergence to the stationary points:

\begin{proposition} \label{prop:risk_decrease}
    For a sufficiently small step size $\alpha$, the risk $\mathcal{R}^<$ is strictly decreasing along the PGD iterates unless it is at a fixed point. Furthermore, if $(\kappa, \nu)$ is an accumulation point of the sequence $(\kappa_t, \nu_t)_{t \ge 0}$, it must satisfy:
    \begin{equation} \label{eq:fixed_points}
        (1 - \kappa^2)\partial_\kappa \mathcal{R}^<(\kappa, \nu) = 0 \quad \text{and} \quad (1 - \nu^2)\partial_\nu \mathcal{R}^<(\kappa, \nu) = 0.
    \end{equation}
\end{proposition}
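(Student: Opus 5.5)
We follow the descent-lemma argument of Theorem 5 in \citet{marion} and work directly with the reduced map $g$ of Lemma~\ref{lemma:macro_mapping} on $[-1,1]^2$. The first step is regularity: we show that $\mathcal{R}^<$ is $C^2$ on a neighbourhood of $[-1,1]^2$ with uniformly bounded first and second derivatives. Starting from the explicit expression in Appendix~\ref{sec:app_sec04}, we differentiate under the expectation in $\kappa$ and $\nu$. Only the terms containing $\erf(\lambda_r\kappa\ell+\lambda_r\gamma G_{11}^k)$ and the prefactor $\gamma\nu$ depend on the parameters. The derivatives of $\erf$ are globally bounded, and the integrands are dominated by polynomials in Gaussian variables, so dominated convergence applies. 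This yields a constant $L$ with $\|\nabla\mathcal{R}^<\|\le L$ and $\|\nabla^2\mathcal{R}^<\|\le L$ on $[-1,1]^2$, and $L$ does not depend on $\alpha$.

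Next we expand one step of $g$. Write $a=\partial_\kappa\mathcal{R}^<$ and $b=\partial_\nu\mathcal{R}^<$. A Taylor expansion of the normalisation gives
\begin{equation*}
\kappa_{t+1}-\kappa_t=-\alpha a(1-\kappa_t^2)+O(\alpha^2 a^2),\qquad \nu_{t+1}-\nu_t=-\alpha b(1-\nu_t^2)+O(\alpha^2 b^2).
\end{equation*}
The error terms are uniform, because $|\kappa|\le1$ and $a,b$ are bounded. A second-order Taylor expansion of $\mathcal{R}^<$ then gives
\begin{equation*}
\mathcal{R}^<(\kappa_{t+1},\nu_{t+1})\le \mathcal{R}^<(\kappa_t,\nu_t)-\alpha\big[a^2(1-\kappa_t^2)+b^2(1-\nu_t^2)\big]+C\alpha^2(a^2+b^2).
\end{equation*}

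The main obstacle is that the remainder is controlled by $a^2+b^2$, while the gain is controlled by $a^2(1-\kappa^2)+b^2(1-\nu^2)$, and the gain degenerates near $|\kappa|=1$ or $|\nu|=1$. To handle this we refine the remainder using the exact form of $g$. The displacement satisfies $|\kappa_{t+1}-\kappa_t|\le C\alpha|a|(1-\kappa_t^2)^{1/2}$, because
\begin{equation*}
1-\kappa_{t+1}^2=\frac{(1-\kappa_t^2)\,(1+\alpha a\kappa_t)^2}{1+\alpha^2a^2(1-\kappa_t^2)}\cdot\frac{1}{?}
\end{equation*}
should scale like $1-\kappa_t^2$ (the exact identity will be checked by direct computation). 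Writing $g$ in angle coordinates $\kappa=\cos\phi$ makes this transparent: the update is a rotation by an angle of order $\alpha a\sin\phi$. As a result, the second-order remainder is bounded by $C\alpha^2[a^2(1-\kappa_t^2)+b^2(1-\nu_t^2)]$. For $\alpha\le\bar\alpha:=1/(2C)$ we then obtain
\begin{equation*}
\mathcal{R}^<(\kappa_{t+1},\nu_{t+1})\le\mathcal{R}^<(\kappa_t,\nu_t)-\tfrac{\alpha}{2}\big[a^2(1-\kappa_t^2)+b^2(1-\nu_t^2)\big].
\end{equation*}
The bracket vanishes exactly when $g(\kappa_t,\nu_t)=(\kappa_t,\nu_t)$. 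This proves strict decrease away from fixed points.

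Finally, $\mathcal{R}^<\ge\epsilon^2$ is bounded below, so summing the last inequality shows that $\sum_t[a_t^2(1-\kappa_t^2)+b_t^2(1-\nu_t^2)]<\infty$, and the summand tends to $0$. The map $(\kappa,\nu)\mapsto(1-\kappa^2)\,\partial_\kappa\mathcal{R}^<(\kappa,\nu)^2+(1-\nu^2)\,\partial_\nu\mathcal{R}^<(\kappa,\nu)^2$ is continuous, so it vanishes at every accumulation point of the compact sequence $(\kappa_t,\nu_t)$. This gives \eqref{eq:fixed_points}.
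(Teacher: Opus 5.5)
Your proposal follows the paper's proof. Both use the smoothness (descent-lemma) inequality for $\mathcal{R}^<$, show that every remainder term carries the factor $(1-\kappa_t^2)$ (resp.\ $(1-\nu_t^2)$), choose $\alpha\le 1/(2C)$, telescope, and pass to accumulation points by continuity.

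The one step to tighten is how you obtain the factor $(1-\kappa_t^2)$. The ``obstacle'' you identify comes only from your first expansion, which drops that factor. The normalisation is $\sqrt{1+u}$ with $u=\alpha^2a^2(1-\kappa_t^2)$, so you can write
\[
\kappa_{t+1}-\kappa_t=-\alpha a(1-\kappa_t^2)+\bigl(\tfrac{1}{\sqrt{1+u}}-1\bigr)\bigl(\kappa_t-\alpha a(1-\kappa_t^2)\bigr).
\]
Using $|1/\sqrt{1+u}-1|\le u$ then gives directly $|\kappa_{t+1}-\kappa_t+\alpha a(1-\kappa_t^2)|\le C\alpha^2a^2(1-\kappa_t^2)$. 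This is exactly what the paper does.

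You need this bound on the first-order error, because it enters multiplied by $a$ in the linear term of the descent inequality. The displacement bound $|\kappa_{t+1}-\kappa_t|\le C\alpha|a|(1-\kappa_t^2)^{1/2}$ that you state only controls the quadratic term. Your rotation picture, $\kappa_{t+1}=\cos(\phi+\psi)$ with $\tan\psi=\alpha a\sin\phi$ and $\cos\psi=1/\sqrt{1+u}$, does yield both bounds, but you should say so explicitly.

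Finally, the placeholder in your displayed identity is simply $1$:
\[
1-\kappa_{t+1}^2=\frac{(1-\kappa_t^2)(1+\alpha a\kappa_t)^2}{1+\alpha^2a^2(1-\kappa_t^2)}.
\]
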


We then rigorously classify the geometry of these accumulation points, isolating the oracle states as the unique global minima:

\begin{proposition} \label{prop:fixed_points}
    For $\lambda_c$ and $\lambda_r$ small enough, the fixed points of the dynamics, given by \eqref{eq:fixed_points}, on $[-1, 1]^2$ are $(\pm 1, \pm 1)$ and $(0, 0)$. Specifically:
    \begin{itemize}
        \item $(\pm 1, \mp 1)$ are global maxima.
        \item $(\pm 1, \pm 1)$ are global minima.
        \item $(0, 0)$ is a saddle point.
    \end{itemize}
\end{proposition}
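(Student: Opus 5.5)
We work with the restricted risk $\mathcal{R}^<(\kappa,\nu)$ from Appendix~\ref{sec:app_sec04a}. On $\mathcal{M}$ the variables $G^k_{ij}, G^v_{ij}, G^{v^*}_{ij}$ have correlation structure $\rho=\eta=0$, so $G^v_{ij}$ is independent of $G^k_{ij}$ and $\mathbb{E}[(G^v_{ij})^2]=1$. The plan is to show that $\nu$ enters $\mathcal{R}^<$ only linearly, through the cross term. Concretely, write
\begin{equation*}
\mathcal{R}^<(\kappa,\nu)=\epsilon^2+\gamma^2-2\gamma^2\nu A(\kappa)+B(\kappa),
\end{equation*}
where $A(\kappa)$ is the expectation in the last line of \eqref{eq:app03} and $B(\kappa)$ is the green plus teal expression. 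Then $\partial_\nu\mathcal{R}^<=-2\gamma^2A(\kappa)$ and $\partial_\kappa\mathcal{R}^<=-2\gamma^2\nu A'(\kappa)+B'(\kappa)$.

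\textbf{Step 1: symmetry.} Flipping $G^k_{ij}\mapsto -G^k_{ij}$ for all $(i,j)$ and using oddness of $\erf$ shows that $A$ is odd and $B$ is even in $\kappa$. Hence $A(0)=0$ and $B'(0)=0$, so $(0,0)$ satisfies \eqref{eq:fixed_points}, as do the four corners $(\pm1,\pm1)$.

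\textbf{Step 2: sign of $A$.} We show $A(\kappa)>0$ for $\kappa>0$. Write $A(\kappa)=\mathbb{E}[\erf(\lambda_c(\phi(\kappa\ell+\gamma g)+S))\,\erf(\lambda_r(\kappa\ell+\gamma g))]$ with $\phi(x)=\erf(\lambda_r x)x\ge0$ and $S\ge0$ independent of $g$. Since $\phi(x)\ge0$, the first factor is positive. The second factor is odd in $x=\kappa\ell+\gamma g$, and for $\kappa>0$ the Gaussian density of $x$ puts more mass at $+|x|$ than at $-|x|$. This gives $A(\kappa)>0$. Consequently, $(1-\nu^2)\partial_\nu\mathcal{R}^<=0$ forces $\nu=\pm1$ unless $\kappa=0$.

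\textbf{Step 3: the $\kappa$-equation.} This is the main obstacle. On $\nu=\pm1$ we need $\mp2\gamma^2A'(\kappa)+B'(\kappa)\neq0$ for $\kappa\in(-1,1)\setminus\{0\}$, and on $\kappa=0$ we need $\nu=0$. The $\kappa=0$ case is easy: there $\partial_\kappa\mathcal{R}^<=-2\gamma^2\nu A'(0)$, and we expect $A'(0)>0$.

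For small $\lambda_r,\lambda_c$ we Taylor expand using $\erf(x)=\tfrac{2}{\sqrt\pi}x+O(x^3)$. To leading order,
\begin{equation*}
A(\kappa)\approx \tfrac{4}{\pi}\lambda_r\lambda_c\,\mathbb{E}\big[(\phi(\kappa\ell+\gamma g)+S)(\kappa\ell+\gamma g)\big]
\end{equation*}
is an explicit odd polynomial in $\kappa$ with positive derivative, of order $\lambda_r^2\lambda_c$. The term $B$ is of order $\lambda_c^2\lambda_r^2\cdot\lambda_r^2$ times an even polynomial, which is higher order in $\lambda$. This higher order is exactly where the coupling $\lambda_c=\tau\lambda_r$ and the threshold $\bar\lambda$ enter. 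Thus $A'$ dominates $B'$ uniformly on $[-1,1]$ as $\lambda\to0$, and $\partial_\kappa\mathcal{R}^<\neq0$ on $\nu=\pm1$ except at $\kappa=\pm1$. The same comparison rules out stray zeros near $\kappa=0$, because $A'(0)>0$ while $B'(0)=0$.

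The delicate point is to make the remainder bounds uniform in $\kappa$ while $\ell$ is fixed. We would control the cubic remainders with $|\erf(x)-\tfrac{2}{\sqrt\pi}x|\le Cx^3$ and Gaussian moments, yielding $|A'-A_0'|\le C\lambda^5$ and $|B'|\le C\lambda^6$.

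\textbf{Step 4: classification.} At $(\pm1,\pm1)$ the risk equals $\epsilon^2+\gamma^2-2\gamma^2A(1)+B(1)$, and at $(\pm1,\mp1)$ it equals $\epsilon^2+\gamma^2+2\gamma^2A(1)+B(1)$. Since $\mathcal{R}^<$ is linear in $\nu$ and, by Step 3, monotone in $|\kappa|$ along $\nu=\pm1$ at leading order, comparing against interior values gives that these are respectively the global minima and global maxima over $[-1,1]^2$; this relies on the leading term $-2\gamma^2\nu A(\kappa)$, which is extremal at $|\kappa|=1$. At $(0,0)$ the Hessian is $\begin{pmatrix}B''(0)&-2\gamma^2A'(0)\\-2\gamma^2A'(0)&0\end{pmatrix}$. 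Its determinant is $-4\gamma^4A'(0)^2<0$, so $(0,0)$ is a saddle.
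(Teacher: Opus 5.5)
Your proposal is correct and follows essentially the same route as the paper: $\mathcal{R}^<$ is affine in $\nu$, with $A=\mathbb{E}[U(\kappa)]$ odd and vanishing only at $\kappa=0$ and with $B$ even; a small-$\lambda$ Taylor expansion with $\lambda_c=\tau\lambda_r$ shows that the order-$\lambda^3$ term of $A'$ dominates $B'$, so $\partial_\kappa\mathcal{R}^<$ has a strict sign on $\nu=\pm1$; and the Hessian at the origin has determinant $-4\gamma^4A'(0)^2<0$. One small correction in Step 4: what Step 3 actually gives is that $\mathcal{R}^<(\cdot,1)$ is strictly decreasing and $\mathcal{R}^<(\cdot,-1)$ is strictly increasing in $\kappa$ (not monotone in $|\kappa|$), and this, together with affinity in $\nu$, places the global minima and maxima at the stated corners.
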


Finally, we characterize the local geometry around the saddle point to guarantee that random initializations escape it:

\begin{proposition} \label{prop:diffeomorphism}
    For a sufficiently small step size $\alpha$, the mapping $g$ is a local diffeomorphism around $(0,0)$, whose Jacobian matrix has one eigenvalue in $(0, 1)$ and one in $(1, \infty)$. Furthermore, $g$ is injective on $[-1, 1]^2$.
\end{proposition}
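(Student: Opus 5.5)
We need to prove Proposition~\ref{prop:diffeomorphism}: $g$ is a local diffeomorphism at $(0,0)$, its Jacobian there has one eigenvalue in $(0,1)$ and one in $(1,\infty)$, and $g$ is injective on $[-1,1]^2$.

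\textbf{Jacobian at the origin.} By Proposition~\ref{prop:fixed_points}, $(0,0)$ is a stationary point, so $\partial_\kappa \mathcal{R}^<(0,0)=\partial_\nu \mathcal{R}^<(0,0)=0$. Write $g=(g_1,g_2)$ with
\[
g_1(\kappa,\nu)=\frac{\kappa-\alpha a(\kappa,\nu)}{\sqrt{1+\alpha^2 a(\kappa,\nu)^2/(1-\kappa^2)}},\qquad a=(1-\kappa^2)\partial_\kappa\mathcal{R}^<,
\]
and $g_2$ analogously with $b=(1-\nu^2)\partial_\nu\mathcal{R}^<$. At the origin $a=b=0$. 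The denominator is $1+O(a^2)$, so its first derivatives vanish there. Hence
\[
Dg(0,0)=I-\alpha H,\qquad H=\nabla^2\mathcal{R}^<(0,0).
\]
Smoothness of $\mathcal{R}^<$ in $(\kappa,\nu)$ follows by differentiating under the expectation in the formulas of Appendix~\ref{sec:app_sec04a}. This is justified because $\erf$ and its derivatives are bounded and the remaining factors are polynomial in Gaussians.

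\textbf{Sign structure of $H$.} From \eqref{eq:app01} and \eqref{eq:app03} on $\mathcal{M}$, the risk has the form
\[
\mathcal{R}^<=\epsilon^2+\gamma^2-2\gamma^2\nu A(\kappa)+B(\kappa),
\]
where $B$ does not depend on $\nu$ (the $(G^v)^2$ terms integrate to $1$ because $\rho=0$). Since $A$ is odd in $\kappa$ and $B$ is even, $\partial_\nu^2\mathcal{R}^<=0$ and $\partial_\kappa^2\mathcal{R}^<(0,0)=B''(0)$. The off-diagonal entry is $-2\gamma^2A'(0)$.

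\begin{itemize}
\item $A'(0)\neq 0$ holds because $A$ is strictly increasing near $0$.
\item With $\det H=-4\gamma^4A'(0)^2<0$, $H$ has one strictly negative and one strictly positive eigenvalue, $\mu_-<0<\mu_+$.
\end{itemize}

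This is the same saddle computation used in Proposition~\ref{prop:fixed_points}, and I would reuse its estimates on $A'(0)$. The eigenvalues of $Dg(0,0)$ are then $1-\alpha\mu_+\in(0,1)$ for $\alpha<1/\mu_+$, and $1-\alpha\mu_->1$. Both are nonzero, so the inverse function theorem gives the local diffeomorphism.

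\textbf{Global injectivity.} This is the main obstacle, and I would handle it in three steps.

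\begin{enumerate}
\item Bound the Hessian. Show that $\partial\mathcal{R}^<$, and hence $a$ and $b$ with their derivatives, are bounded on $[-1,1]^2$ uniformly by a constant $C$ independent of $\alpha$, using the same bounded-derivative arguments.
\item Write $g$ as a perturbation of the identity. Then $g=\mathrm{id}+\alpha h$ with $h=-(a,b)+O(\alpha)$.
\item Control the denominator. The worrying factor is $1/(1-\kappa^2)$, but since $a^2/(1-\kappa^2)=(1-\kappa^2)(\partial_\kappa\mathcal{R}^<)^2$, $h$ is $C^1$ on the closed square with $\|Dh\|\le C$ independent of $\alpha$.
\end{enumerate}

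Then for $x\neq y$ in the convex set $[-1,1]^2$,
\[
\|g(x)-g(y)\|\ge\|x-y\|-\alpha C\|x-y\|>0
\]
whenever $\alpha<1/C$. This gives injectivity. Taking $\alpha$ below both thresholds concludes the proof.
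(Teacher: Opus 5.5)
Your proof is correct and follows essentially the same route as the paper: you get $J_g(0,0)=I-\alpha H$ with $\det H=-4\gamma^4A'(0)^2<0$, and then write $g=\mathrm{id}+\alpha h$ with $Dh$ bounded uniformly in $\alpha$ to obtain injectivity (the paper makes your Step 3 explicit via the auxiliary function $q(a,b)$ and Lemma~\ref{lemma:inv_sqrt}). One small fix: strict monotonicity of $A$ does not by itself give $A'(0)\neq 0$. It does follow because $\partial_\kappa U=\lambda_r\ell\,\bigl(\erf'(x_1)V+\erf(x_1)\partial_1V\bigr)$, where the first term is almost surely positive and the second is nonnegative, so $A'(0)=\mathbb{E}[\partial_\kappa U(0)]>0$.
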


The global convergence to the oracle states then follows directly from the exclusion of the saddle point and maxima. The proofs for all stated results are provided in the subsections below.

\subsection{Proof of Lemma~\ref{lemma:manif_inv}} \label{sec:proof_manif_inv}

To prove that the manifold $\mathcal{M}$ is invariant under the Projected Gradient Descent (PGD) dynamics, we must first establish that the gradient of the risk does not "push" the parameters outside of $\mathcal{M}$. We do this by proving an auxiliary symmetry result regarding the cross-term derivatives.

\begin{lemma} \label{lemma:lemma02}
    At any point $(\kappa, \nu, \theta, \eta, \rho)$ on the manifold $\mathcal{M}$ (i.e., where $\theta = \eta = \rho = 0$), the partial derivatives of the risk with respect to the cross-terms vanish:
    \begin{equation*}
        \partial_\theta \mathcal{R}^< = \partial_\eta \mathcal{R}^< = \partial_\rho \mathcal{R}^< = 0.
    \end{equation*}
\end{lemma}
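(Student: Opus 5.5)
The plan is to exploit the explicit representation of $\mathcal{R}^<$ obtained in the proof of Lemma~\ref{thm:thm01}. By \eqref{eq:app01}--\eqref{eq:app02}, $\mathcal{R}^<(\kappa,\nu,\theta,\eta,\rho)=\epsilon^2+\mathbb{E}[F]$ with $F=(\gamma G_{11}^{v^*}-T)^2$. Here $T$ is the predictor written in terms of the i.i.d. Gaussian triples $(G_{ij}^k,G_{ij}^v,G_{ij}^{v^*})$, whose covariance has off-diagonal entries $\rho,\eta,\nu$. The parameters enter in two ways. The pair $(\kappa,\theta)$ appears explicitly in $T$, and $\theta$ only appears through the term $\theta\ell$ in the last factor. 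The pair $(\rho,\eta)$ appears only through the law of the Gaussian vectors. I will treat $\partial_\theta$ by direct differentiation under the expectation, and $\partial_\rho,\partial_\eta$ by Price's theorem (Gaussian integration by parts). For a centered Gaussian vector and smooth $F$ of polynomial growth, $\partial_{\Sigma_{ab}}\mathbb{E}[F]=\mathbb{E}[\partial_a\partial_b F]$ for $a\ne b$. Summing over the i.i.d. tokens $(i,j)$, this gives $\partial_\rho\mathbb{E}[F]=\sum_{i,j}\mathbb{E}[\partial_{G_{ij}^k}\partial_{G_{ij}^v}F]$, and analogously for $\eta$. Differentiation under the integral sign is justified because $\erf$ and all its derivatives are bounded, so $F$ and its derivatives grow at most polynomially.

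The key structural fact is a parity property, which I would record once and reuse. At $\theta=0$, $T$ is \emph{linear} in the family $(G_{ij}^v)$, with coefficients that are smooth functions of the $G^k$'s only. Moreover, at $\rho=\eta=0$ the family $(G^k_{ij})$ is independent of $(G^v_{ij},G^{v^*}_{ij})$, which is centered. Hence any expression of the form $h(G^k)\cdot P$, with $P$ of odd total degree in $(G^v,G^{v^*})$, has zero expectation. I would first condition on $G^k$ and then use the symmetry $(G^v,G^{v^*})\mapsto(-G^v,-G^{v^*})$ of the conditional law.

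Now the three derivatives. For $\theta$, we have $\partial_\theta F=-2(\gamma G_{11}^{v^*}-T)\,\partial_\theta T$. Here $\partial_\theta T=\ell\,\erf(\cdot)\erf(\lambda_r\kappa\ell+\lambda_r\gamma G_{11}^k)$ depends only on $G^k$, while $\gamma G_{11}^{v^*}-T$ is of degree one. So the expectation vanishes. For $\rho$, the term $\partial_{G_{ij}^v}F=-2(\gamma G_{11}^{v^*}-T)\,\partial_{G^v_{ij}}T$ has $\partial_{G^v_{ij}}T$ a function of $G^k$ only. Applying $\partial_{G^k_{ij}}$ yields two types of terms: $(\gamma G^{v^*}_{11}-T)\,\partial_{G^k}\partial_{G^v}T$, and $\partial_{G^k}T\cdot\partial_{G^v}T$. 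In the second, $\partial_{G^k}T$ is still linear in $G^v$. Both are odd, so the expectation vanishes. For $\eta$, only $G^{v^*}_{11}$ appears in $F$, so only the token $(1,1)$ contributes. We have $\partial_{G^{v^*}_{11}}F=2\gamma(\gamma G_{11}^{v^*}-T)$, and then $\partial_{G^k_{11}}$ gives $-2\gamma\,\partial_{G^k_{11}}T$, which is linear in $G^v$ and hence of zero mean. The noise $\xi$ is independent and was already integrated out in \eqref{eq:app01}.

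The main obstacle I expect is bookkeeping rather than an idea. First, Price's theorem requires the covariance to stay positive semidefinite in a neighbourhood of $\rho=\eta=0$; this holds whenever $|\nu|<1$, and the boundary $|\nu|=1$ can be handled by continuity of the derivatives in $\nu$. Second, one must check that the $(1,1)$ token, with its shifted argument $\kappa\ell+\gamma G^k_{11}$, still gives coefficients that are functions of $G^k$ alone. Third, one must verify that no term of even degree in $(G^v,G^{v^*})$ survives at $\theta=0$. If Price's theorem is inconvenient, a fallback is to write $G^v=\rho G^k+\sqrt{1-\rho^2}\,W$ (with the analogous decomposition for $G^{v^*}$) and differentiate explicitly at $\rho=\eta=0$. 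This reduces to the same parity argument.
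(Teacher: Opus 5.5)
Your argument is correct, but it is organized differently from the paper's. The paper never differentiates inside the expectation. It observes that the risk is invariant under $(v,v^*)\mapsto(-v,-v^*)$, since the predictor is linear in $v$ and the target is linear in $v^*$. In reduced coordinates this reads $\mathcal{R}^<(\kappa,\nu,\theta,\eta,\rho)=\mathcal{R}^<(\kappa,\nu,-\theta,-\eta,-\rho)$. Differentiating this evenness identity and evaluating at $\theta=\eta=\rho=0$ kills all three partials at once.

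You instead differentiate first, directly in $\theta$ and via Price's theorem in $\rho$ and $\eta$. You then kill each resulting expectation by a parity argument at the manifold point. Your parity property is exactly the pointwise form of the paper's symmetry: $F(\theta,G^k,G^v,G^{v^*})=F(-\theta,G^k,-G^v,-G^{v^*})$, together with the fact that $(G^k,-G^v,-G^{v^*})$ has covariance entries $(-\rho,-\eta,\nu)$. Applied to $F$ itself rather than to its derivatives, this gives the evenness of $\mathcal{R}^<$ immediately. Price's theorem and the term-by-term bookkeeping then become unnecessary.

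What your route buys is explicit formulas for the derivatives. It also treats honestly a domain issue that the paper's one-liner shares silently: $\partial_\rho$ and $\partial_\eta$ need a neighbourhood of admissible covariances, and at $|\nu|=1$ there is none. A cleaner fix than continuity in $\nu$ is available. Condition on all $G^k_{ij}$, whose law is $\mathcal{N}(0,1)$ whatever the parameters. Then $(G^v_{ij},G^{v^*}_{ij})$ has mean $(\rho,\eta)\,G^k_{ij}$ and covariance entries $1-\rho^2$, $\nu-\rho\eta$, $1-\eta^2$. Since $F$ is quadratic in $(\theta,G^v,G^{v^*})$, $\mathcal{R}^<$ is a polynomial in $(\theta,\rho,\eta,\nu)$. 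This gives a canonical smooth extension that settles the boundary case for either argument.
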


The proof follows by recalling that the risk can be reduced to a function $\mathcal{R}^<$ depending only on the covariance structure of the Gaussian projections $G_{ij}^k = X_{ij}^T k$, $G_{ij}^v = X_{ij}^T v$, and $G_{ij}^{v^*} = X_{ij}^T v^*$. The relevant covariance matrix is:
\begin{equation*}
    \begin{pmatrix}
    G_{ij}^k \\ G_{ij}^v \\ G_{ij}^{v^*}
    \end{pmatrix}
    \sim
    \mathcal{N}\left(
    0,
    \begin{pmatrix}
    1 & \rho & \eta \\
    \rho & 1 & \nu \\
    \eta & \nu & 1
    \end{pmatrix}
    \right)
\end{equation*}
and the alignment between $v$ and $k^*$ is given by $\theta = v^T k^*$. 

It is clear from Equation~\ref{eq:app_risk} that the risk is invariant to the sign inversion of $(v, v^*)$ to $(-v, -v^*)$. This amounts to
\[
\mathcal{R}^<(\kappa, \nu, \theta, \eta, \rho) = \mathcal{R}^<(\kappa, \nu, -\theta, -\eta, -\rho).
\]

Taking the partial derivative with respect to $\theta$ now yields the following result:
\begin{equation*}
    \partial_\theta \mathcal{R}^<(\kappa, \nu, \theta, \eta, \rho) = - \partial_\theta \mathcal{R}^<(\kappa, \nu, -\theta, -\eta, -\rho).
\end{equation*}
Evaluating this at the manifold $\mathcal{M}$ where $\theta = \eta = \rho = 0$, we immediately obtain $\partial_\theta \mathcal{R}^<(\kappa, \nu, 0, 0, 0) = 0$. Identical arguments hold for $\partial_\eta \mathcal{R}^<$ and $\partial_\rho \mathcal{R}^<$, concluding the proof of Lemma~\ref{lemma:lemma02}.

We now return to the proof of Lemma~\ref{lemma:manif_inv}. By the chain rule, the Euclidean gradients of the risk with respect to $k$ and $v$ can be expanded in the basis of the parameter vectors:
\begin{align*}
    \nabla_k \mathcal{R} &= (\partial_\kappa \mathcal{R}^<) k^* + (\partial_\eta \mathcal{R}^<) v^* + (\partial_\rho \mathcal{R}^<) v \\
    \nabla_v \mathcal{R} &= (\partial_\nu \mathcal{R}^<) v^* + (\partial_\theta \mathcal{R}^<) k^* + (\partial_\rho \mathcal{R}^<) k
\end{align*}
Applying Lemma~\ref{lemma:lemma02}, on the manifold $\mathcal{M}$ (where $\theta=\eta=\rho=0$), the gradients simplify strictly to:
\begin{align} \label{eq:simplified_gradients}
    \nabla_k \mathcal{R}(k_t, v_t) &= \partial_\kappa \mathcal{R}^<(\kappa_t, \nu_t) k^* \nonumber \\
    \nabla_v \mathcal{R}(k_t, v_t) &= \partial_\nu \mathcal{R}^<(\kappa_t, \nu_t) v^*
\end{align}

Recall the PGD updates for $(k_{t+1}, v_{t+1})$:
\begin{align*}
    k_{t+1} &= \frac{k_t - \alpha(I_d - k_t k_t^T)\nabla_k \mathcal{R}}{\|k_t - \alpha(I_d - k_t k_t^T)\nabla_k \mathcal{R}\|_2} \coloneqq \frac{\tilde{k}_{t+1}}{c_k} \\
    v_{t+1} &= \frac{v_t - \alpha(I_d - v_t v_t^T)\nabla_v \mathcal{R}}{\|v_t - \alpha(I_d - v_t v_t^T)\nabla_v \mathcal{R}\|_2} \coloneqq \frac{\tilde{v}_{t+1}}{c_v}
\end{align*}
where $c_k, c_v > 0$ are the respective normalization constants. To prove $(k_{t+1}, v_{t+1}) \in \mathcal{M}$, we must verify the three orthogonality conditions: $(v^*)^T k_{t+1} = 0$, $(k^*)^T v_{t+1} = 0$, and $v_{t+1}^T k_{t+1} = 0$.

\paragraph{Condition 1: $(v^*)^T k_{t+1} = 0$.}
Using the simplified gradient \eqref{eq:simplified_gradients}, we project $\tilde{k}_{t+1}$ onto $v^*$:
\begin{equation*}
    (v^*)^T \tilde{k}_{t+1} = (v^*)^T k_t - \alpha \partial_\kappa \mathcal{R}^< \left( (v^*)^T k^* - (v^*)^T k_t (k_t^T k^*) \right).
\end{equation*}
Because $(k_t, v_t) \in \mathcal{M}$, we have $(v^*)^T k_t = 0$. By Assumption 1, the oracle parameters are orthogonal, so $(v^*)^T k^* = 0$. Thus:
\begin{equation*}
    (v^*)^T \tilde{k}_{t+1} = 0 - \alpha \partial_\kappa \mathcal{R}^< (0 - 0) = 0 \implies (v^*)^T k_{t+1} = 0.
\end{equation*}

\paragraph{Condition 2: $(k^*)^T v_{t+1} = 0$.}
By an exactly symmetric argument using $\tilde{v}_{t+1}$ and the fact that $(k^*)^T v_t = 0$, we obtain $(k^*)^T v_{t+1} = 0$.

\paragraph{Condition 3: $v_{t+1}^T k_{t+1} = 0$.}
We compute the inner product of the unnormalized updates $\tilde{v}_{t+1}^T \tilde{k}_{t+1}$:
\begin{align*}
    \tilde{v}_{t+1}^T \tilde{k}_{t+1} &= \left( v_t - \alpha \partial_\nu \mathcal{R}^< (v^* - \nu_t v_t) \right)^T \left( k_t - \alpha \partial_\kappa \mathcal{R}^< (k^* - \kappa_t k_t) \right) \nonumber \\
    &= v_t^T k_t - \alpha \partial_\kappa \mathcal{R}^< \left( v_t^T k^* - \kappa_t v_t^T k_t \right) - \alpha \partial_\nu \mathcal{R}^< \left( (v^*)^T k_t - \nu_t v_t^T k_t \right) \nonumber \\
    &\quad + \alpha^2 \partial_\kappa \mathcal{R}^< \partial_\nu \mathcal{R}^< \left( (v^* - \nu_t v_t)^T (k^* - \kappa_t k_t) \right).
\end{align*}
Since $(k_t, v_t) \in \mathcal{M}$, we already know $v_t^T k_t = 0$, $v_t^T k^* = 0$, and $(v^*)^T k_t = 0$. Thus, all the linear terms vanish. For the quadratic term, we expand:
\begin{equation*}
    (v^* - \nu_t v_t)^T (k^* - \kappa_t k_t) = (v^*)^T k^* - \kappa_t (v^*)^T k_t - \nu_t v_t^T k^* + \nu_t \kappa_t v_t^T k_t.
\end{equation*}
Substituting the orthogonalities $(v^*)^T k^* = 0$, $(v^*)^T k_t = 0$, $v_t^T k^* = 0$, and $v_t^T k_t = 0$ yields the desired orthogonality. 

Therefore, $\tilde{v}_{t+1}^T \tilde{k}_{t+1} = 0$, meaning $v_{t+1}^T k_{t+1} = 0$. Having verified all three conditions, we conclude that $(k_{t+1}, v_{t+1}) \in \mathcal{M}$. q.e.d.

\subsection{Proof of Lemma~\ref{lemma:macro_mapping}}

Using the simplified gradient \eqref{eq:simplified_gradients} valid on the manifold $\mathcal{M}$, we compute the unnormalized update for $k$:
\begin{equation*}
    \tilde{k}_{t+1} = k_t - \alpha (I_d - k_t k_t^T) (\partial_\kappa \mathcal{R}^<) k^* = k_t - \alpha (\partial_\kappa \mathcal{R}^<) (k^* - \kappa_t k_t).
\end{equation*}
Projecting this unnormalized update onto $k^*$ yields the numerator for $\kappa_{t+1}$:
\begin{equation*}
    (k^*)^T \tilde{k}_{t+1} = (k^*)^T k_t - \alpha (\partial_\kappa \mathcal{R}^<) ( (k^*)^T k^* - \kappa_t (k^*)^T k_t ) = \kappa_t - \alpha (\partial_\kappa \mathcal{R}^<) (1 - \kappa_t^2).
\end{equation*}
To find the normalization constant $c_k = \|\tilde{k}_{t+1}\|_2$, we apply the Pythagorean theorem. Since $(k^* - \kappa_t k_t)$ is the projection of $k^*$ onto the orthogonal complement of $k_t$, it is strictly orthogonal to $k_t$. Thus:
\begin{align*}
    \|\tilde{k}_{t+1}\|_2^2 &= \|k_t\|_2^2 + \alpha^2 (\partial_\kappa \mathcal{R}^<)^2 \|k^* - \kappa_t k_t\|_2^2 \nonumber \\
    &= 1 + \alpha^2 (\partial_\kappa \mathcal{R}^<)^2 (1 - \kappa_t^2).
\end{align*}
Dividing the numerator by $\|\tilde{k}_{t+1}\|_2$ directly yields $g_1(\kappa_t, \nu_t)$. Analogously, one obtains the result for $g_2(\kappa_t, \nu_t)$.

\subsection{Proof of Proposition~\ref{prop:risk_decrease}}

The theoretical risk $\mathcal{R}^<$ is composed of smooth error functions evaluated over Gaussian expectations, ensuring it is $\mathcal{C}^\infty$ on the compact square $[-1, 1]^2$. Consequently, it is $\Lambda$-smooth, meaning we can bound its variation:
\begin{align} \label{eq:smoothness}
    \mathcal{R}^<(\kappa_{t+1}, \nu_{t+1}) - \mathcal{R}^<(\kappa_t, \nu_t) &\leq (\partial_\kappa \mathcal{R}^<) (\kappa_{t+1} - \kappa_t) + (\partial_\nu \mathcal{R}^<) (\nu_{t+1} - \nu_t) \nonumber\\
    &\phantom{=}+ \frac{\Lambda}{2} \left[ (\kappa_{t+1} - \kappa_t)^2 + (\nu_{t+1} - \nu_t)^2 \right].
\end{align}
Using the discrete mapping $g$ from Lemma~\ref{lemma:macro_mapping}, we rewrite the step difference for $\kappa$:
\begin{align} \label{eq:app3_00}
    \kappa_{t+1} - \kappa_t &= \kappa_t \left( \frac{1}{\sqrt{1 + \alpha^2 (\partial_\kappa \mathcal{R}^<)^2(1-\kappa_t^2)}} - 1 \right) - \frac{\alpha (\partial_\kappa \mathcal{R}^<) (1 - \kappa_t^2)}{\sqrt{1 + \alpha^2 (\partial_\kappa \mathcal{R}^<)^2(1-\kappa_t^2)}} \nonumber \\
    &= - \alpha (\partial_\kappa \mathcal{R}^<) (1 - \kappa_t^2) \nonumber \\
    &\phantom{=} + \left(\frac{1}{\sqrt{1 + \alpha^2 (\partial_\kappa \mathcal{R}^<)^2(1-\kappa_t^2)}} - 1 \right)(\kappa_t - \alpha (\partial_\kappa \mathcal{R}^<) (1 - \kappa_t^2)).
\end{align}
As a consequence,
\begin{align} \label{eq:app_bound}
\vert \kappa_{t+1} - \kappa_t + \alpha (\partial_\kappa \mathcal{R}^<) (1 - \kappa_t^2) \vert & \leq \left\vert\frac{1}{\sqrt{1 + \alpha^2 (\partial_\kappa \mathcal{R}^<)^2(1-\kappa_t^2)}} - 1\right\vert \left\vert \kappa_t - \alpha (\partial_\kappa \mathcal{R}^<) (1 - \kappa_t^2) \right\vert \nonumber \\
&\leq \alpha^2 (\partial_\kappa \mathcal{R}^<)^2(1-\kappa_t^2) \left\vert \kappa_t - \alpha (\partial_\kappa \mathcal{R}^<) (1 - \kappa_t^2) \right\vert \nonumber \\
& \leq C \alpha^2 (\partial_\kappa \mathcal{R}^<)^2(1-\kappa_t^2),
\end{align}
where the second inequality holds by Lemma~\ref{lemma:inv_sqrt} and the last bound holds since the function $(\kappa, \nu) \mapsto \left\vert \kappa - \alpha (\partial_\kappa \mathcal{R}^<(\kappa, \nu)) (1 - \kappa^2) \right\vert$ is uniformly bounded for all $\alpha \leq 1$. This bound has two implications. First,
\begin{align}\label{eq:app3_01}
    (\partial_\kappa \mathcal{R}^<)(\kappa_{t+1}-\kappa_t) + \alpha(\partial_\kappa \mathcal{R}^<)^2(1-\kappa_t^2) &= (\partial_\kappa \mathcal{R}^<)\left((\kappa_{t+1}-\kappa_t) + \alpha (\partial_\kappa \mathcal{R}^<)(1-\kappa_t^2)\right) \nonumber\\
    & \leq \vert (\partial_\kappa \mathcal{R}^<) \vert \left\vert (\kappa_{t+1}-\kappa_t) + \alpha (\partial_\kappa \mathcal{R}^<)(1-\kappa_t^2) \right\vert \nonumber\\
    & \leq C\alpha^2 (\partial_\kappa \mathcal{R}^<)^2 (1-\kappa_t^2),
\end{align}
where we use the fact that $\vert \partial_\kappa \mathcal{R}^< \vert$ is bounded and the bound \eqref{eq:app_bound}. Second, since the square function is Lipschitz on compact sets, we have
\begin{equation*}
    \left\vert(\kappa_{t+1} - \kappa_t)^2 - \left(\alpha (\partial_\kappa \mathcal{R}^<) (1 - \kappa_t^2)\right)^2\right\vert \leq C \alpha^2(\partial_\kappa \mathcal{R}^<)^2 (1-\kappa_t^2).
\end{equation*}
Thus,
\begin{align}\label{eq:app3_02}
    (\kappa_{t+1} - \kappa_t)^2 &\leq \alpha^2(\partial_\kappa \mathcal{R}^<)^2 (1-\kappa_t^2)^2 + C \alpha^2(\partial_\kappa \mathcal{R}^<)^2 (1-\kappa_t^2) \nonumber\\
    &\leq C \alpha^2(\partial_\kappa \mathcal{R}^<)^2 (1-\kappa_t^2).
\end{align}
We obtain analogous bounds for $\nu$, namely
\begin{equation}\label{eq:app3_03}
    (\partial_\nu \mathcal{R}^<)(\nu_{t+1}-\nu_t) + \alpha(\partial_\nu \mathcal{R}^<)^2(1-\nu_t^2) \leq C \alpha^2(\partial_\nu \mathcal{R}^<)^2 (1-\nu_t^2)
\end{equation}
and
\begin{equation}\label{eq:app3_04}
    (\nu_{t+1} - \nu_t)^2 \leq C \alpha^2(\partial_\nu \mathcal{R}^<)^2 (1-\nu_t^2).
\end{equation}

Plugging the bounds \eqref{eq:app3_01}, \eqref{eq:app3_02}, \eqref{eq:app3_03}, \eqref{eq:app3_04} into Equation~\eqref{eq:smoothness}, we obtain the desired inequality
\begin{align*}
    \mathcal{R}^<(\kappa_{t+1}, \nu_{t+1}) - \mathcal{R}^<(\kappa_t, \nu_t) &\leq - \alpha(\partial_\kappa \mathcal{R}^<)^2 (1-\kappa_t^2) - \alpha(\partial_\nu \mathcal{R}^<)^2 (1-\nu_t^2) \nonumber\\
    &\phantom{\leq} + C \alpha^2(\partial_\kappa \mathcal{R}^<)^2 (1-\kappa_t^2) + C \alpha^2(\partial_\nu \mathcal{R}^<)^2 (1-\nu_t^2).
\end{align*}
By choosing the step size $\alpha \leq \frac{1}{2C}$, this ensures that
\begin{equation*}
    \mathcal{R}^<(\kappa_{t+1}, \nu_{t+1}) - \mathcal{R}^<(\kappa_t, \nu_t) \leq - \frac{\alpha}{2}(\partial_\kappa \mathcal{R}^<)^2 (1-\kappa_t^2) - \frac{\alpha}{2}(\partial_\nu \mathcal{R}^<)^2 (1-\nu_t^2).
\end{equation*}

This shows that the risk is decreasing along the PGD iterates. Next, introducing $\mathcal{R}_{\min}^< = \min_{(\kappa, \nu) \in [-1,1]^2}\mathcal{R}^<(\kappa, \nu)$ and using a telescopic sum, we have, for all $T \geq 0$,
\begin{align*}
    \mathcal{R}^<(\kappa_0, \nu_0) - \mathcal{R}_{\min}^< &\geq \mathcal{R}^<(\kappa_0, \nu_0) - \mathcal{R}^<(\kappa_T, \nu_T) \\
    &\geq \frac{\alpha}{2} \sum_{t=0}^{T-1}(\partial_\kappa \mathcal{R}^<)^2 (1-\kappa_t^2) +(\partial_\nu \mathcal{R}^<)^2 (1-\nu_t^2).
\end{align*}
Since the left-hand side is finite, and the terms of the sum are nonnegative, we conclude that the series converges as $T \to \infty$. In particular, the generic term $(\partial_\kappa \mathcal{R}^<)^2 (1-\kappa_t^2) +(\partial_\nu \mathcal{R}^<)^2 (1-\nu_t^2)$ of the series converges to 0 as $t \to \infty$. Therefore, the accumulation points $(\kappa_\infty, \nu_\infty)$ satisfy
\begin{equation*}
\begin{cases}
    \partial_\kappa \mathcal{R}^<(\kappa_\infty, \nu_\infty) = 0 \quad \text{or} \quad \kappa^2_\infty = 1 \\
    \partial_\nu \mathcal{R}^<(\kappa_\infty, \nu_\infty) = 0 \quad \text{or} \quad \nu^2_\infty = 1.
\end{cases}
\end{equation*}
Inspecting identity \eqref{eq:app3_00}, we observe that the convergence of the general term also implies $\kappa_{t+1} - \kappa_t \to 0$. Similarly, $\nu_{t+1} - \nu_t \to 0$. q.e.d.

\subsection{Proof of Proposition~\ref{prop:fixed_points}}

First we recall the formula of the risk
\begin{align}\label{eq:risk_kappa_nu}
    \mathcal{R}^<&(\kappa, \nu) = \nonumber\\
    &= -2\gamma^2\nu\mathbb{E}\!\left[ \erf\!\left(\lambda_c \erf\!\left(\lambda_r \kappa \ell + \lambda_{r} \gamma G_{11}^k\right) (\kappa \ell + \gamma G_{11}^k) + \lambda_c \sum_{j=2}^{N} \erf\!\left(\lambda_r G_{1j}^k\right) G_{1j}^k\right)\right. \cdot\ \nonumber \\
    &\phantom{=\gamma \nu \mathbb{E}}\quad\cdot \erf\!\left(\lambda_r \kappa \ell + \lambda_{r} \gamma G_{11}^k\right)\Bigg] \nonumber\\
    &\phantom{=} + \gamma^2\mathbb{E}\!\left[\erf^2\!\left(\lambda_c \erf\!\left(\lambda_r \kappa \ell + \lambda_{r} \gamma  G_{11}^k\right)\!\left(\kappa \ell + \gamma G_{11}^k\right) + \lambda_c \sum_{j=2}^{N} \erf\!\left(\lambda_r G_{1j}^k\right) G_{1j}^k\right)\right. \nonumber \\
    & \phantom{= + \mathbb{E}} \quad\cdot \erf^2\!\left(\lambda_r \kappa \ell + \lambda_{r} \gamma G_{11}^k\right)\Bigg] \nonumber\\
    &\phantom{=}+ \mathbb{E}\!\left[\erf^2\!\left(\lambda_c \erf\!\left(\lambda_r \kappa \ell + \lambda_{r} \gamma  G_{11}^k\right)\!\left(\kappa \ell + \gamma G_{11}^k\right) + \lambda_c \sum_{j=2}^{N} \erf\!\left(\lambda_r G_{1j}^k\right) G_{1j}^k\right)\right. \nonumber \\
    & \phantom{= + \mathbb{E}} \quad\cdot \sum_{j=2}^{N} \erf^2\!\left(\lambda_r G_{1j}^k\right)\!\Bigg] + C,
\end{align}
where $C$ does not depend on $\kappa$ and $\nu$.
\begin{align} \label{eq:app_def_V}
    V(x_1, x_2, \dots, x_N) \coloneqq \; &\erf\!\left(\frac{\lambda_c}{\lambda_r} \sum_{j=1}^{N} \erf\!\left(x_j\right) x_j\right)
\end{align}
and by
\begin{align} \label{eq:app_def_U}
    U(x_1, x_2, \dots, x_N) \coloneqq \; V(x_1, \dots, x_N) \erf(x_1)
\end{align}
Furthermore, we shall make a slight abuse of notation and denote by
\begin{align*}
    V(\kappa) : &= \; V\left(\lambda_r \kappa \ell + \lambda_r \gamma G_{11}^k, \lambda_r G_{12}^k, \dots, \lambda_r G_{1N}^k\right) \\
    & = \erf\!\left(\lambda_c \erf\!\left(\lambda_r \kappa \ell + \lambda_r \gamma G_{11}^k\right) \left(\kappa \ell + \gamma G_{11}^k\right) + \lambda_c \sum_{j=2}^{N} \erf\!\left(\lambda_r G_{1j}^k\right) G_{1j}^k\right)
\end{align*}
and by
\begin{equation*}
    U(\kappa) \coloneqq \; V(\kappa) \cdot \erf(\lambda_r \kappa \ell + \lambda_r \gamma G_{11}^k).
\end{equation*}
One can easily observe that $V(\kappa)$ is symmetric in all $G_{1j}^k$ and in $\kappa$ since $x \erf(x)$ is symmetric around 0 and that $\vert V(\kappa) \vert \leq 1$. Moreover, this implies that $U(\kappa)$ is strictly increasing in $\kappa$ and that $\vert U(\kappa)\vert \leq 1$. Finally, the risk can be rewritten as
\begin{equation*}
    \mathcal{R}^<(\kappa, \nu) = -2 \gamma^2\nu\mathbb{E}\left[ U(\kappa)\right] + \gamma^2\mathbb{E}\left[U(\kappa)^2\right] +  \mathbb{E}\!\left[V(\kappa)^2 \sum_{j=2}^{N} \erf^2\!\left(\lambda_r G_{1j}^k\right)\!\right] + C.
\end{equation*}

Now, $(\kappa, \nu)$ is a stationary point of the PGD dynamics if
\begin{equation*}
\begin{cases}
    \partial_\kappa \mathcal{R}^<(\kappa, \nu) = 0 \quad \text{or} \quad \kappa^2 = 1 \\
    \partial_\nu \mathcal{R}^<(\kappa, \nu) = 0 \quad \text{or} \quad \nu^2 = 1.
\end{cases}
\end{equation*}
The partial derivatives of the risk are
\begin{equation} \label{eq:app_partials}
\begin{cases}
    \partial_\kappa \mathcal{R}^<(\kappa, \nu) = 2\gamma^2\mathbb{E}\left[\partial_\kappa U(\kappa)(U (\kappa)-\nu)\right]  + 2\mathbb{E}\left[\partial_\kappa V(\kappa) V(\kappa) \sum_{j=2}^{N} \erf^2\!\left(\lambda_r G_{1j}^k\right)\right]\\
    \partial_\nu \mathcal{R}^<(\kappa, \nu) = -2\gamma^2\mathbb{E}\left[ U(\kappa)\right].
\end{cases}
\end{equation}
We split our analysis in different cases.

\paragraph{Case 1. $\partial_\nu\mathcal{R}^<(\kappa, \nu) = 0$}
This implies that $\mathbb{E}[U(\kappa)] = 0$. It is clear that $\mathbb{E}[U(0)] = 0$ and due to the fact that $U$ is strictly increasing, $\kappa = 0$ is also the unique such point.

Now,
\begin{equation} \label{eq:app_symm_k}
    S(\kappa) \coloneqq \gamma^2\mathbb{E}\left[U(\kappa)^2\right] +  \mathbb{E}\!\left[V(\kappa)^2 \sum_{j=2}^{N} \erf^2\!\left(\lambda_r G_{1j}^k\right)\!\right]
\end{equation}
is jointly symmetric in $\left(\kappa, \left(G_{1j}^k\right)_{j \in [N]} \right)$. Taking the expectation over $\left(G_{1j}^k\right)_{j \in [N]}$, we obtain that the term in \eqref{eq:app_symm_k} is symmetric in $\kappa$, so its derivative must satisfy $\partial_\kappa S(0) = 0$. This implies that
\begin{equation*}
    0 = \partial_\kappa \mathcal{R}^<(0, \nu) = -2 \gamma^2 \nu \mathbb{E}\left[\partial_\kappa U(0)\right]
\end{equation*}
But $U$ is strictly increasing, which implies that $\nu = 0$.

Therefore, the only critical point of $\mathcal{R}^<$ inside of $[-1,1]^2$ is $(0, 0)$. Moreover, since the Hessian
\begin{equation*}
    \mathcal{H}_{\mathcal{R}^<}(0,0) = 
    \begin{pmatrix}
        \partial_{\kappa \kappa} \mathcal{R}^< & -2\gamma^2\mathbb{E}\left[\partial_\kappa U(0)\right] \\
        -2\gamma^2\mathbb{E}\left[\partial_\kappa U(0)\right] & 0
    \end{pmatrix},
\end{equation*}
we obtain that $(0, 0)$ is a saddle point.

\paragraph{Case 2. $\nu = -1$}
It is clear that $\partial_\kappa \mathcal{R}^<(0, -1) > 0$, since $\mathbb{E}[U(\kappa)^2]$ and $\mathbb{E}[V(\kappa)^2]$ are even functions and $\mathbb{E}[U(\kappa)]$ is an increasing function in $\kappa$. Moreover, on $\kappa \in [0, 1]$, the function $\mathcal{R}^<(\kappa, -1)$ is clearly increasing in $\kappa$ since $\mathbb{E}[U(\kappa)]$ and $\mathbb{E}[V^2(\kappa)]$ are.

Let us now denote by $Z_1 \coloneqq \kappa \ell + \gamma G_{11}^k$ and by $Z_j \coloneqq  G_{1j}^k$, for $j \in \{2, 3, \dots, N\}$. Then, one has
\begin{equation}
\begin{cases}
    \partial_\kappa U(\kappa) = \partial_\kappa U\left(\lambda_r \kappa \ell + \lambda_r \gamma G_{11}^k, \lambda_r G_{12}^k, \dots, \lambda_r G_{1N}^k\right) = \lambda_r \ell \; \partial_1 U(\lambda_r Z_1, \dots, \lambda_r Z_N), \\
    \partial_\kappa V(\kappa) = \partial_\kappa V\left(\lambda_r \kappa \ell + \lambda_r \gamma G_{11}^k, \lambda_r G_{12}^k, \dots, \lambda_r G_{1N}^k\right) = \lambda_r \ell \; \partial_1 V(\lambda_r Z_1, \dots, \lambda_r Z_N).
\end{cases}
\end{equation}

By directly applying the chain rule to the expectation, the formula for the partial derivative of the risk becomes
\begin{align} \label{eq:app_r_lambda}
    \partial_\kappa \mathcal{R}^<(\kappa, -1) &= \partial_\kappa \mathbb{E}\left[\gamma^2\left(U(\kappa)^2 + 2U(\kappa)\right) + V(\kappa)^2 \sum_{j=2}^N \erf^2(\lambda_r Z_j) \right] \nonumber\\
    &= \lambda_r \ell  \;\mathbb{E}\left[\gamma^2\partial_1\left(U(\lambda_r Z)^2 + 2U(\lambda_r Z)\right) + \partial_1\left(V(\lambda_r Z)^2\right) \sum_{j=2}^N \erf^2(\lambda_r Z_j) \right] \nonumber\\
    &= 2\lambda_r \ell \; \mathbb{E}\left[\gamma^2 \partial_1 U(\lambda_r Z) \left(1 + U(\lambda_r Z)\right) + V(\lambda_r Z)\partial_1 V(\lambda_r Z) \sum_{j=2}^N \erf^2(\lambda_r Z_j) \right],
\end{align}
where $U(\lambda_r Z)$ stands for $U(\lambda_r Z_1, \dots, \lambda_r Z_N)$ and the same for the other functions. 

We will now show that if $\lambda_c = \tau \lambda_r$, for $\lambda_r$ small enough and the other hyperparameters fixed, the expectation above is strictly positive for all $\kappa \in [-1, 1]$. We shall make use of the Taylor expansions
\[
\erf(x) = \frac{2}{\sqrt{\pi}} x + \frac{\erf'''(\xi_x)}{6}x^3 = \frac{2}{\sqrt{\pi}} x + \mathcal{O}(x^3),
\]
for some $\xi_x \in [0,x]$, and
\[
e^{-x} = 1 - e^{-\zeta_x} x = 1 + \mathcal{O}(x),
\]
for some $\zeta_x \in [0, x]$. Note that $|\erf'''(x)| < 2.5$, for any $x \in \mathbb{R}$ and $|e^{-\zeta_x}| \leq 1$ for any $x \in [0, \infty)$. This in turn implies the following asymptotic scalings
\begin{align} \label{eq:app_r_asym}
    V(\lambda_r Z) &= \erf\left(\tau \lambda_r \sum_{i=1}^N Z_i \erf(\lambda_r Z_i)\right) = \frac{4\tau}{\pi} \lambda_r^2 \sum_{i=1}^N Z_i^2 + \mathcal{O}(\lambda_r^3), \nonumber\\
    U(\lambda_r Z) &= V(\lambda_r Z) \erf(\lambda_r Z_1) = \frac{8\tau}{\pi \sqrt{\pi}} \lambda_r^3 Z_1 \sum_{i=1}^N Z_i^2 + \mathcal{O}(\lambda_r^4), \nonumber\\
    \partial_1 V(\lambda_r Z) &= \frac{2\tau}{\sqrt{\pi}} e^{-\tau^2\left(\sum_{i=1}^N \lambda_r Z_i \erf(\lambda_r Z_i)\right)^2} \left(\frac{2}{\sqrt{\pi}}\lambda_r Z_1 e^{-\lambda_r^2 Z_1^2} + \erf(\lambda_r Z_1)\right) \nonumber\\
    &=\frac{2\tau}{\sqrt{\pi}}\left(1 - \mathcal{O}(\lambda_r^2) \right) \left(\frac{2}{\sqrt{\pi}}\lambda_r Z_1(1 - \mathcal{O}(\lambda_r^2)) + \frac{2}{\sqrt{\pi}} \lambda_r Z_1\right) \nonumber\\
    &= \frac{8\tau}{\pi}\lambda_r Z_1 + \mathcal{O}(\lambda_r^3), \nonumber\\
    \partial_1 U(\lambda_r Z) &= \frac{2}{\sqrt{\pi}}e^{-\lambda_r^2 Z_1^2} V(\lambda_r Z) + \partial_1 V(\lambda_r Z) \erf(\lambda_r Z_1) \nonumber\\
    &= \frac{8\tau}{\pi \sqrt{\pi}} \lambda_r^2 \sum_{i=1}^N Z_i^2 + \frac{16\tau}{\pi\sqrt{\pi}} \lambda_r^2 Z_1^2 + \mathcal{O}(\lambda_r^3) \nonumber\\
    &= \frac{8\tau}{\pi\sqrt{\pi}}\lambda_r^2 \left(\sum_{i=1}^N Z_i^2 + 2 Z_1^2\right) + \mathcal{O}(\lambda_r^3).
\end{align}
From these computations, it is also clear that every term $\mathcal{O}(\lambda_r^n) = C(Z_1, Z_2, \dots, Z_N) \lambda_r^n$, where $\mathbb{E}[|C(Z_1, \dots, Z_N)|] < \infty$.

Finally, \eqref{eq:app_r_lambda} becomes
\begin{equation*}
    \partial_\kappa \mathcal{R}^<(\kappa, -1) = \frac{16\tau \ell \gamma^2}{\pi\sqrt{\pi}}\lambda_r^3 \mathbb{E}\left[ \sum_{i=1}^N Z_i^2 + 2 Z_1^2\right] + \mathcal{O}(\lambda_r^4).
\end{equation*}
Since the coefficient of the leading term is positive, this concludes the proof that there exists $\lambda_r$ small enough such that $\partial_\kappa \mathcal{R}^<(\kappa, -1) > 0$ for all $\kappa \in [-1, 1]$.

\paragraph{Case 3. $\nu = 1$}
By leveraging the global symmetry of the theoretical risk function, specifically $\mathcal{R}^<(-\kappa, -\nu) = \mathcal{R}^<(\kappa, \nu)$, an analogous argument applies along the opposite boundary $\nu = 1$. The identical asymptotic smoothness guarantees $\partial_\kappa \mathcal{R}^<(\kappa, 1) < 0$ for all $\kappa \in [-1, 1]$

Finally combining the results of the three cases, one obtains the conclusion.

\subsection{Proof of Proposition~\ref{prop:diffeomorphism}}

We verify the claims of the proposition sequentially.

\textbf{Local diffeomorphism and eigenvalues at the origin.} Let $x = (\kappa, \nu)$. Because the gradient of the risk vanishes at the origin ($\partial_\kappa \mathcal{R}^<(0,0) = 0$) and $\mathcal{R}^<$ is smooth, the squared partial derivative satisfies $(\partial_\kappa \mathcal{R}^<(\kappa, \nu))^2 = \mathcal{O}(\|x\|_2^2)$. This allows us to write the Taylor expansion for the first component of $g$, denoted $g_1(\kappa, \nu)$:
\begin{align*}
    g_1(\kappa, \nu) &= \frac{\kappa - \alpha(\partial_\kappa \mathcal{R}^<(\kappa, \nu))(1 - \kappa^2)}{\sqrt{1 + \alpha^2(\partial_\kappa \mathcal{R}^<(\kappa, \nu))^2(1 - \kappa^2)}} \\
    &= \frac{\kappa - \alpha(\partial_\kappa \mathcal{R}^<(\kappa, \nu))(1 - \kappa^2)}{\sqrt{1 + \mathcal{O}(\|x\|_2^2)}} \\
    &= (\kappa - \alpha(\partial_\kappa \mathcal{R}^<(\kappa, \nu))(1 - \kappa^2))(1 + \mathcal{O}(\|x\|_2^2)) \\
    &= \kappa - \alpha\partial_\kappa \mathcal{R}^<(\kappa, \nu) + \mathcal{O}(\|x\|_2^2).
\end{align*}
Applying the same expansion to the second component $g_2(\kappa, \nu)$ yields the Jacobian matrix of $g$ evaluated at the origin, $J_g(0,0) = I_2 - \alpha H_{\mathcal{R}^<}(0,0)$. Because $\det(H_{\mathcal{R}^<}(0,0)) = -4\gamma^4\left(\mathbb{E}[\partial_\kappa U(0)]\right)^2$, from the proof of Proposition~\ref{prop:fixed_points}, the eigenvalues of $H_{\mathcal{R}^<}(0,0)$ have strictly opposite signs. Consequently, for a sufficiently small step size $\alpha > 0$, $J_g(0,0)$ possesses one eigenvalue strictly inside the interval $(0, 1)$ and another strictly greater than $1$. This confirms that $J_g(0,0)$ is invertible, which by the Inverse Function Theorem ensures $g$ is a local diffeomorphism around $(0,0)$.

\textbf{Differentiability and non-degenerate Jacobian on $[-1, 1]^2$.} The mapping $g$ is globally differentiable on $[-1,1]^2$ as it is a composition of smooth elementary functions. To prove its Jacobian is nowhere degenerate, we express the mapping as $g(x) = x + \alpha h(x)$. The first coordinate of $h$ takes the form:
\begin{align*}
    h_1(\kappa, \nu) &= \frac{1}{\alpha}(g_1(\kappa, \nu) - \kappa) \\
    &= \underbrace{\frac{\kappa}{\alpha}\left(\frac{1}{\sqrt{1 + \alpha^2(\partial_\kappa \mathcal{R}^<(\kappa, \nu))^2(1 - \kappa^2)}} - 1\right)}_{\text{Term 1}} - \underbrace{\frac{\partial_\kappa \mathcal{R}^<(\kappa, \nu)(1 - \kappa^2)}{\sqrt{1 + \alpha^2(\partial_\kappa \mathcal{R}^<(\kappa, \nu))^2(1 - \kappa^2)}}}_{\text{Term 2}}.
\end{align*}
We must prove that the gradient of $h_1(\kappa, \nu)$ remains uniformly bounded for all $\alpha \le 1$. Term 2 is a composition of functions with uniformly bounded gradients on the compact domain, so its gradient is trivially bounded. For Term 1, we introduce the auxiliary function $q(a, b) = \frac{a}{\alpha}\left(\frac{1}{\sqrt{1 + \alpha^2 b}} - 1\right)$ and define the supremum $B = \sup_{[-1,1]^2} (\partial_\kappa \mathcal{R}^<(\kappa, \nu))^2(1-\kappa^2)$. Term 1 can be expressed as $q(\kappa, (\partial_\kappa \mathcal{R}^<)^2(1-\kappa^2))$. Since the inner function is smooth and independent of $\alpha$, we only need to bound the partial derivatives of $q$. By Lemma~\ref{lemma:inv_sqrt}, for any $\alpha \le 1$:
\begin{align*}
    |\partial_a q(a, b)| &= \frac{1}{\alpha}\left|\frac{1}{\sqrt{1 + \alpha^2 b}} - 1\right| \le \alpha b \le B, \\
    |\partial_b q(a, b)| &= \left|-\frac{\alpha a}{2(1 + \alpha^2 b)^{3/2}}\right| \le \frac{\alpha}{2} \le \frac{1}{2}.
\end{align*}
Since both partial derivatives are uniformly bounded, the gradient of $h_1$ is uniformly bounded. A symmetric argument applies to $h_2$, confirming that the Jacobian matrix $J_h(\kappa, \nu)$ is uniformly bounded over $\alpha \le 1$. Because $J_g(\kappa, \nu) = I_2 + \alpha J_h(\kappa, \nu)$, selecting a sufficiently small $\alpha$ ensures the eigenvalues of $J_g$ remain strictly positive, guaranteeing the Jacobian is nowhere degenerate.

\textbf{Injectivity.} The uniform bound on $J_h$ implies that $h$ is $B_L$-Lipschitz continuous on $[-1,1]^2$ for some constant $B_L > 0$ independent of $\alpha$ (for small $\alpha$). Choose $\alpha < 1/B_L$. Suppose there exist two distinct points $x, y \in [-1, 1]^2$ such that $g(x) = g(y)$. This implies:
\begin{equation*}
    x + \alpha h(x) = y + \alpha h(y) \implies x - y = \alpha(h(y) - h(x)).
\end{equation*}
Taking the Euclidean norm of both sides yields:
\begin{equation*}
    \|x - y\|_2 = \alpha \|h(y) - h(x)\|_2 \le \alpha B_L \|x - y\|_2.
\end{equation*}
Because $\alpha B_L < 1$, this inequality forces $\|x - y\|_2 < \|x - y\|_2$, which is a strict contradiction. Therefore, $g$ must be injective on the domain $[-1, 1]^2$.

\subsection{Final details.} 
From Proposition~\ref{prop:risk_decrease}, the distance between successive iterates $\| (\kappa_{t+1}, \nu_{t+1}) - (\kappa_t, \nu_t) \|_2$ tends to zero. Because the sequence $(\kappa_t, \nu_t)_{t \ge 0}$ lies within the compact set $[-1, 1]^2$, the set of its accumulation points must be connected. Given that the set of available fixed points is finite and isolated $\{ (0,0), (\pm 1, \pm 1), (\pm 1, \mp 1) \}$, the sequence cannot oscillate and must uniquely converge to exactly one of them.

It remains to be shown that the probability of converging to the saddle point $(0,0)$ or the maxima $(\pm 1, \mp 1)$ is zero. Because the risk strictly decreases at every step (Proposition~\ref{prop:risk_decrease}), convergence to the global maxima is impossible unless initialized exactly upon them. 

To systematically rule out the saddle point $(0,0)$, we apply the Center-Stable Manifold Theorem \citep[Theorem III.7]{shub2013global}. Because $J_g(0,0)$ has exactly one eigenvalue with magnitude strictly greater than 1 (Proposition~\ref{prop:diffeomorphism}), the local center-stable manifold $W_{loc}^{cs}$ associated with the origin has codimension 1. In the 2-dimensional space $[-1,1]^2$, a codimension-1 manifold is a curve, and its Lebesgue measure is strictly zero.

The stable set $W^s$, consisting of all initializations that converge to $(0,0)$ under iterations of $g$, is contained within the countable union of pre-images of $W_{loc}^{cs}$:
\begin{equation}
    W^s \subset \bigcup_{T \ge 0} g^{-T}(W_{loc}^{cs}).
\end{equation}
Because $g$ is injective and differentiable, its inverse preserves sets of measure zero. The countable union of measure-zero sets is also of measure zero. 
    
Therefore, the set of all initializations on $\mathcal{M}$ that converge to the saddle point $(0,0)$ has Lebesgue measure zero. We conclude that for a generic random initialization $(k_0, v_0) \in \mathcal{M}$, the PGD sequence almost surely bypasses the saddle point and the maxima, converging strictly to the true oracle structure $(\pm 1, \pm 1)$.

\section{Technical lemmas}
In this section we list the technical lemmas used throughout the proofs in the Appendix.

\begin{lemma} \label{lemma:erf_ineq}
    For $u \in \mathbb{R}$,
    \begin{equation*}
        \erf^2(u) \leq \frac{4}{\pi} u^2.
    \end{equation*}
\end{lemma}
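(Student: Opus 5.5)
Since $\erf$ and $\erf^2$ are even, and $u\mapsto\frac{4}{\pi}u^2$ is even as well, it suffices to prove the inequality for $u\ge 0$. For $u \ge 0$ both $\erf(u)$ and $\frac{2}{\sqrt{\pi}}u$ are nonnegative. Taking square roots, the claim is then equivalent to
\begin{equation*}
\erf(u) \le \frac{2}{\sqrt{\pi}}\, u, \qquad u \ge 0.
\end{equation*}

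To prove this, I would use the integral definition $\erf(u) = \frac{2}{\sqrt{\pi}}\int_0^u e^{-s^2}\,\mathrm{d}s$ together with the pointwise bound $e^{-s^2}\le 1$. This gives directly
\begin{equation*}
\erf(u) \le \frac{2}{\sqrt{\pi}}\int_0^u 1\,\mathrm{d}s = \frac{2}{\sqrt{\pi}}\,u .
\end{equation*}
Equivalently, the function $\phi(u) = \frac{2}{\sqrt{\pi}}u - \erf(u)$ satisfies $\phi(0)=0$ and $\phi'(u) = \frac{2}{\sqrt{\pi}}(1-e^{-u^2}) \ge 0$, so $\phi$ is nondecreasing and hence nonnegative on $[0,\infty)$.

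Squaring the two nonnegative sides of $0 \le \erf(u) \le \frac{2}{\sqrt{\pi}}u$ gives $\erf^2(u)\le \frac{4}{\pi}u^2$ for $u\ge0$. The evenness observed at the start then extends the bound to all $u\in\mathbb{R}$.

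The argument has no substantial difficulty. The only point requiring care is the sign bookkeeping: squaring preserves the inequality only because both sides are nonnegative on $u \ge 0$, which is why the reduction to $u\ge0$ is done first.
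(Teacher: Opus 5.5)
Your proof is correct and is essentially the paper's argument: the paper bounds $\erf$ on $[0,\infty)$ by its tangent line at $0$ using concavity, and your bound $e^{-s^2}\le 1$ (i.e.\ $\erf'(s)\le\erf'(0)$) expresses the same fact. One small slip: $\erf$ is odd, not even, but you only use that $\erf^2$ is even, so the argument is unaffected.
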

\textit{Proof.} The proof relies on the concavity of $\erf$ on $(0, \infty)$. Since $\erf$ is an odd function, it is enough to prove the statement for $u \geq 0$. Since $\erf$ is concave,
\begin{equation*}
    \erf(u) \leq \erf(0) + \erf'(0) u = \frac{2}{\sqrt{\pi}}u,
\end{equation*}
which implies the conclusion.

\begin{lemma}\label{lemma:expected_erf}
    For $G \sim \mathcal{N}(0, \sigma^2)$ and $t \in \mathbb{R}$,
    \begin{equation*}
        \mathbb{E}[\mathrm{erf}(t + G)] = \mathrm{erf}\left(\frac{t}{\sqrt{1+2\sigma^2}}\right).
    \end{equation*}
\end{lemma}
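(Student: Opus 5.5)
The argument reduces everything to the standard normal case by writing $G = \sigma Z$ with $Z \sim \mathcal{N}(0,1)$ and differentiating in $t$. Set $F(t) \coloneqq \mathbb{E}[\erf(t + \sigma Z)]$. Since $\erf$ is bounded with bounded derivative $\erf'(x) = \frac{2}{\sqrt{\pi}} e^{-x^2}$, dominated convergence lets us differentiate under the expectation:
\begin{equation*}
F'(t) = \frac{2}{\sqrt{\pi}}\,\mathbb{E}\left[e^{-(t+\sigma Z)^2}\right].
\end{equation*}
The right-hand side is a Gaussian integral. Expanding $(t+\sigma z)^2 + z^2/2$ and completing the square in $z$ gives
\begin{equation*}
\mathbb{E}\left[e^{-(t+\sigma Z)^2}\right] = \frac{1}{\sqrt{1+2\sigma^2}}\exp\left(-\frac{t^2}{1+2\sigma^2}\right).
\end{equation*}
The coefficient of $z^2$ is $\frac{1+2\sigma^2}{2}$. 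Once the square in $z$ is completed, the remaining $t$-dependent exponent is $-t^2 + \frac{2\sigma^2 t^2}{1+2\sigma^2} = -\frac{t^2}{1+2\sigma^2}$.

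Hence
\begin{equation*}
F'(t) = \frac{2}{\sqrt{\pi}}\frac{1}{\sqrt{1+2\sigma^2}}\, e^{-t^2/(1+2\sigma^2)},
\end{equation*}
which is exactly the derivative of $t \mapsto \erf\big(t/\sqrt{1+2\sigma^2}\big)$. Both functions vanish at $t=0$: $F(0) = \mathbb{E}[\erf(\sigma Z)] = 0$ because $\erf$ is odd and $Z$ is symmetric, and the candidate formula gives $\erf(0) = 0$. Therefore the two functions agree for all $t$. The case $\sigma = 0$ is trivial and is also covered by the same computation.

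The only step that needs care is the completing-the-square computation; it is routine but the constants must be tracked. As an alternative, one can use the probabilistic representation $\erf(x) = \mathbb{P}(|W| \le x) \cdot \mathrm{sign}$-type identity: $\erf(x) = 2\Phi(\sqrt{2}x) - 1$, so $\erf(x) = 2\mathbb{P}(W \le \sqrt{2}x) - 1$ with $W \sim \mathcal{N}(0,1)$ independent of $Z$. Then
\begin{equation*}
\mathbb{E}[\erf(t+\sigma Z)] = 2\,\mathbb{P}\left(W - \sqrt{2}\sigma Z \le \sqrt{2}t\right) - 1.
\end{equation*}
Since $W - \sqrt{2}\sigma Z \sim \mathcal{N}(0, 1+2\sigma^2)$, this equals
\begin{equation*}
2\Phi\left(\frac{\sqrt{2}t}{\sqrt{1+2\sigma^2}}\right) - 1 = \erf\left(\frac{t}{\sqrt{1+2\sigma^2}}\right),
\end{equation*}
which is a one-line cross-check of the result.
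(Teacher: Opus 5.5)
Your main argument is correct and is essentially the paper's proof: the paper writes $\mathbb{E}[\erf(t+G)]=\int_0^t \mathbb{E}[\erf'(s+G)]\,\mathrm{d}s$ and evaluates the resulting Gaussian integral by completing the square, exactly as you do, and your explicit checks that $F(0)=0$ (which the paper uses without comment) and that $\sigma=0$ is covered make your version slightly more complete. Your probabilistic cross-check via $\erf(x)=2\Phi(\sqrt{2}x)-1$ and $W-\sqrt{2}\sigma Z\sim\mathcal{N}(0,1+2\sigma^2)$ is also correct, and it gives an alternative that needs no Gaussian integral at all.
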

\textit{Proof.} One has
\begin{align*}
    \mathbb{E}[\erf(t + G)] &= \int_{0}^t \mathbb{E}[\erf'(s + G)] \text{d}s \\
    & = \int_{0}^t \mathbb{E}\left[\frac{2}{\sqrt{\pi}} e^{-(s+G)^2}\right] \text{d}s \\
    & = \int_{0}^t \int_{-\infty}^\infty \frac{\sqrt{2}}{\pi\sigma} e^{-(s+g)^2} e^{-\frac{g^2}{2\sigma^2}} \text{d}g \; \text{d}s \\
    & = \frac{\sqrt{2}}{\pi\sigma} \int_{0}^t \int_{-\infty}^\infty e^{-g^2 \frac{1 + 2\sigma^2}{2\sigma^2} - 2gs - s^2} \text{d}g \; \text{d}s \\
    & = \frac{\sqrt{2}}{\pi\sigma} \int_{0}^t \int_{-\infty}^\infty e^{-\left(g \sqrt{\frac{1+2\sigma^2}{2\sigma^2}} + s \sqrt{\frac{2\sigma^2}{1 + 2\sigma^2}}\right)^2} e^{-\frac{1}{1 + 2\sigma^2} s^2} \text{d}g \; \text{d}s \\
    & = \frac{\sqrt{2}}{\pi\sigma} \int_{0}^t \sqrt{\pi} \sqrt{\frac{2\sigma^2}{1 + 2\sigma^2}} e^{-\frac{1}{1 + 2\sigma^2} s^2}  \text{d}s \\
    & = \frac{2}{\sqrt{\pi}} \int_{0}^{\frac{t}{\sqrt{1 + 2\sigma^2}}} e^{-u^2} \text{d}u \\
    & = \erf\left(\frac{t}{\sqrt{1+2\sigma^2}}\right).
\end{align*}

\begin{lemma} \label{lemma:inv_sqrt}
    For $u \geq 0$,
    \begin{equation*}
        \left| \frac{1}{\sqrt{1+u}} - 1\right| \leq u.
    \end{equation*}
\end{lemma}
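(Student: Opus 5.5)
The inequality is elementary, and the plan is to split it into two monotone bounds after writing $s = \sqrt{1+u}\ge 1$.

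\textbf{Sign and reduction.} For $u \ge 0$ we have $s \ge 1$, so $\frac{1}{\sqrt{1+u}} \le 1$. Hence
\[
\left|\frac{1}{\sqrt{1+u}} - 1\right| = 1 - \frac{1}{\sqrt{1+u}} = \frac{\sqrt{1+u}-1}{\sqrt{1+u}}.
\]

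\textbf{First bound.} Rationalizing the numerator gives
\[
\sqrt{1+u} - 1 = \frac{u}{\sqrt{1+u}+1}.
\]
Since the denominator $\sqrt{1+u}+1$ is at least $2$, this yields $\sqrt{1+u}-1 \le u/2$.

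\textbf{Second bound and conclusion.} Dividing by $\sqrt{1+u} \ge 1$ can only decrease a nonnegative quantity. Therefore
\[
\left|\frac{1}{\sqrt{1+u}} - 1\right| \le \frac{u}{2\left(\sqrt{1+u}+1\right)\sqrt{1+u}}\cdot 2 \le \frac{u}{2} \le u .
\]
In other words, the expression is at most $u/2$, which is stronger than the claim.

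An alternative route is the mean value theorem applied to $f(x) = (1+x)^{-1/2}$, whose derivative satisfies $|f'(x)| = \tfrac12 (1+x)^{-3/2} \le \tfrac12$ on $[0,\infty)$. This gives the same bound $u/2$.

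There is no real obstacle here. The only point requiring care is the sign: the absolute value can be removed only because $u \ge 0$. This is also the only regime in which the lemma is used, with $u = \alpha^2 (\partial_\kappa \mathcal{R}^<)^2(1-\kappa^2) \ge 0$.
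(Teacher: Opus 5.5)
Your proof is correct, but it takes a different route from the paper's. Both arguments remove the absolute value the same way. The paper then rewrites the claim as $u + \frac{1}{\sqrt{1+u}} \ge 1$ and treats $u \ge 1$ as trivial. For $u \in [0,1)$ it squares $\frac{1}{\sqrt{1+u}} \ge 1-u$, which is legitimate since $1-u>0$, reducing the claim to the polynomial inequality $1 \ge (1-u^2)(1-u)$.

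You instead rationalize the numerator, which gives the exact identity $1 - \frac{1}{\sqrt{1+u}} = \frac{u}{(\sqrt{1+u}+1)\sqrt{1+u}}$, and then bound the denominator below by $2$. This needs no case split and gives the sharper bound $\frac{u}{2}$. Your mean value theorem variant gives the same constant. It also shows that the only input is $\sup_{x \ge 0} |f'(x)| = \frac{1}{2}$, so it transfers immediately to other smooth normalization factors.

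The paper's version buys nothing beyond being pure algebra. The sharper constant is also irrelevant downstream, because wherever the lemma is used (the risk-decrease bound and the Jacobian bound for $g$) the factor is absorbed into a generic constant.

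One cosmetic point: in your displayed chain the first ``$\le$'' is actually an equality, since the $2$ in the denominator and the trailing $\cdot 2$ cancel. It would read more cleanly as $= \frac{u}{(\sqrt{1+u}+1)\sqrt{1+u}} \le \frac{u}{2} \le u$.
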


\textit{Proof.} For $u \geq 0$, the term inside the absolute value is at most 0, so we need to prove that
\begin{equation*}
    u + \frac{1}{\sqrt{1 + u}} \geq 1.
\end{equation*}
This statement is obvious for $u \geq 1$. For $u \in [0,1)$, it is equivalent to
\begin{equation*}
    \frac{1}{\sqrt{1 + u}} \geq 1 - u  \iff \frac{1}{1 + u} \geq (1 - u)^2 \iff 1 \geq (1 - u^2)(1 - u),
\end{equation*}
which is clearly true since $u \in [0,1)$. One has equality only when $u = 0$.

\section{Additional Experimental Details and Results}  \label{sec:app_numerical}

In this section, we provide supplementary empirical results that expand upon the optimization dynamics discussed in Section~\ref{sec:sec05}. Specifically, we analyze the training dynamics of the predictors without manifold penalties in Section~\ref{sec:app_no_penalty}, provide the full results for the experiments in Figure~\ref{fig:vary_lambda_gradients} in Section~\ref{sec:app_constant_lambda} and further investigate the optimization landscape for decoupled $\lambda_c$ and $\lambda_r$ in Section~\ref{sec:app_decoupled_lambda}, and for different values of $\ell$ and $N$ in Section~\ref{sec:app_grad_N_ell}.

\subsection{Implementation details and hyperparameters}

\paragraph{Code availability.}
The complete codebase used to run all numerical simulations and generate the figures presented in this work can be found at \url{https://github.com/anditp/Single-Location-Regression.git}. All the numerical experiments were run on a MacBook Pro M1 on CPU. Each individual experiment (20 runs) in the paper takes less than an hour to run.

\paragraph{Distance to the invariant manifold.}
Throughout our experiments, we track the optimization trajectories to determine if they remain on, or converge to, the invariant manifold $\mathcal{M}$. Recall that $\mathcal{M}$ is defined by the strict orthogonality constraints:
\begin{equation*}
    \mathcal{M} = \left\{(k, v) \in \left(\mathbb{S}^{d-1}\right)^2 \; \Big\vert \; k^T v^* = 0, \; v^T k^* = 0, \; k^T v = 0 \right\}.
\end{equation*}
Consequently, for any parameter pair $(k, v) \in \left(\mathbb{S}^{d-1}\right)^2$, we compute its distance to $\mathcal{M}$ continuously during training using the root sum of squares of the corresponding inner products:
\begin{equation}
    d_{\mathcal{M}}(k, v) \coloneqq \sqrt{(k^T v^*)^2 + (v^T k^*)^2 + (k^T v)^2}.
\end{equation}

\paragraph{Hyperparameters.}
Table~\ref{tab:hyperparameters} summarizes the specific values and schedules used across all numerical experiments presented in Section~\ref{sec:sec05} and Appendix~\ref{sec:app_numerical}. For the penalized empirical risk experiments, we introduced the weighting coefficients $\mu_1$ and $\mu_2$ to enforce the structural constraints.

\begin{table}[htbp]
    \centering
    \caption{Summary of hyperparameters used across the numerical experiments.}
    \label{tab:hyperparameters}
    \resizebox{\textwidth}{!}{%
    \begin{tabular}{l c c c c}
        \toprule
        \textbf{Parameter} & \textbf{Fig.~\ref{fig:on_manifold_full}} & \textbf{Fig.~\ref{fig:vary_lambda_gradients}}, \textbf{Fig.~\ref{fig:on_manifold_lc_0p01_lr0p1}} & \textbf{Fig.~\ref{fig:comparison_manifold}} & \textbf{Fig.~\ref{fig:app_no_penalties}} \\
        & ($\mathcal{M}$ initialization) & (Constant $\lambda$s) & (Penalized) & (Unpenalized) \\
        \midrule
        Grid height ($M$) & 5 & 5 & 5 & 5 \\
        Grid width ($N$) & 5 & 5 & 5 & 5 \\
        Embedding dim. ($d$) & 20 & 20 & 20 & 20 \\
        Shift magnitude ($\ell$) & 100 & 100 & 100 & 100 \\
        Special token variance ($\gamma^2$) & 1 & 1 & 1 & 1 \\
        Noise variance ($\epsilon^2$) & 0.1 & 0.1 & 0.1 & 0.1 \\
        Penalty weight ($\mu_1$) & 0 & 0 & 10 & 0 \\
        Penalty weight ($\mu_2$) & 0 & 0 & 1 & 0 \\
        Learning rate ($\alpha$) & 1e-4 & 1e-4 & 1e-4 (a) / 1e-3 (b) & 1e-4 (a) / 1e-3 (b) \\
        Number of iterations & 100,000 & 100,000 & 100,000 & 100,000 \\
        Batch size & 256 & 256 & 256 & 256 \\
        Independent runs & 20 & 20 / $\lambda$ value & 20 & 20 \\
        \bottomrule
    \end{tabular}
    }
\end{table}
The batches for Stochastic PGD are generated dynamically and resampled at each step, since we are aiming to mimic PGD on the theoretical risk.

\subsection{Optimization without Manifold Penalties} \label{sec:app_no_penalty}

In Section~\ref{sec:sec05}, we demonstrated that optimizing the penalized empirical risk allows both the $\erf$ and $\text{softmax}$ predictors to converge to the oracle parameters from generic random initializations. To isolate the effect of the penalty terms, we conduct an ablation study where both predictors are trained using the strictly unpenalized risk from generic initializations outside of $\mathcal{M}$. 

Figure~\ref{fig:app_no_penalties} presents the optimization trajectories for both $T^{(k,v)}_{\lambda_r, \lambda_c}$ ($\erf$ predictor) and $T^{(k, v)}_{\lambda_r, \lambda_c, \text{softmax}}$ ($\text{softmax}$ predictor) under these unconstrained conditions. Without the explicit geometric regularization provided by the penalties, the trajectories fail to always converge. For the $\erf$ predictor there are runs which get trapped in high loss regions. On the other hand, the optimization of the softmax predictor appears better behaved, with most of the runs converging in a similar manner as for the penalized version; however, there are some runs that get trapped in regions with high positive $\nu$ and high negative $\kappa$, far from $\mathcal{M}$. This empirical finding underscores the practical necessity of the regularization terms when operating outside the theoretical guarantees of Theorem~\ref{thm:pgd_convergence}.

\begin{figure}[htbp]
    \centering
    \begin{subfigure}{\textwidth}
        \centering
        \includegraphics[width=0.24\textwidth]{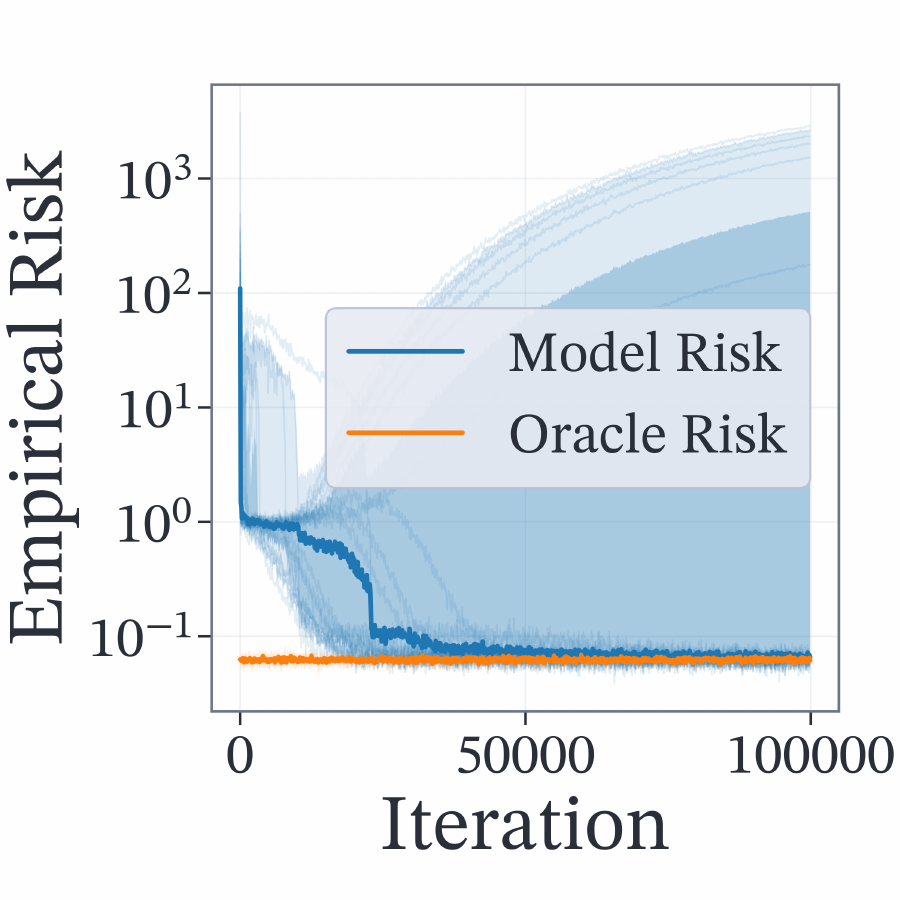}\hfill
        \includegraphics[width=0.24\textwidth]{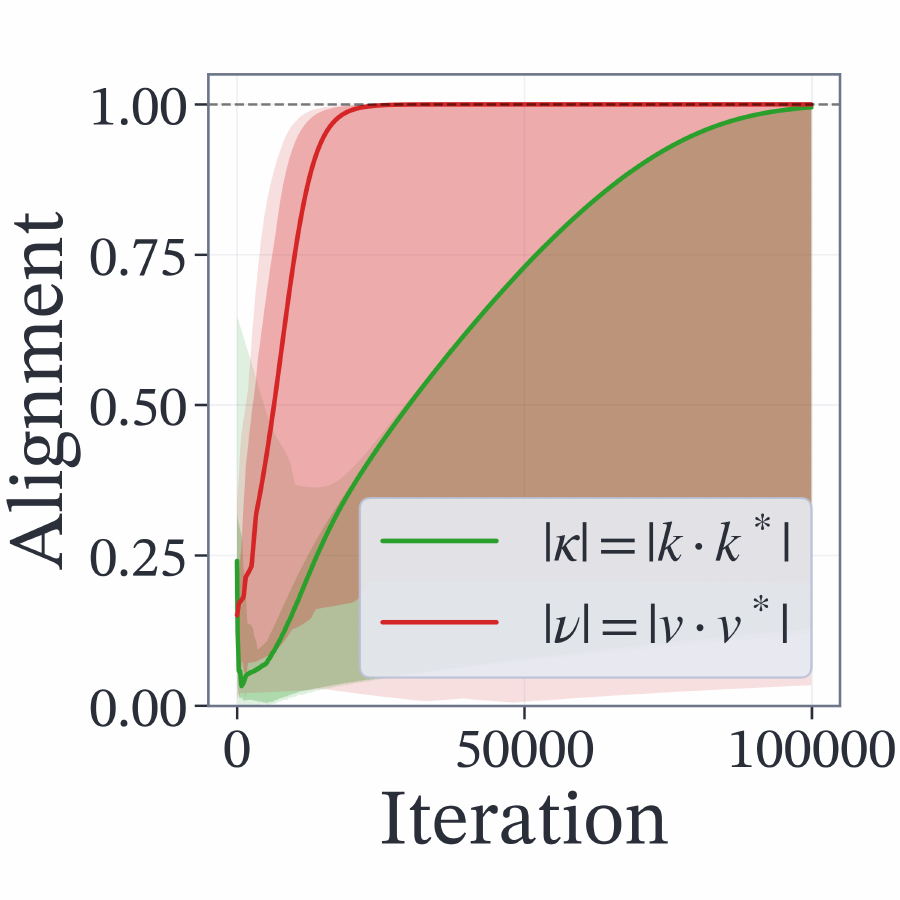}\hfill
        \includegraphics[width=0.24\textwidth]{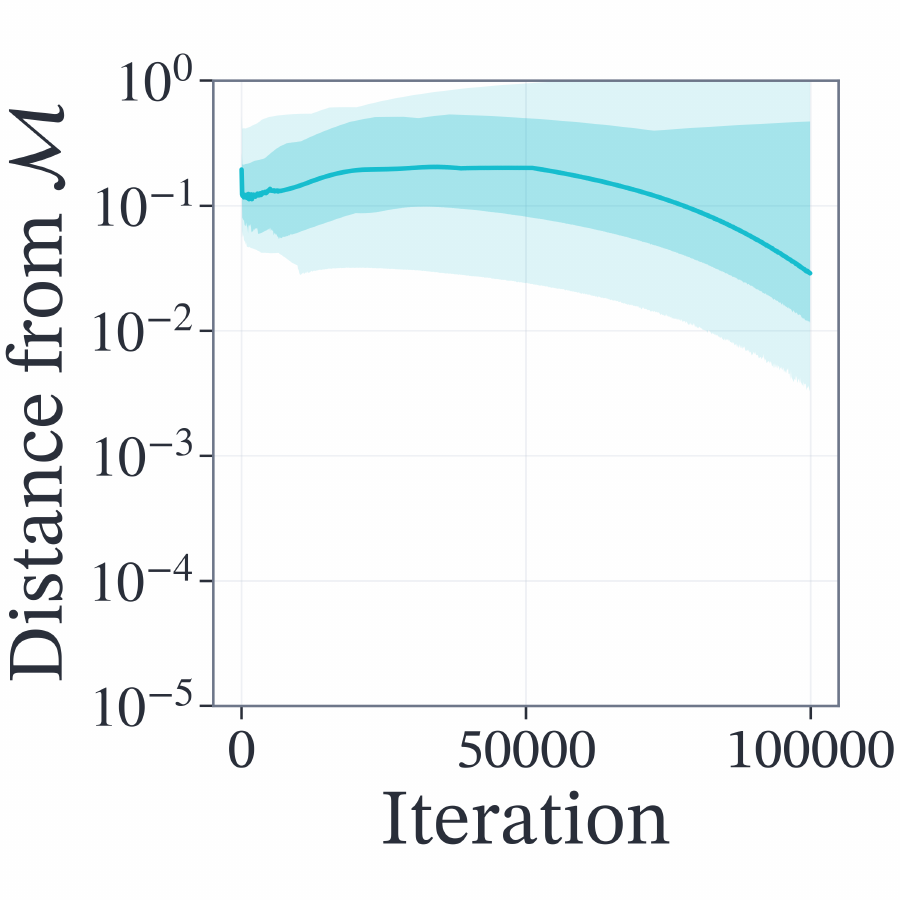}\hfill
        \includegraphics[width=0.24\textwidth]{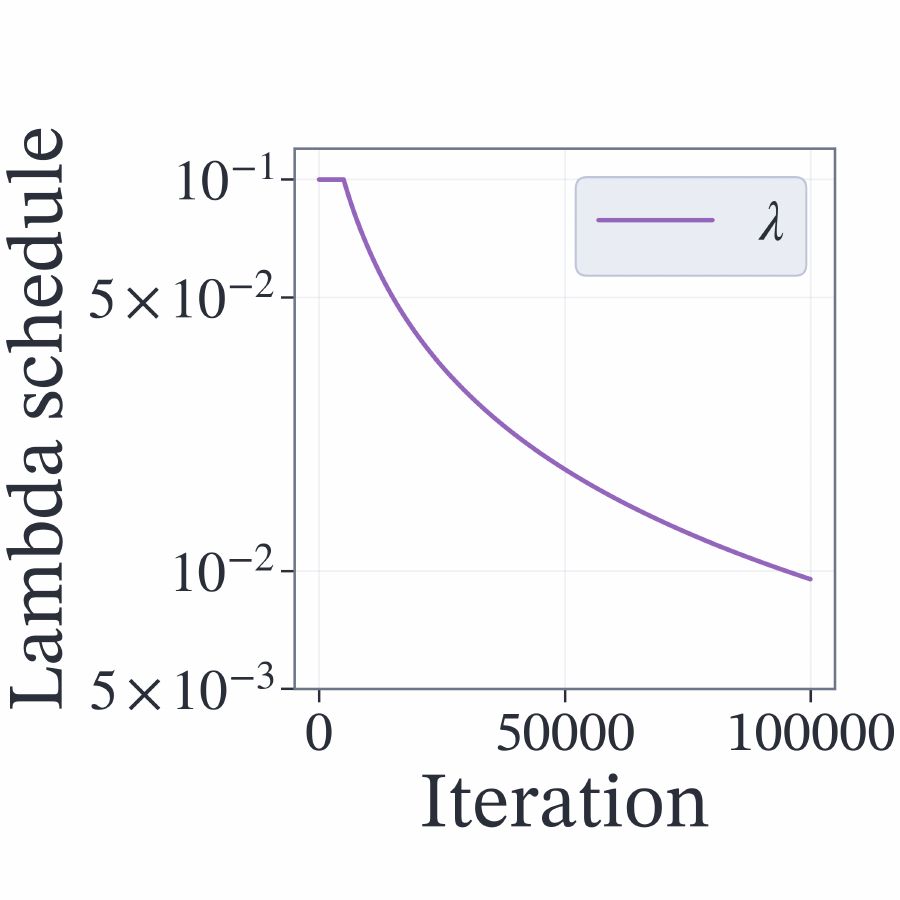}
        \caption{Dynamics for the $\erf$ predictor.}
    \end{subfigure}
    
    \vspace{1em}

    \begin{subfigure}{\textwidth}
        \centering
        \includegraphics[width=0.24\textwidth]{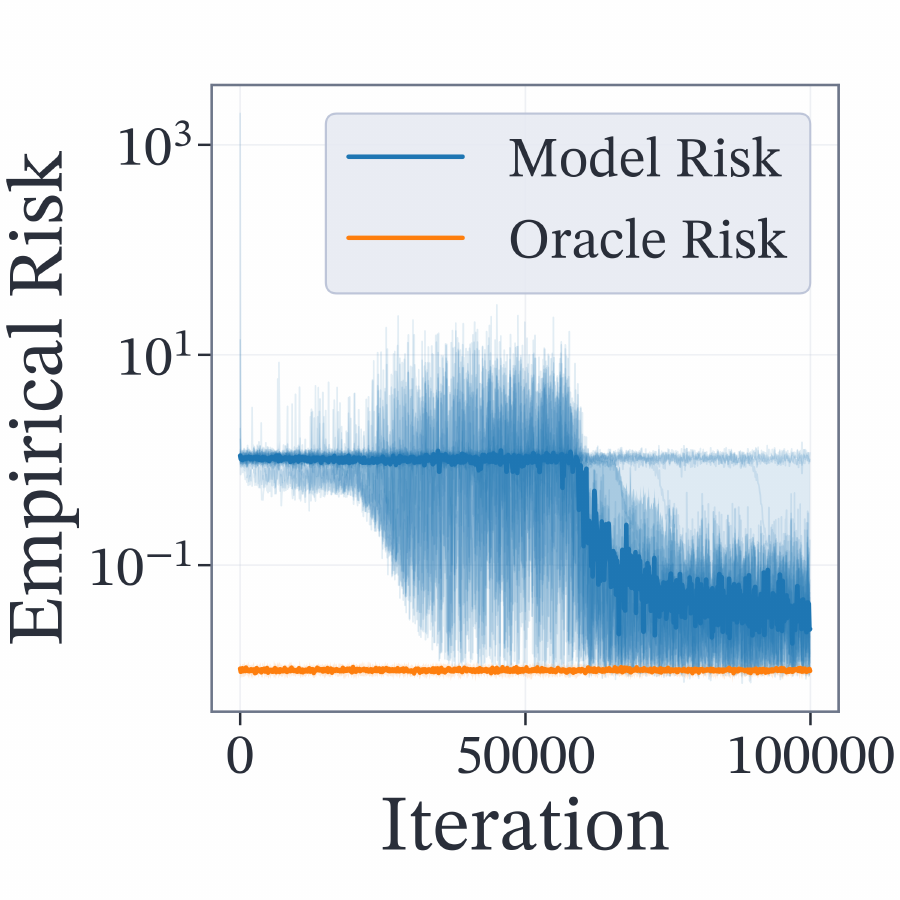}\hfill
        \includegraphics[width=0.24\textwidth]{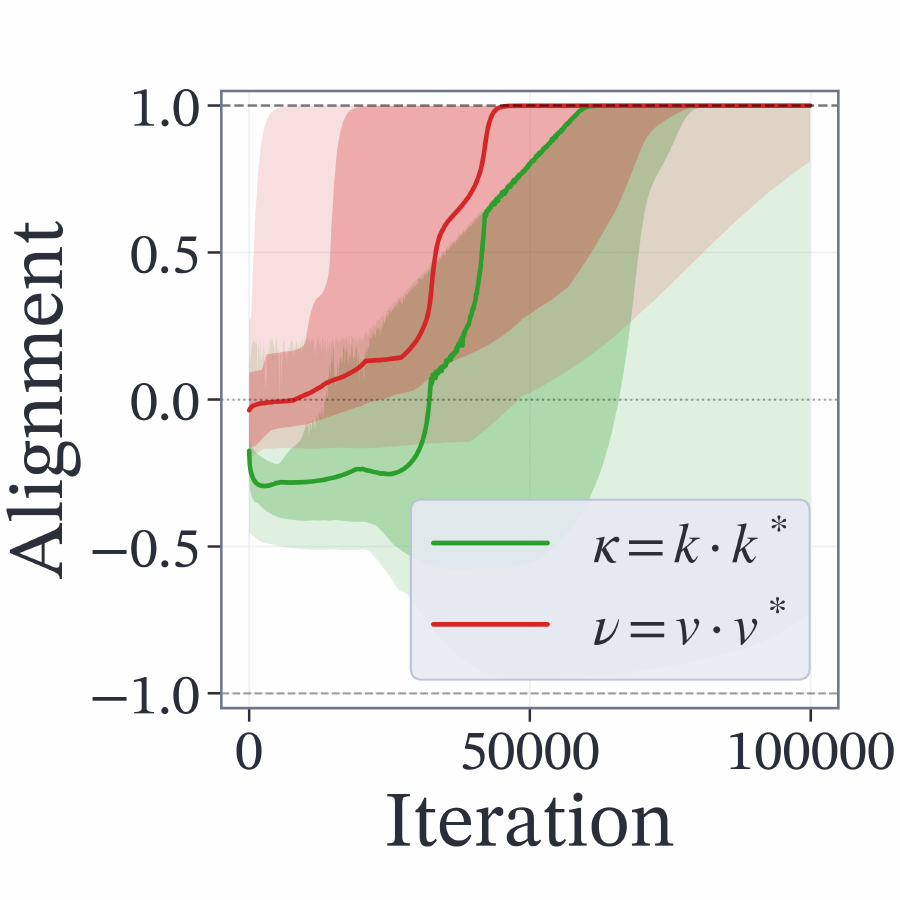}\hfill
        \includegraphics[width=0.24\textwidth]{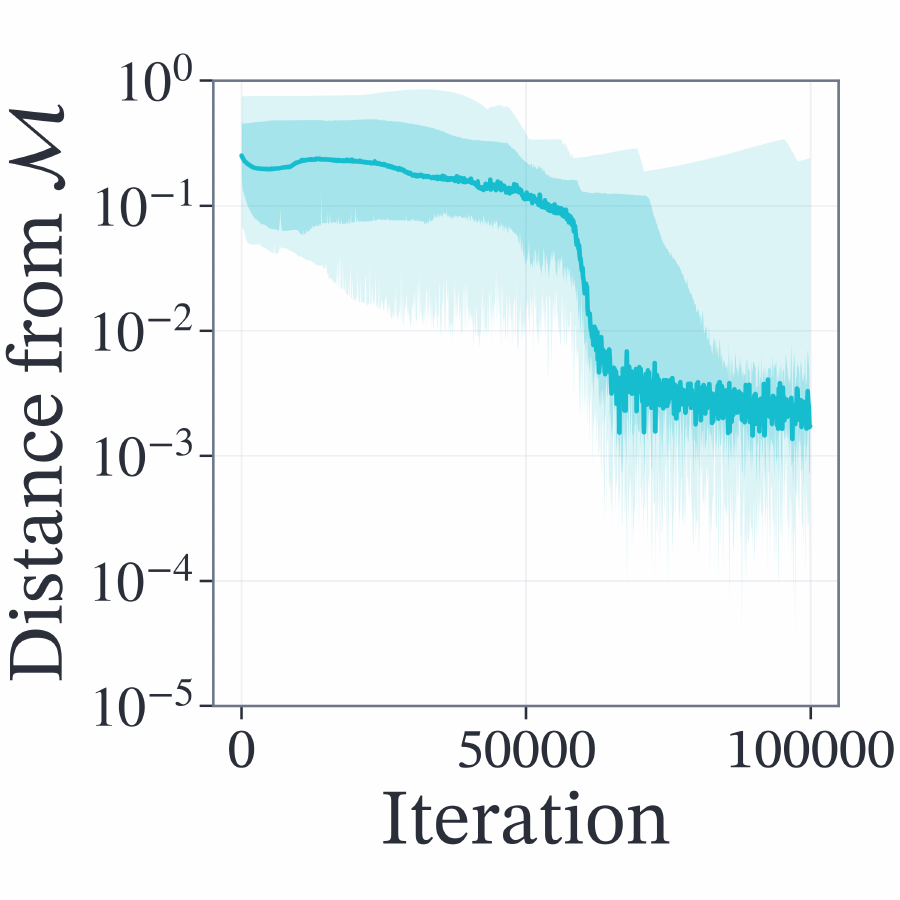}\hfill
        \includegraphics[width=0.24\textwidth]{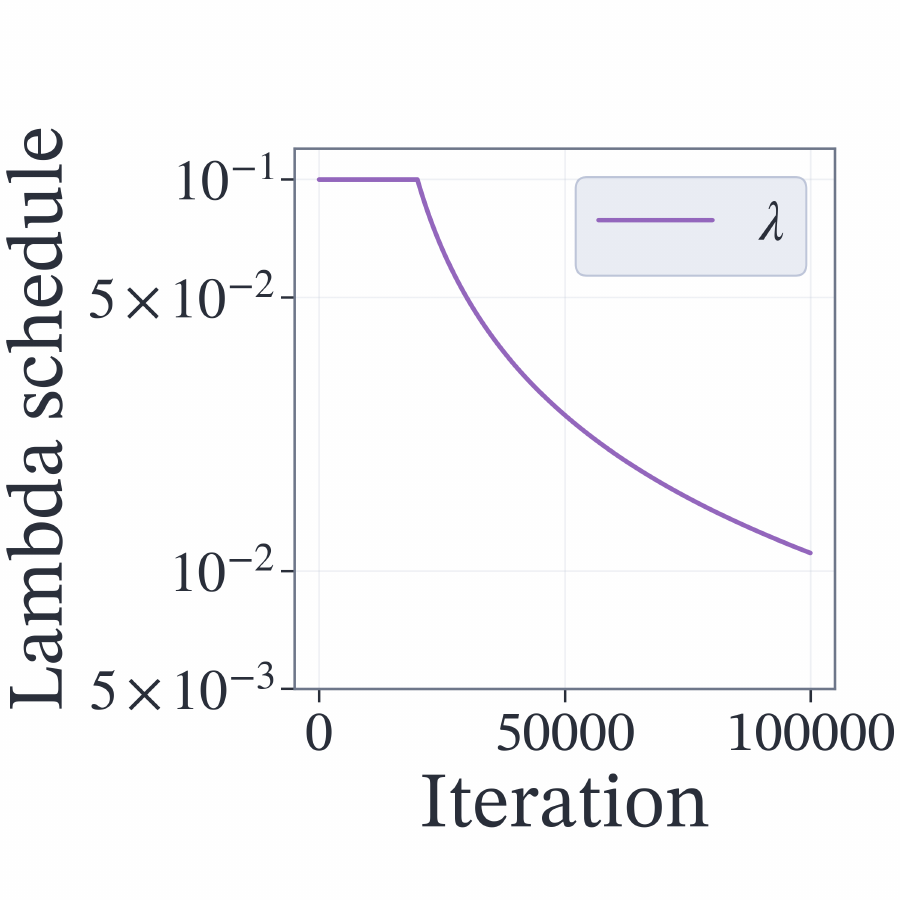}
        \caption{Dynamics for the $\text{softmax}$ predictor.}
    \end{subfigure}
    \caption{Optimization dynamics from 20 random initializations  without the alignment penalties ($\mu_1 = \mu_2 = 0$). The solid curve represents the empirical median across the runs. The bands represent the 0.05-0.95 and, respectively, the 0.25-0.75 quantile intervals. The parameters used are $M = N = 5$, $d = 20$, $\gamma = 1$, $\ell = 100$ and $\lambda_t =  10^{-1} / (1 + \max(t - 5\cdot 10^4, 0) \cdot 10^{-4})$, for the $\erf$ predictor and $\lambda_t =  10^{-1} / (1 + \max(t - 2\cdot 10^4, 0) \cdot 10^{-4})$ for the softmax predictor. For computing the oracle loss we use the initial inverse temperatures.}
    \label{fig:app_no_penalties}
\end{figure}

\subsection{Full results for the experiment in Figure~\ref{fig:vary_lambda_gradients}} \label{sec:app_constant_lambda}

In Section~\ref{sec:sec05}, we discussed the optimization dynamics under various constant inverse temperatures following an initial warm-up phase. Here, we present the full empirical results for this experiment.

As illustrated in Figures~\ref{fig:lambda_0.01}, \ref{fig:lambda_0.03}, \ref{fig:lambda_0.05}, \ref{fig:lambda_0.07}, the training procedure begins with a 20,000 step warm up using $\lambda_c = \lambda_r = 0.1$. After this period, the inverse temperatures are held constant at values $\lambda \in \{0.01, 0.03, 0.05, 0.07\}$. We observe that without a meticulously chosen lambda schedule, not all the runs converged in the simulated 100,000 steps. For lower values of $\lambda$, more runs got stuck around the origin compared to the more consistent convergence obtained for larger $\lambda$s.

These full results empirically validate the need to employ a decreasing schedule for the inverse temperatures. A static $\lambda$ forces a compromise between quickly escaping the saddle point at the origin and converging precisely at the global minima. A schedule dynamically adapts the landscape to first push the parameters away from the saddle point, and subsequently allows for fine-tuned convergence to the exact oracle parameters.

\begin{figure}[htbp]
    \centering
    \begin{subfigure}[b]{0.24\textwidth}
        \centering
        \includegraphics[width=\textwidth]{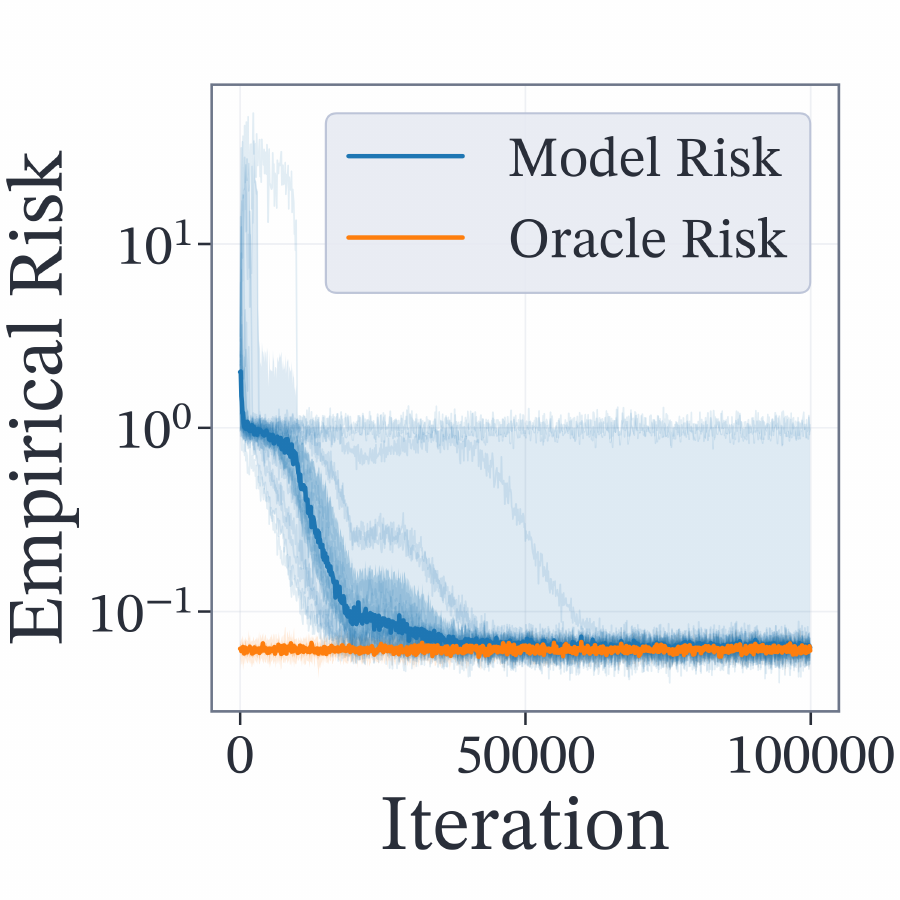}
        \label{fig:lambda_0.01_loss}
    \end{subfigure}
    \hfill
    \begin{subfigure}[b]{0.24\textwidth}
        \centering
        \includegraphics[width=\textwidth]{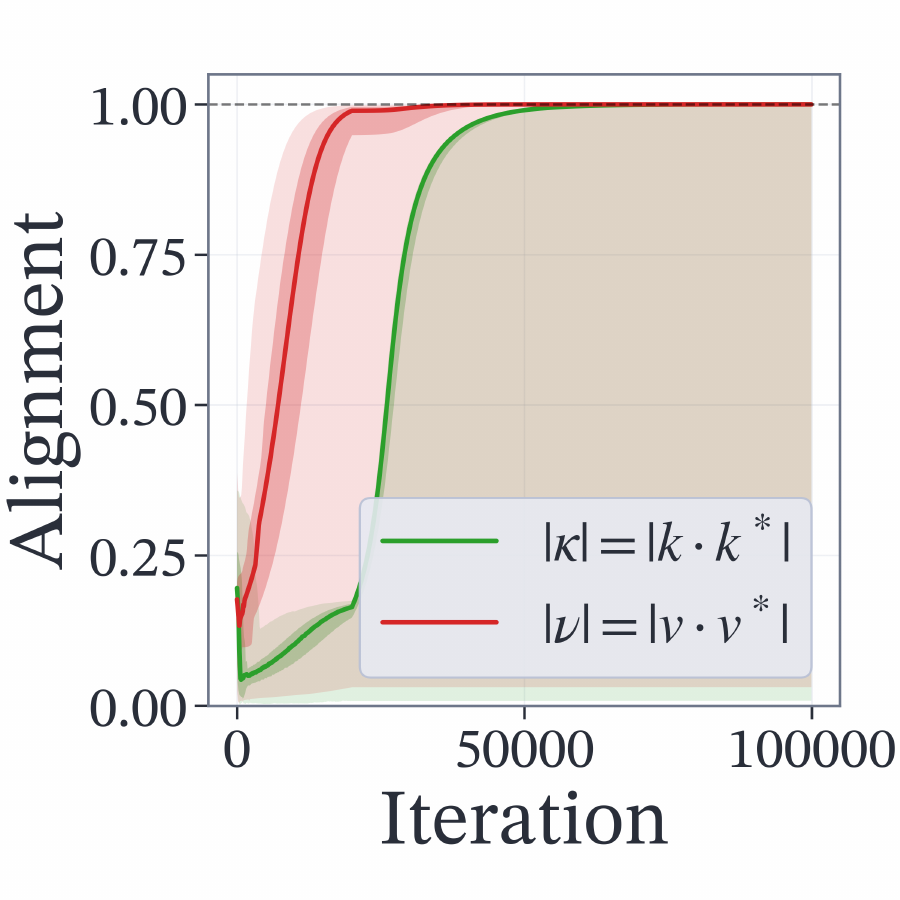}
        \label{fig:lambda_0.01_kappa}
    \end{subfigure}
    \hfill
    \begin{subfigure}[b]{0.24\textwidth}
        \centering
        \includegraphics[width=\textwidth]{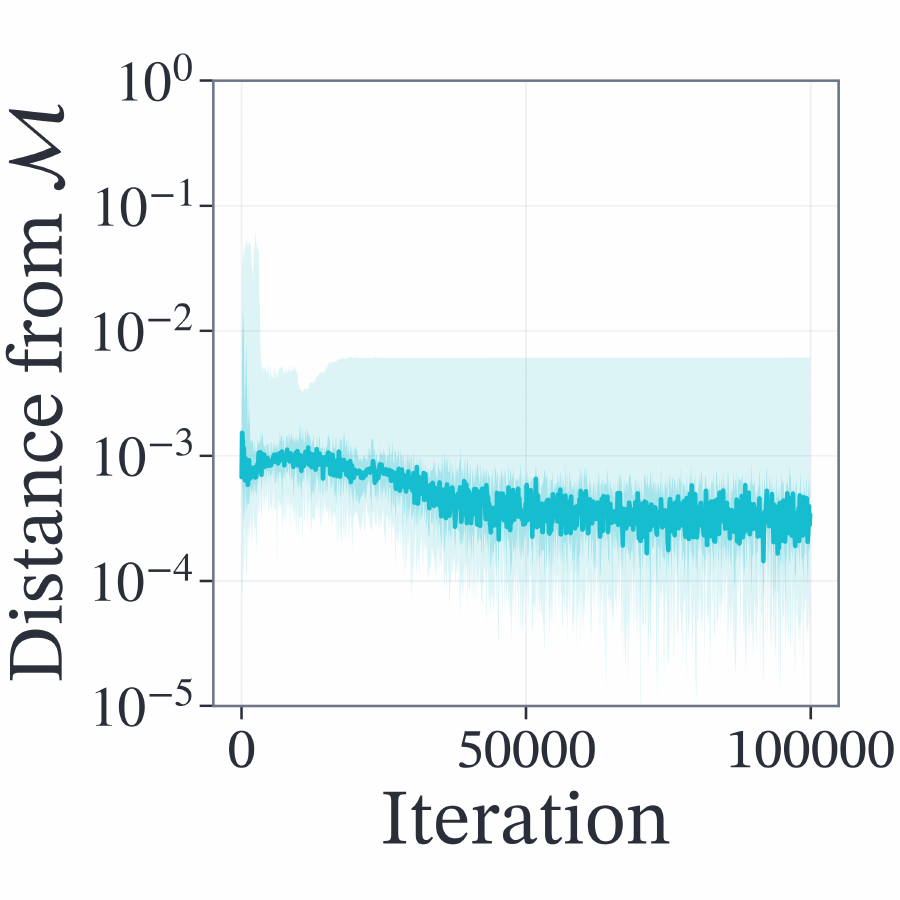}
        \label{fig:lambda_0.01_nu}
    \end{subfigure}
    \hfill
    \begin{subfigure}[b]{0.24\textwidth}
        \centering
        \includegraphics[width=\textwidth]{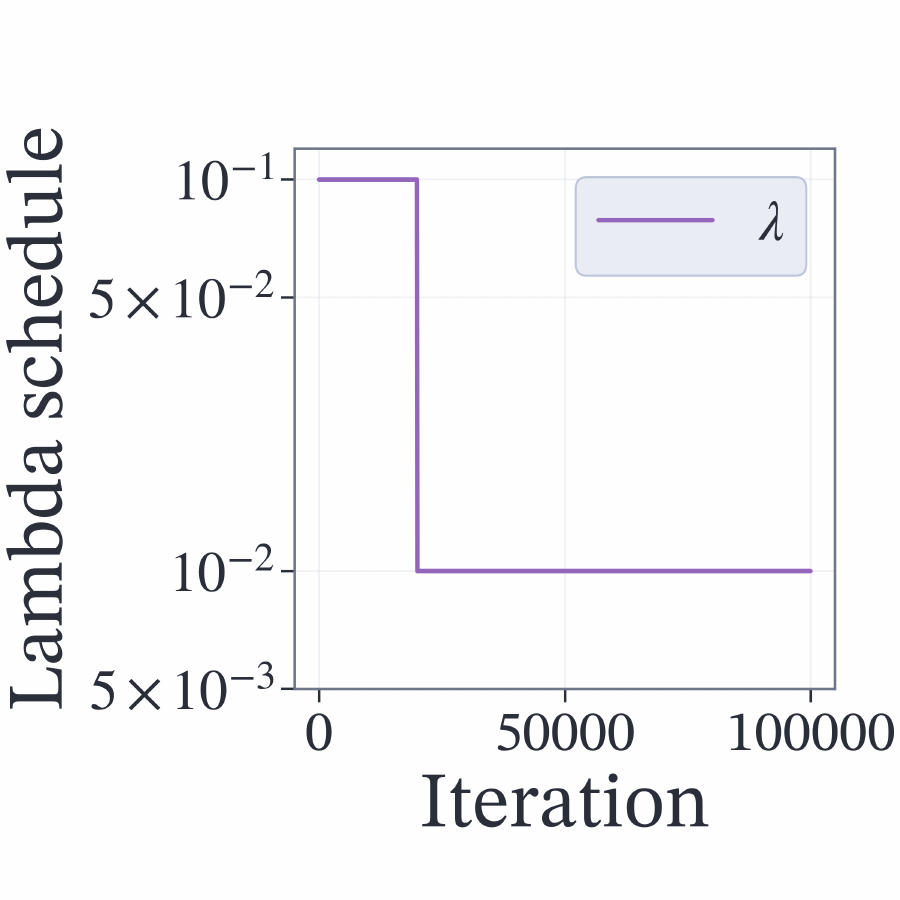}
        \label{fig:lambda_0.01_lambda}
    \end{subfigure}
    
    \caption{Training dynamics of the predictor with a 20k warm-up phase $\lambda_c = \lambda_r = \lambda = 0.1$ followed by constant inverse temperatures $\lambda_c = \lambda_r = 0.01$. The solid curve represents the empirical median across 20 runs and  the bands represent the 0.05-0.95 and, respectively, the 0.25-0.75 quantile intervals (although not always visible). The parameters used are $M = N = 5$, $d = 20$, $\gamma = 1$, $\ell = 100$.}
    \label{fig:lambda_0.01}
\end{figure}

\begin{figure}[htbp]
    \centering
    \begin{subfigure}[b]{0.24\textwidth}
        \centering
        \includegraphics[width=\textwidth]{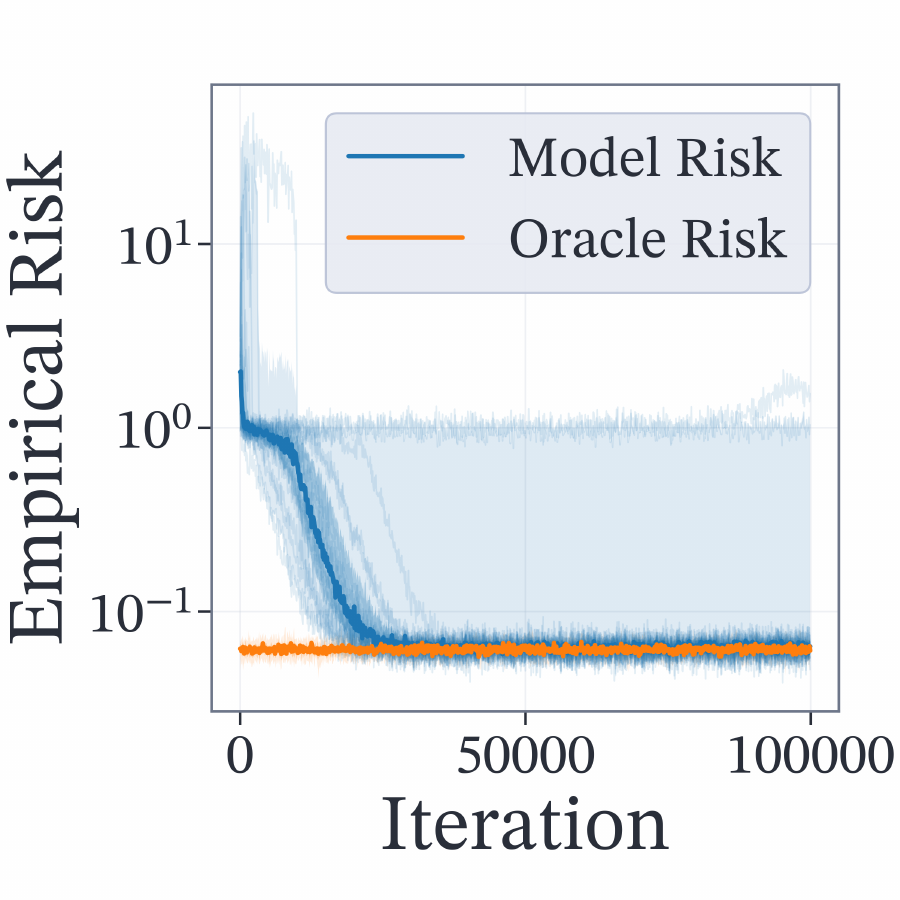}
        \label{fig:lambda_0.03_loss}
    \end{subfigure}
    \hfill
    \begin{subfigure}[b]{0.24\textwidth}
        \centering
        \includegraphics[width=\textwidth]{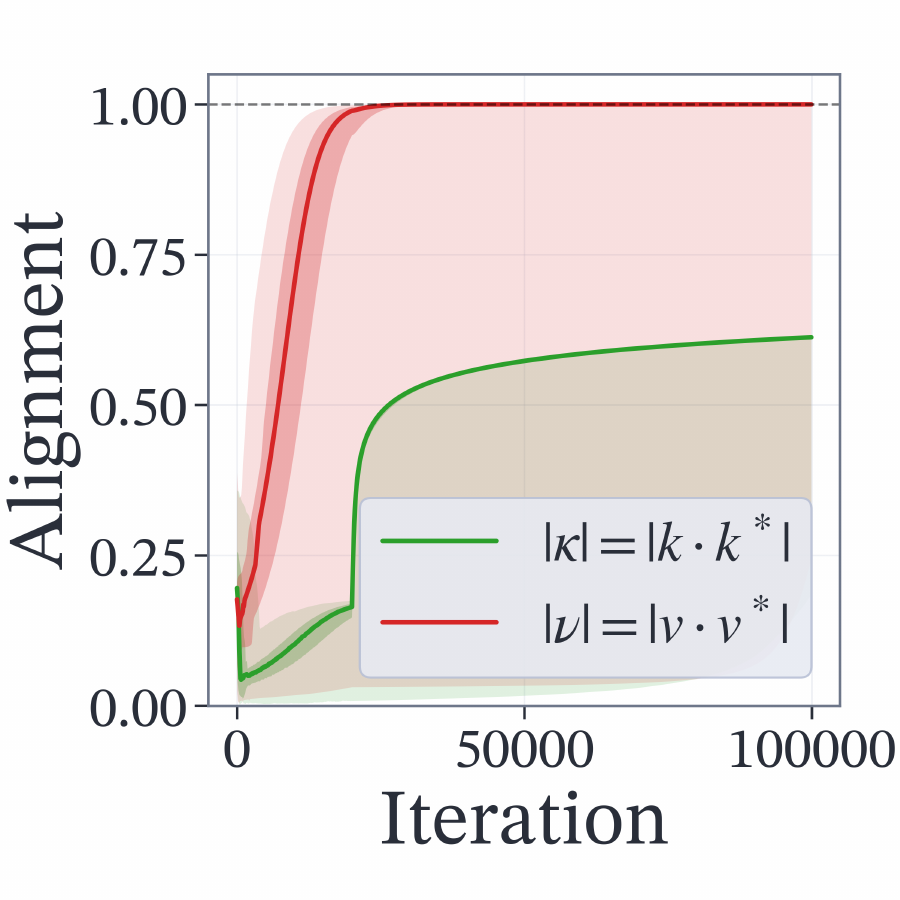}
        \label{fig:lambda_0.03_kappa}
    \end{subfigure}
    \hfill
    \begin{subfigure}[b]{0.24\textwidth}
        \centering
        \includegraphics[width=\textwidth]{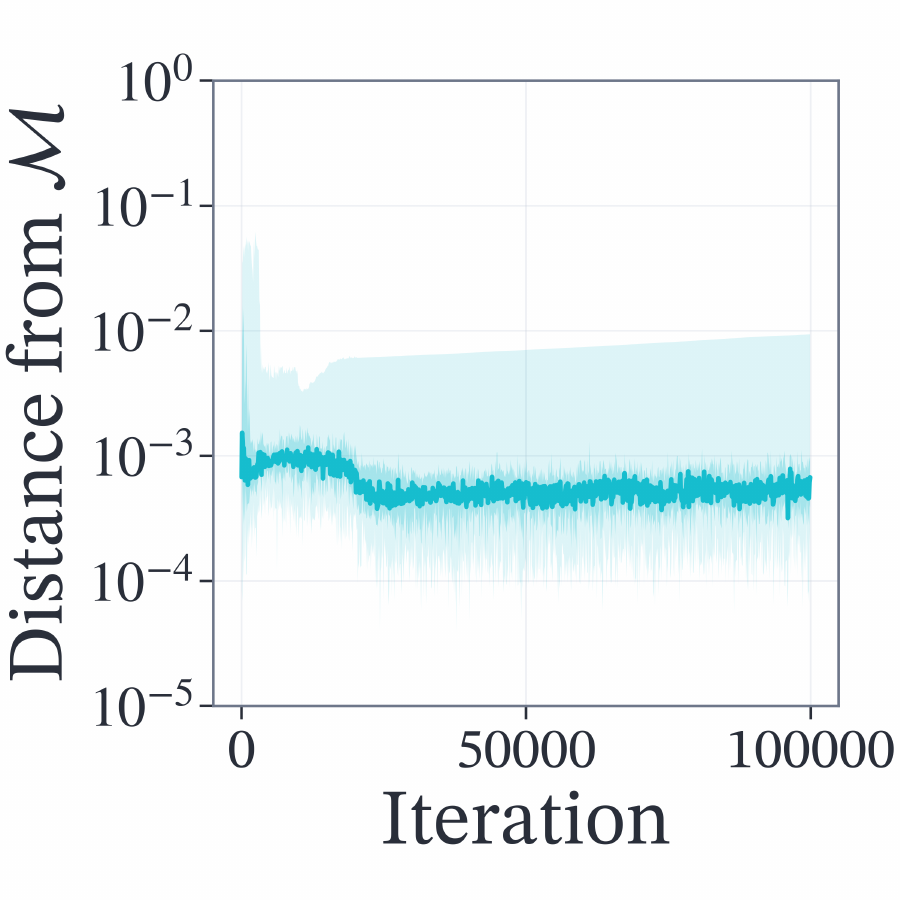}
        \label{fig:lambda_0.03_nu}
    \end{subfigure}
    \hfill
    \begin{subfigure}[b]{0.24\textwidth}
        \centering
        \includegraphics[width=\textwidth]{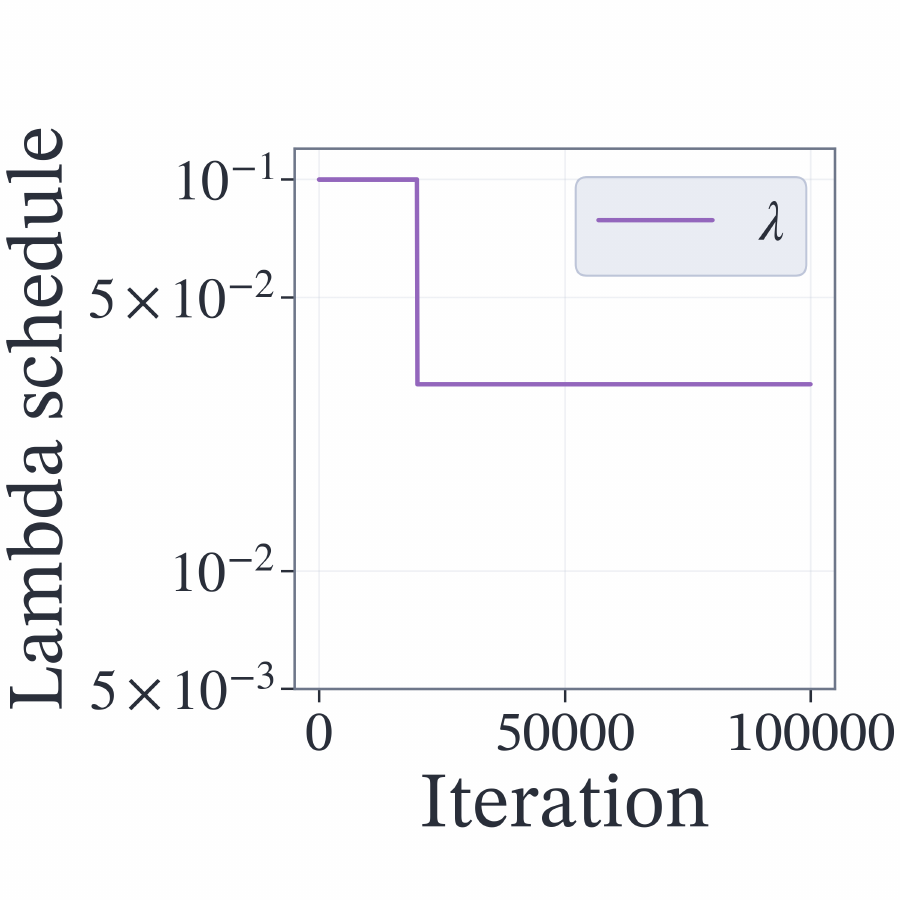}
        \label{fig:lambda_0.03_lambda}
    \end{subfigure}
    
    \caption{Training dynamics of the predictor with a 20k warm-up phase $\lambda_c = \lambda_r = \lambda = 0.1$ followed by constant inverse temperatures $\lambda_c = \lambda_r = 0.03$. The solid curve represents the empirical median across 20 runs and  the bands represent the 0.05-0.95 and, respectively, the 0.25-0.75 quantile intervals (although not always visible). The parameters used are $M = N = 5$, $d = 20$, $\gamma = 1$, $\ell = 100$.}
    \label{fig:lambda_0.03}
\end{figure}

\begin{figure}[htbp]
    \centering
    \begin{subfigure}[b]{0.24\textwidth}
        \centering
        \includegraphics[width=\textwidth]{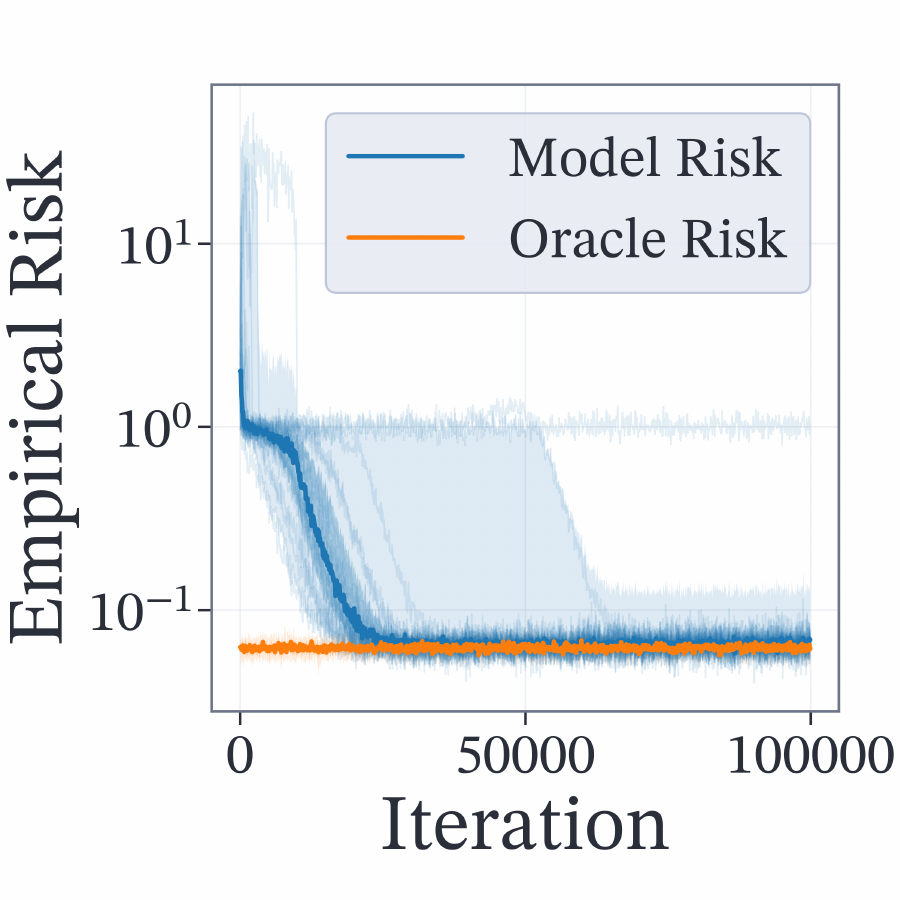}
        \label{fig:lambda_0.05_loss}
    \end{subfigure}
    \hfill
    \begin{subfigure}[b]{0.24\textwidth}
        \centering
        \includegraphics[width=\textwidth]{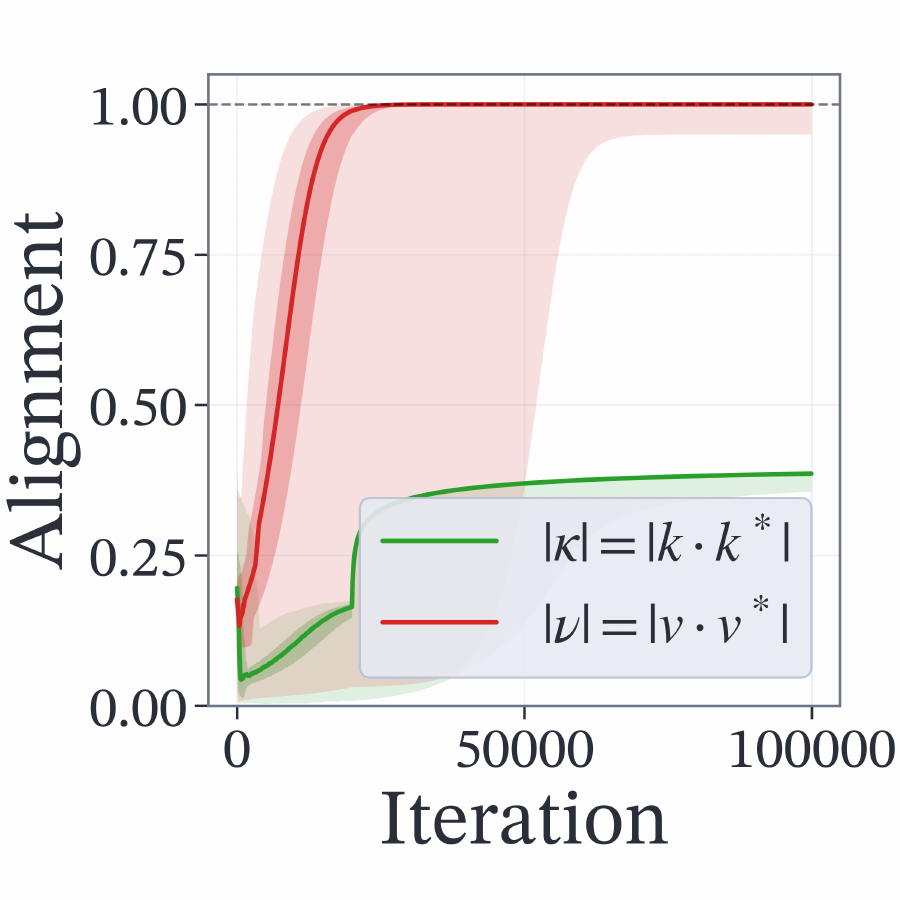}
        \label{fig:lambda_0.05_kappa}
    \end{subfigure}
    \hfill
    \begin{subfigure}[b]{0.24\textwidth}
        \centering
        \includegraphics[width=\textwidth]{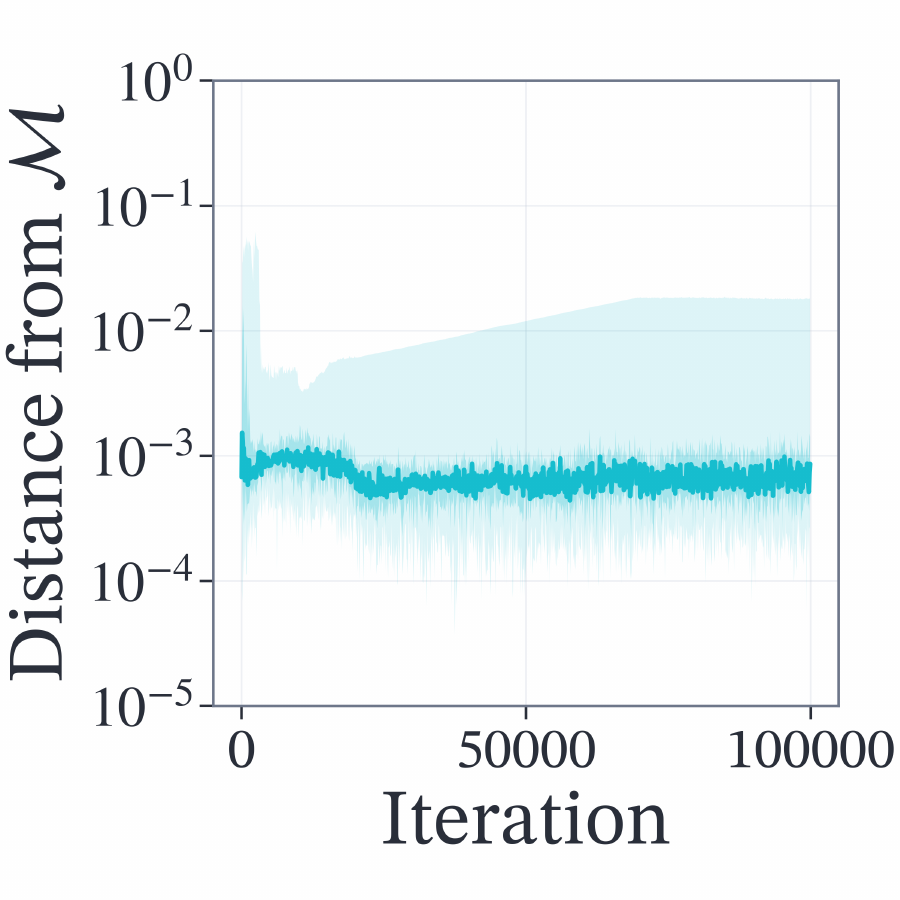}
        \label{fig:lambda_0.05_nu}
    \end{subfigure}
    \hfill
    \begin{subfigure}[b]{0.24\textwidth}
        \centering
        \includegraphics[width=\textwidth]{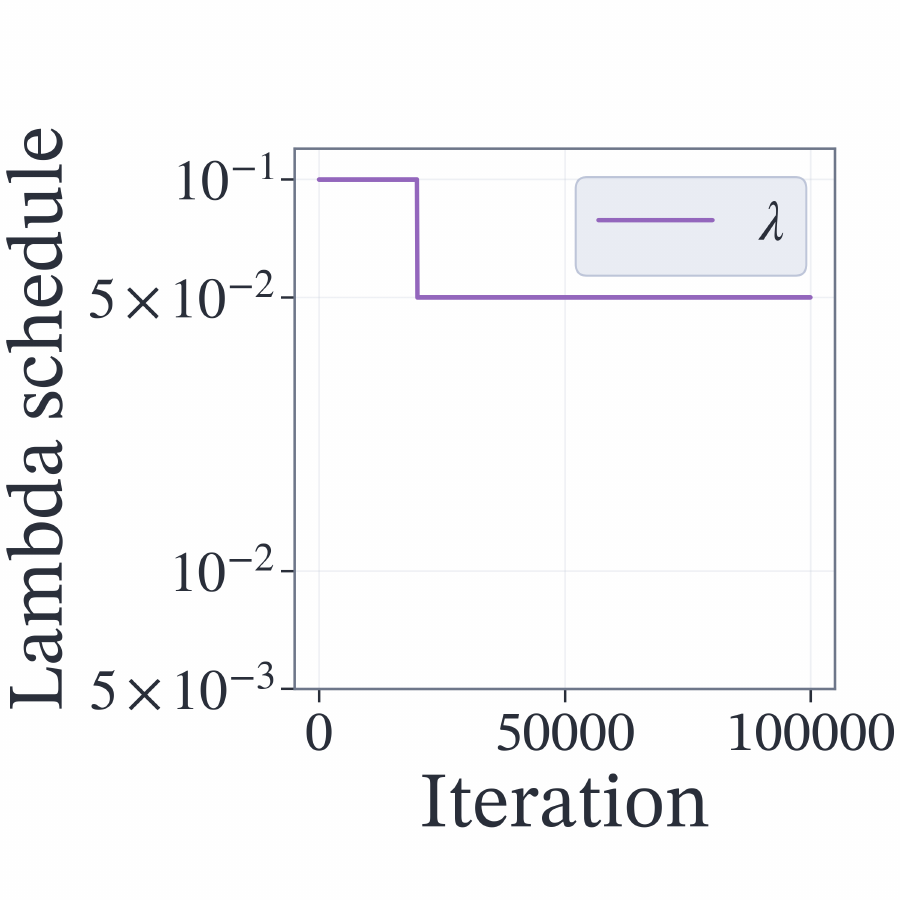}
        \label{fig:lambda_0.05_lambda}
    \end{subfigure}
    
    \caption{Training dynamics of the predictor with a 20k warm-up phase $\lambda_c = \lambda_r = \lambda = 0.1$ followed by constant inverse temperatures $\lambda_c = \lambda_r = 0.05$. The solid curve represents the empirical median across 20 runs and  the bands represent the 0.05-0.95 and, respectively, the 0.25-0.75 quantile intervals (although not always visible). The parameters used are $M = N = 5$, $d = 20$, $\gamma = 1$, $\ell = 100$.}
    \label{fig:lambda_0.05}
\end{figure}

\begin{figure}[htbp]
    \centering
    \begin{subfigure}[b]{0.24\textwidth}
        \centering
        \includegraphics[width=\textwidth]{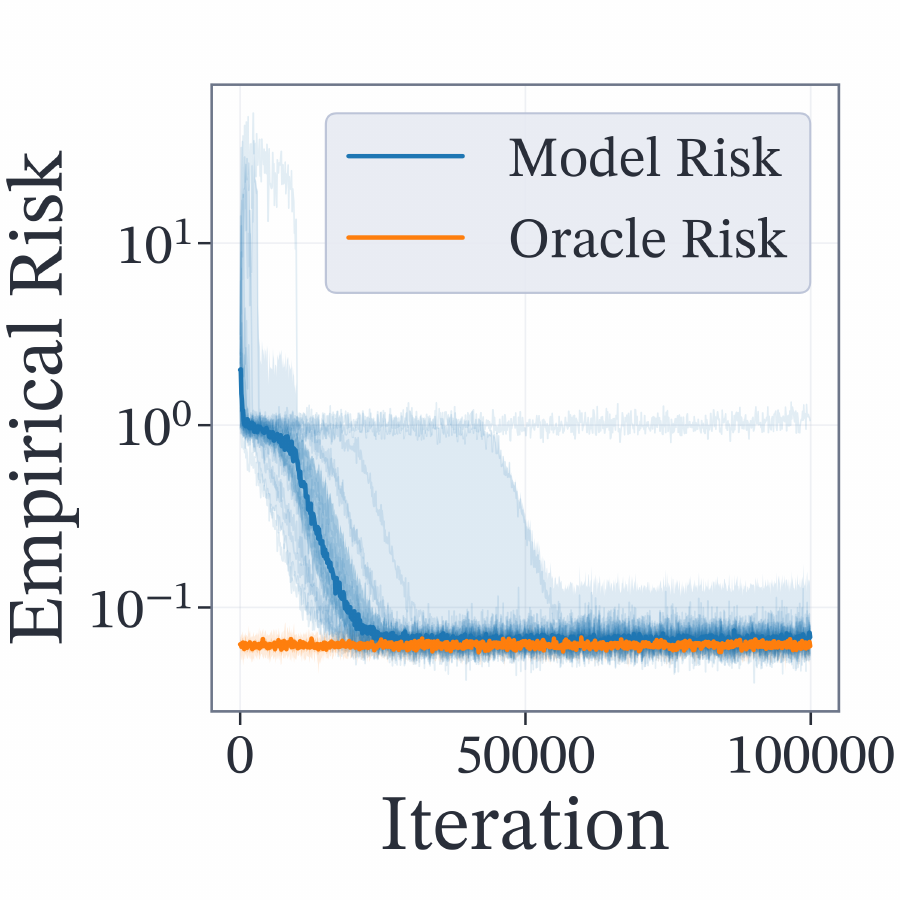}
        \label{fig:lambda_0.07_loss}
    \end{subfigure}
    \hfill
    \begin{subfigure}[b]{0.24\textwidth}
        \centering
        \includegraphics[width=\textwidth]{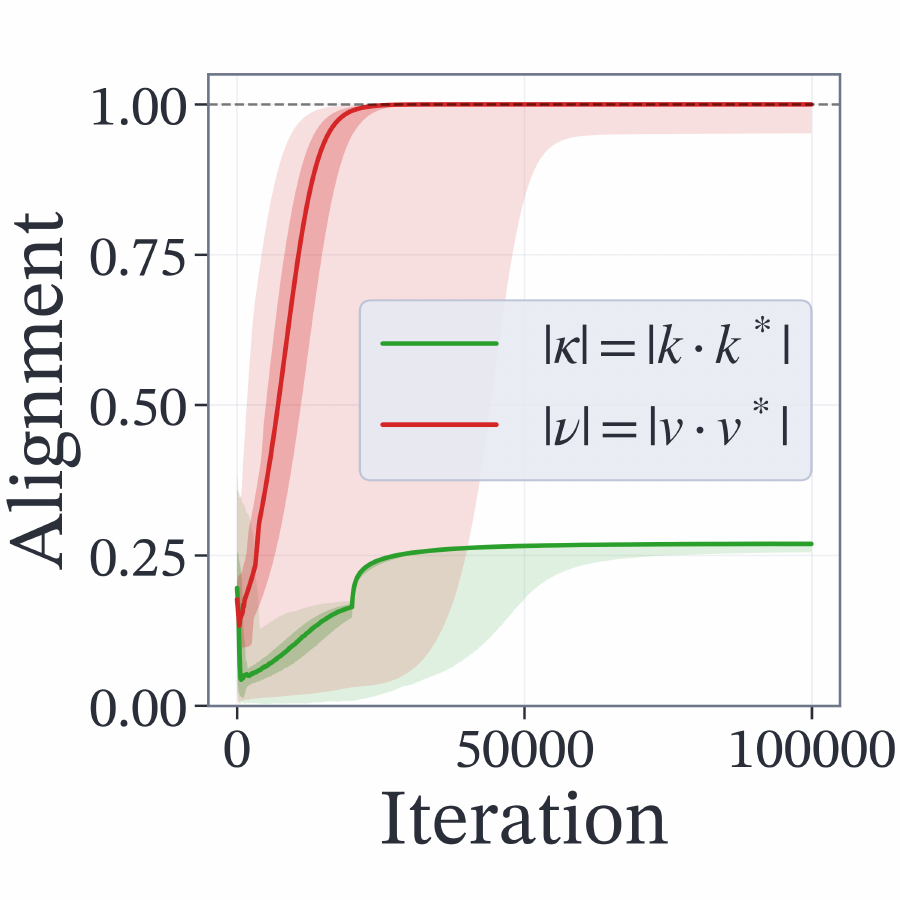}
        \label{fig:lambda_0.07_kappa}
    \end{subfigure}
    \hfill
    \begin{subfigure}[b]{0.24\textwidth}
        \centering
        \includegraphics[width=\textwidth]{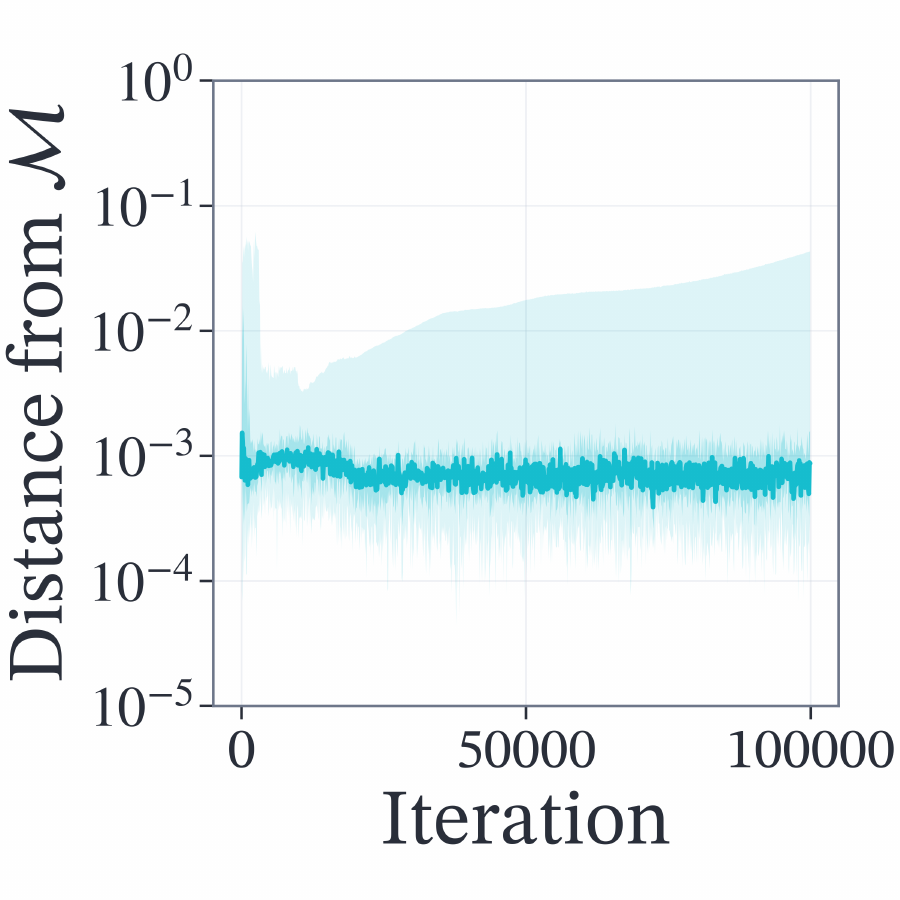}
        \label{fig:lambda_0.07_nu}
    \end{subfigure}
    \hfill
    \begin{subfigure}[b]{0.24\textwidth}
        \centering
        \includegraphics[width=\textwidth]{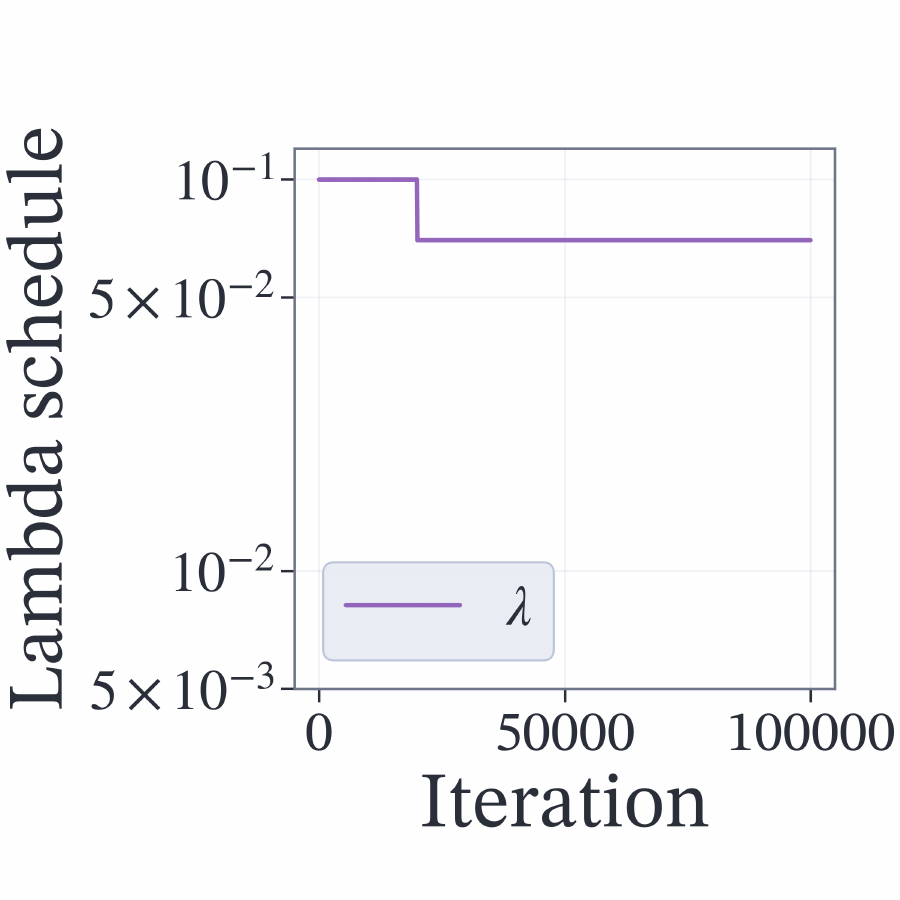}
        \label{fig:lambda_0.07_lambda}
    \end{subfigure}
    
    \caption{Training dynamics of the predictor with a 20k warm-up phase $\lambda_c = \lambda_r = \lambda = 0.1$ followed by constant inverse temperatures $\lambda_c = \lambda_r = 0.07$. The solid curve represents the empirical median across 20 runs and  the bands represent the 0.05-0.95 and, respectively, the 0.25-0.75 quantile intervals (although not always visible). The parameters used are $M = N = 5$, $d = 20$, $\gamma = 1$, $\ell = 100$.}
    \label{fig:lambda_0.07}
\end{figure}

\newpage

\subsection{Gradient analysis for decoupled inverse temperatures} \label{sec:app_decoupled_lambda}

This subsection isolates the effects of decoupling the two inverse temperatures. We visualize the gradient vector field of the reduced risk $\mathcal{R}^<_{\lambda_r, \lambda_c}(\kappa, \nu)$ evaluated on the invariant manifold $\mathcal{M}$ for multiple configurations of $(\lambda_r,  \lambda_c)$.

\begin{figure}[htbp]
    \centering
    \begin{subfigure}[b]{1\textwidth}
        \centering
        \includegraphics[width=\textwidth]{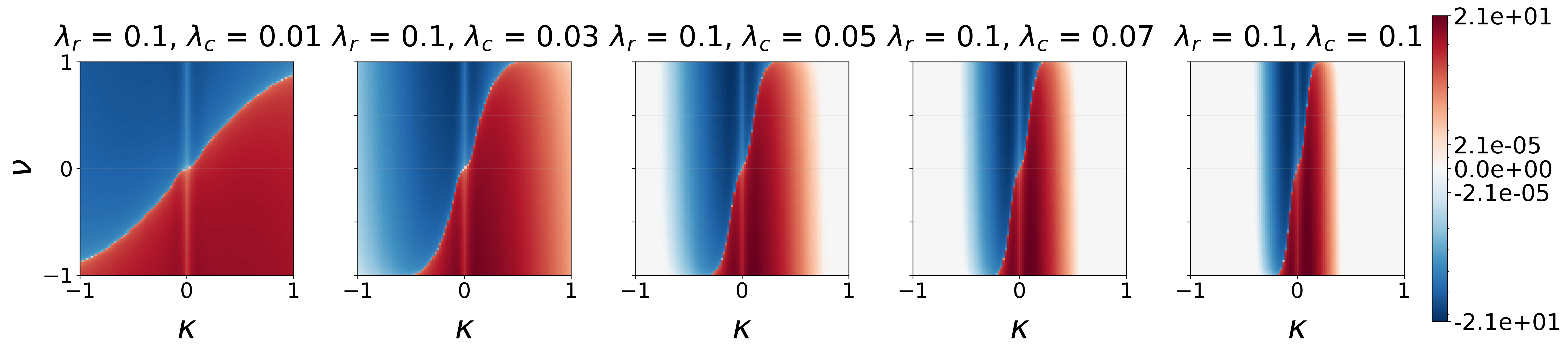}
        \caption{Partial derivative $\partial_\kappa \mathcal{R}^<_{\lambda_r, \lambda_c}$ for $\lambda_r = 0.1$ and $\lambda_c \in \{0.01, 0.03, 0.05, 0.07, 0.1\}.$}
        \label{fig:grad_lr_0p1_kappa}
    \end{subfigure}
    \begin{subfigure}[b]{1\textwidth}
        \centering
        \includegraphics[width=\textwidth]{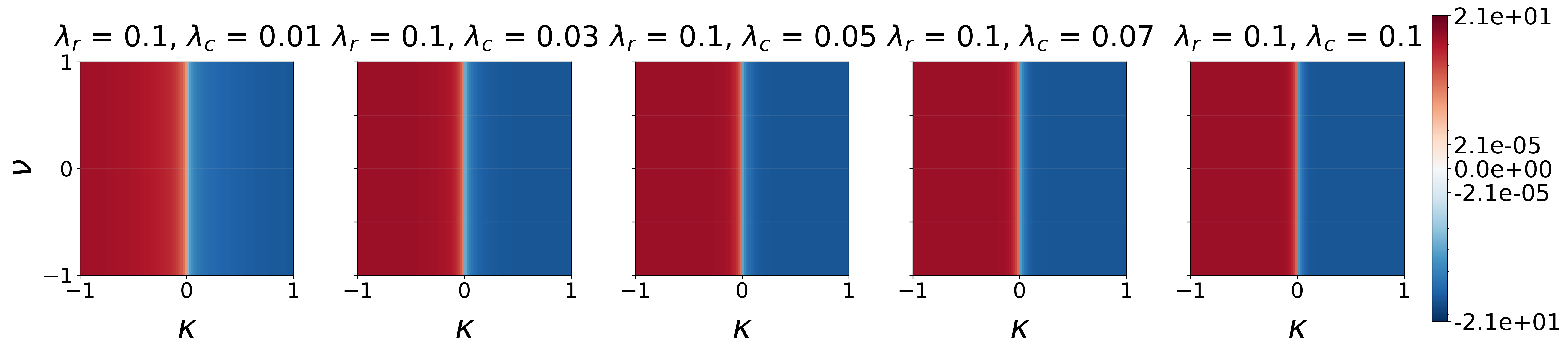}
        \caption{Partial derivative $\partial_\nu \mathcal{R}^<_{\lambda_r, \lambda_c}$ for $\lambda_r = 0.1$ and $\lambda_c \in \{0.01, 0.03, 0.05, 0.07, 0.1\}.$}
        \label{fig:grad_lr_0p1_nu}
    \end{subfigure}
    
    \vspace{1em}
    
    \begin{subfigure}[b]{1\textwidth}
        \centering
        \includegraphics[width=\textwidth]{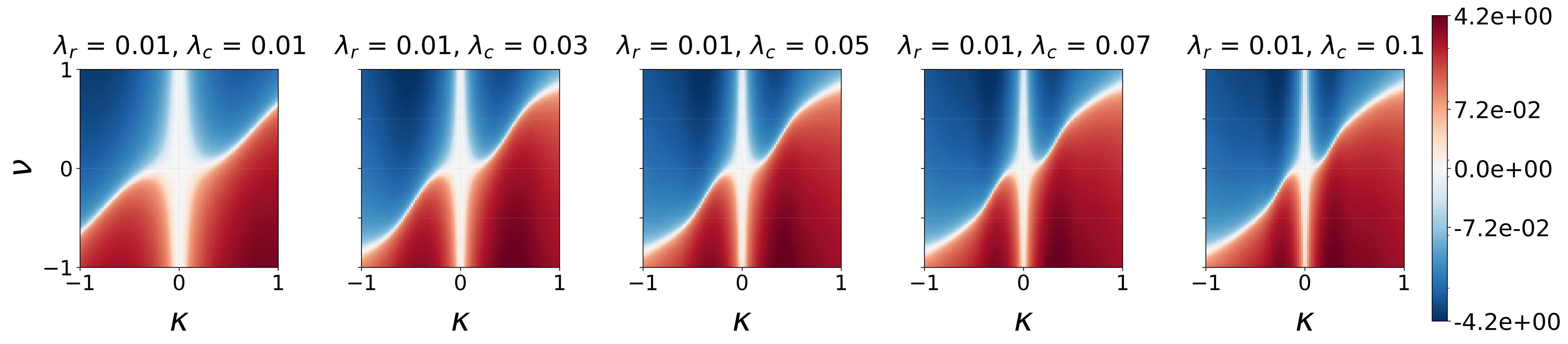}
        \caption{Partial derivative $\partial_\kappa \mathcal{R}^<_{\lambda_r, \lambda_c}$ for $\lambda_r = 0.01$ and $\lambda_c \in \{0.01, 0.03, 0.05, 0.07, 0.1\}.$}
        \label{fig:grad_lr_0p01_kappa}
    \end{subfigure}
    \begin{subfigure}[b]{1\textwidth}
        \centering
        \includegraphics[width=\textwidth]{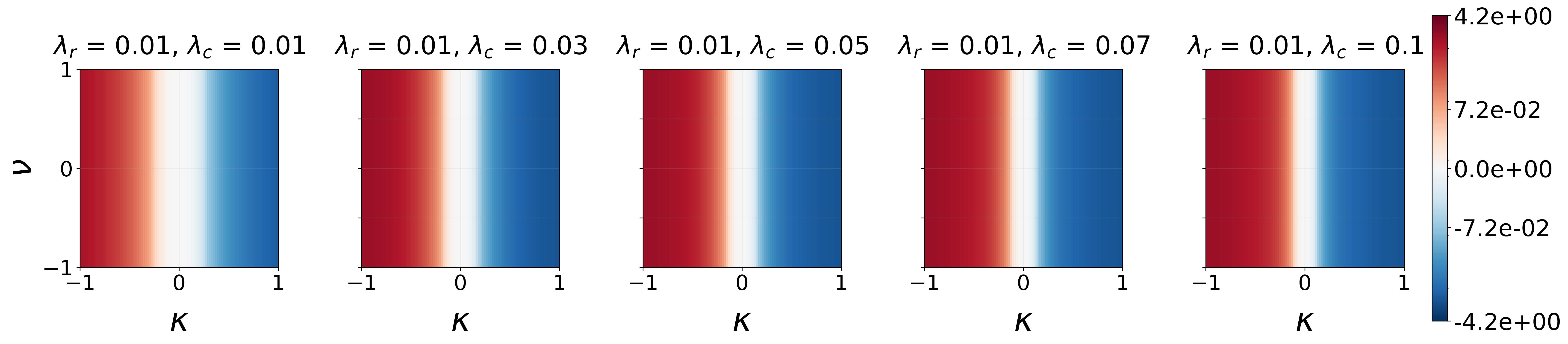}
        \caption{Partial derivative $\partial_\nu \mathcal{R}^<_{\lambda_r, \lambda_c}$ for $\lambda_r = 0.01$ and $\lambda_c \in \{0.01, 0.03, 0.05, 0.07, 0.1\}.$}
        \label{fig:grad_lr_0p01_nu}
    \end{subfigure}
    \caption{Numerically computed gradients of the theoretical risk $\mathcal{R}^<_{\lambda_r, \lambda_c}(\kappa, \nu)$ for $\lambda_r \in \{0.1, 0.01\}$ and varying values of $\lambda_c$. The parameters used are $M = N = 5$ and $\ell = 100$. The gradients, given by Equation~\eqref{eq:app_partials}, are approximated using the Monte Carlo method with a sample size of 10,000 on a $100 \times 100$ discretization grid.}
    \label{fig:app_gradients_lr}
\end{figure}

\begin{figure}[htbp]
    \begin{subfigure}[b]{1\textwidth}
        \centering
        \includegraphics[width=\textwidth]{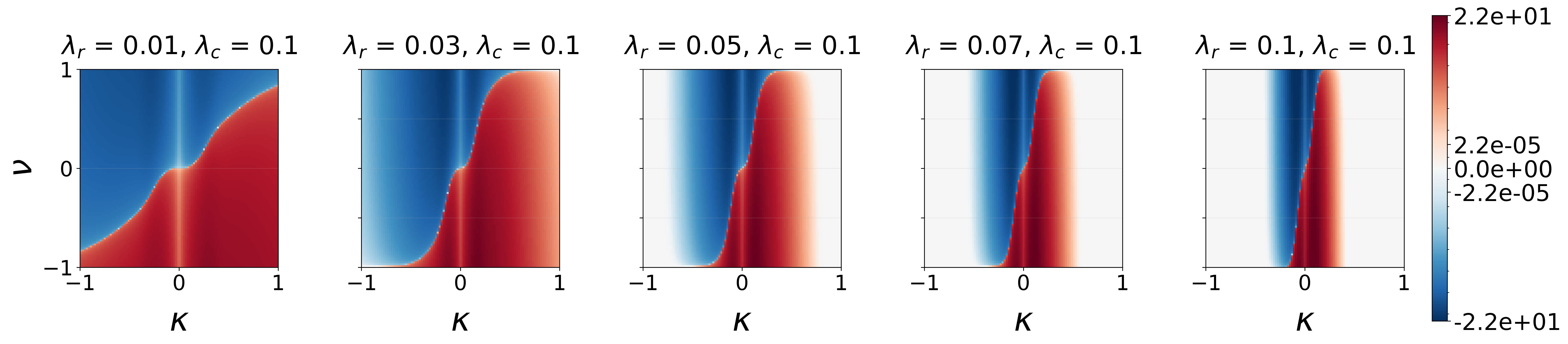}
        \caption{Partial derivative $\partial_\kappa \mathcal{R}^<_{\lambda_r, \lambda_c}$ for $\lambda_c = 0.1$ and $\lambda_r \in \{0.01, 0.03, 0.05, 0.07, 0.1\}.$}
        \label{fig:grad_lc_0p1_kappa}
    \end{subfigure}
    \begin{subfigure}[b]{1\textwidth}
        \centering
        \includegraphics[width=\textwidth]{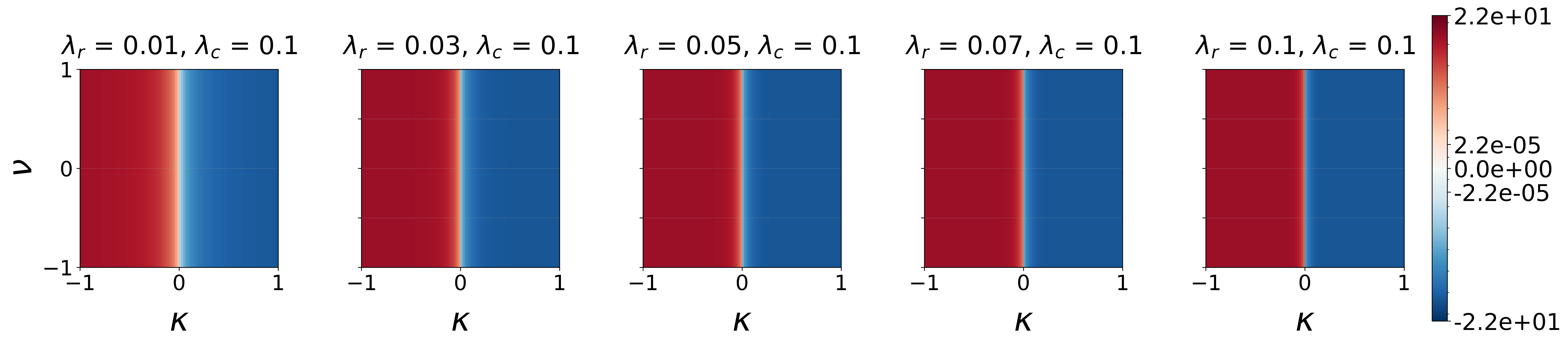}
        \caption{Partial derivative $\partial_\nu \mathcal{R}^<_{\lambda_r, \lambda_c}$ for $\lambda_c = 0.1$ and $\lambda_r \in \{0.01, 0.03, 0.05, 0.07, 0.1\}.$}
        \label{fig:grad_lc_0p1_nu}
    \end{subfigure}
    
    \vspace{1em}
    
    \begin{subfigure}[b]{1\textwidth}
        \centering
        \includegraphics[width=\textwidth]{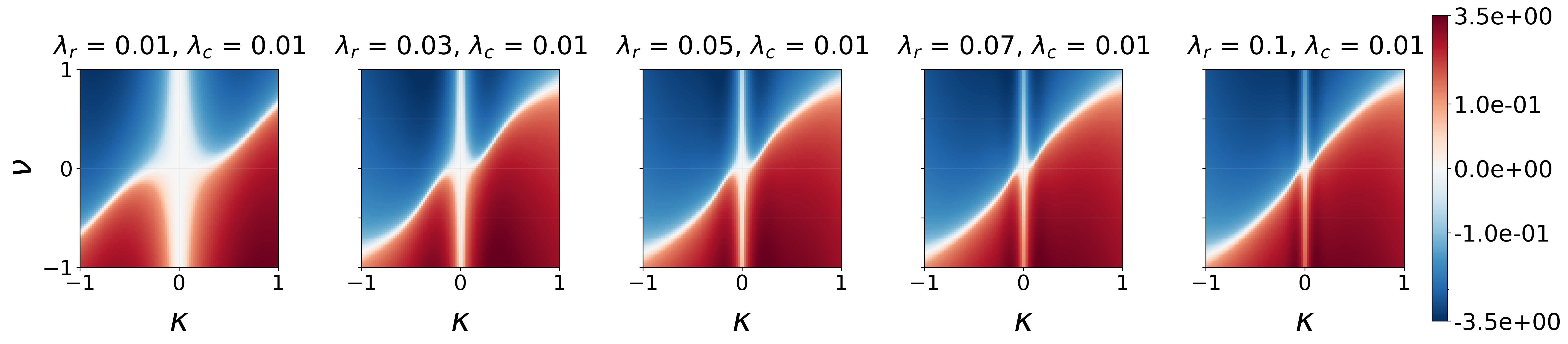}
        \caption{Partial derivative $\partial_\kappa \mathcal{R}^<_{\lambda_r, \lambda_c}$ for $\lambda_c = 0.01$ and $\lambda_r \in \{0.01, 0.03, 0.05, 0.07, 0.1\}.$}
        \label{fig:grad_lc_0p01_kappa}
    \end{subfigure}
    \begin{subfigure}[b]{1\textwidth}
        \centering
        \includegraphics[width=\textwidth]{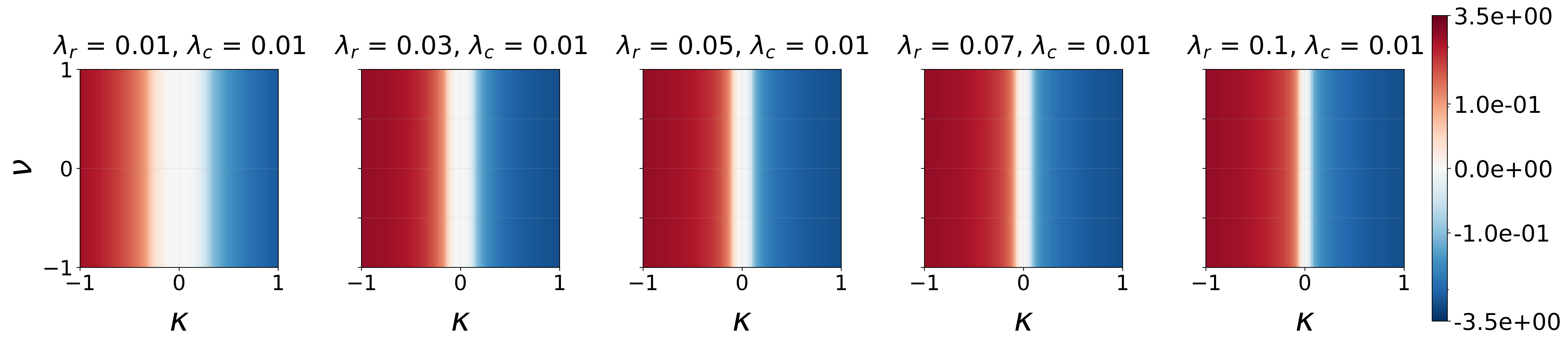}
        \caption{Partial derivative $\partial_\nu \mathcal{R}^<_{\lambda_r, \lambda_c}$ for $\lambda_c = 0.01$ and $\lambda_r \in \{0.01, 0.03, 0.05, 0.07, 0.1\}.$}
        \label{fig:grad_lc_0p01_nu}
    \end{subfigure}

    \caption{Numerically computed gradients of the theoretical risk $\mathcal{R}^<_{\lambda_r, \lambda_c}(\kappa, \nu)$ for $\lambda_c \in \{0.1, 0.01\}$ and varying values of $\lambda_r$. The parameters used are $M = N = 5$ and $\ell = 100$. The gradients, given by Equation~\eqref{eq:app_partials}, are approximated using the Monte Carlo method with a sample size of 10,000 on a $100 \times 100$ discretization grid.}
    \label{fig:app_gradients_lc}
\end{figure}

Figures~\ref{fig:app_gradients_lr}, \ref{fig:app_gradients_lc} present the numerically computed gradients for various mixed combinations of $\lambda_c$ and $\lambda_r$. By breaking the symmetry, we observe how the magnitude and direction of the gradients shift across the landscape. Specifically, two interesting phenomena appear:
\begin{itemize}
    \item $\partial_\nu \mathcal{R}^<_{\lambda_r, \lambda_c}$ behaves symmetrically with respect to both $\lambda_r$ and $\lambda_c$;
    \item The same is not true for $\partial_\kappa \mathcal{R}^<_{\lambda_r, \lambda_c}$. The shape of the gradient and most importantly of the 0 level curve of $\partial_\kappa \mathcal{R}^<$ varies significantly with the pairs $(\lambda_r, \lambda_c)$.
\end{itemize}
This analysis leads to another method of controlling the training dynamics, namely through the ratio $\tau = \lambda_c/\lambda_r$. Indeed, Figure~\ref{fig:on_manifold_lc_0p01_lr0p1} presents the optimization dynamics obtained for constant $\lambda_c = 0.01$ and $\lambda_r = 0.1$. While 100,000 steps are not enough for convergence at the oracle parameters, the runs consistently move towards convergence. Moreover, the training dynamics are different from the ones studied in the main paper, as can be seen from the much faster increase of $|\kappa|$, closely mirroring the behavior of $|\nu|$.

\begin{figure}[htbp]
    \centering
    \begin{subfigure}[b]{0.32\textwidth}
        \centering
        \includegraphics[width=\textwidth]{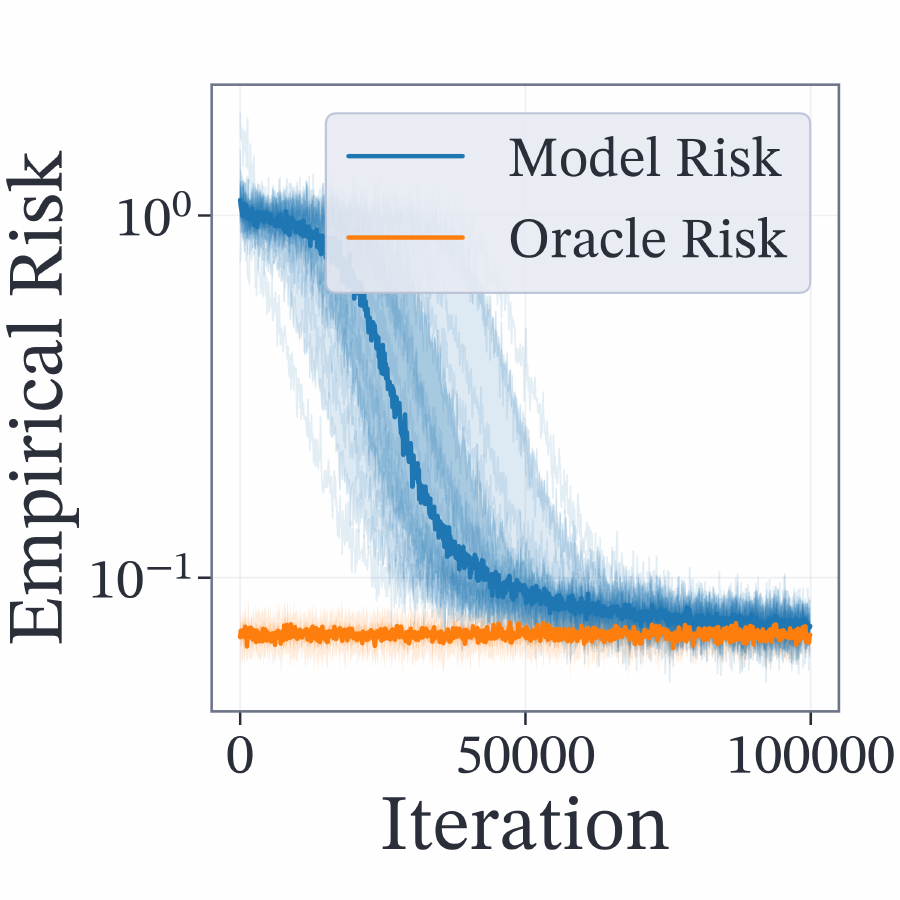}
        \label{fig:on_manifold_lc_0p01_lr0p1_loss}
    \end{subfigure}
    \hfill
    \begin{subfigure}[b]{0.32\textwidth}
        \centering
        \includegraphics[width=\textwidth]{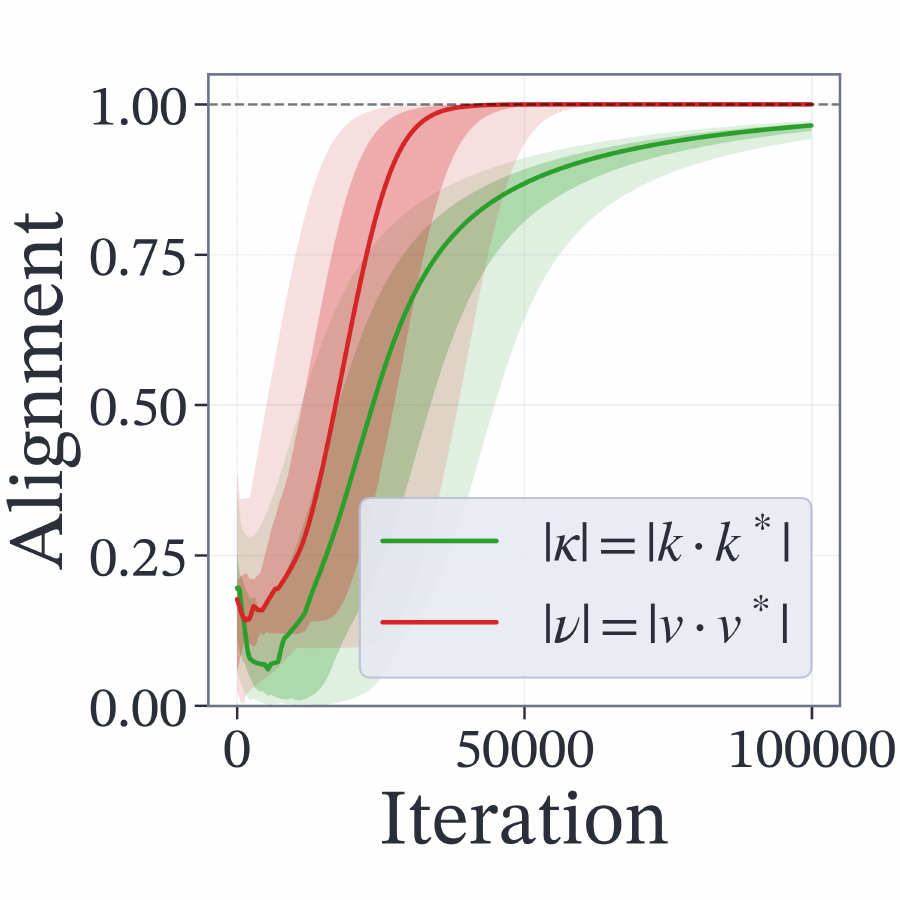}
        \label{fig:on_manifold_lc_0p01_lr0p1_alignment}
    \end{subfigure}
    \hfill
    \begin{subfigure}[b]{0.32\textwidth}
        \centering
        \includegraphics[width=\textwidth]{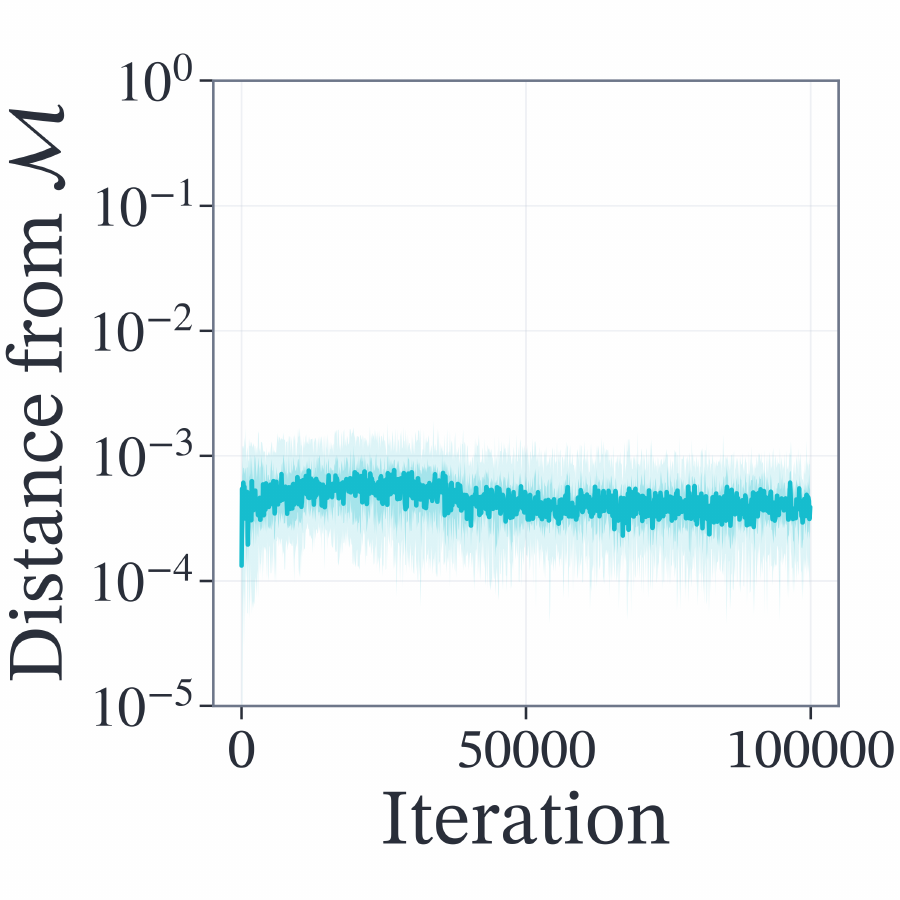}
        \label{fig:on_manifold_lc_0p01_lr0p1_distance}
    \end{subfigure}
    
    \caption{Training dynamics of the predictor with constant $\lambda_c = 0.01$ and $\lambda_r= 0.1$. The solid curve represents the empirical median across the runs and the bands represent the 0.05-0.95 and, respectively, the 0.25-0.75 quantile intervals (although not always visible). The parameters used are $M = N = 5$, $d = 20$, $\gamma = 1$, $\ell = 100$.}
    \label{fig:on_manifold_lc_0p01_lr0p1}
\end{figure}

\newpage

\subsection{Gradient analysis for other hyperparameters $N$ and $\ell$} \label{sec:app_grad_N_ell}
In this section, we investigate the behavior of the partial derivatives of $\mathcal{R}^<$ when varying $N$ and $\ell$, for fixed $\lambda_r = \lambda_c = 0.1$. 

\begin{figure}[htbp]
    \centering
    \begin{subfigure}[b]{1\textwidth}
        \centering
        \includegraphics[width=\textwidth]{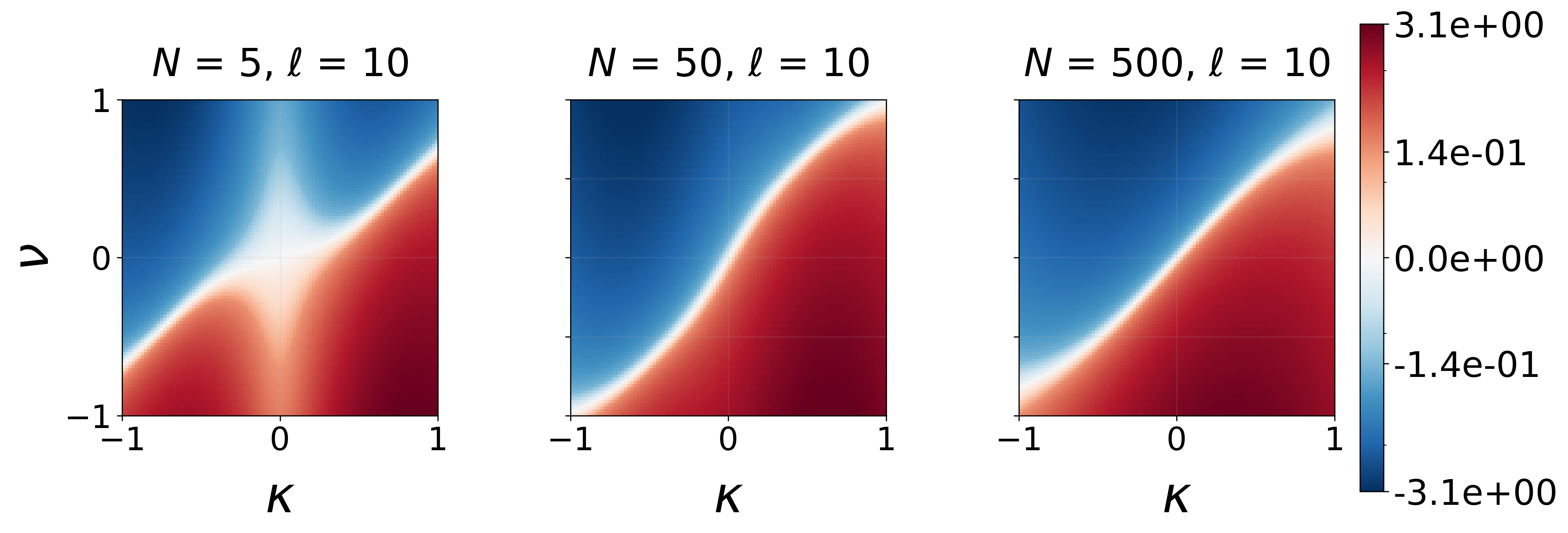}
        \caption{Partial derivative $\partial_\kappa \mathcal{R}^<_{\lambda_r, \lambda_c}$ for $\lambda_r = \lambda_c = 0.1$, $\ell = 10$ and $N \in \{5, 50, 500\}$.}
        \label{fig:grad_ell_10_kappa}
    \end{subfigure}
    \begin{subfigure}[b]{1\textwidth}
        \centering
        \includegraphics[width=\textwidth]{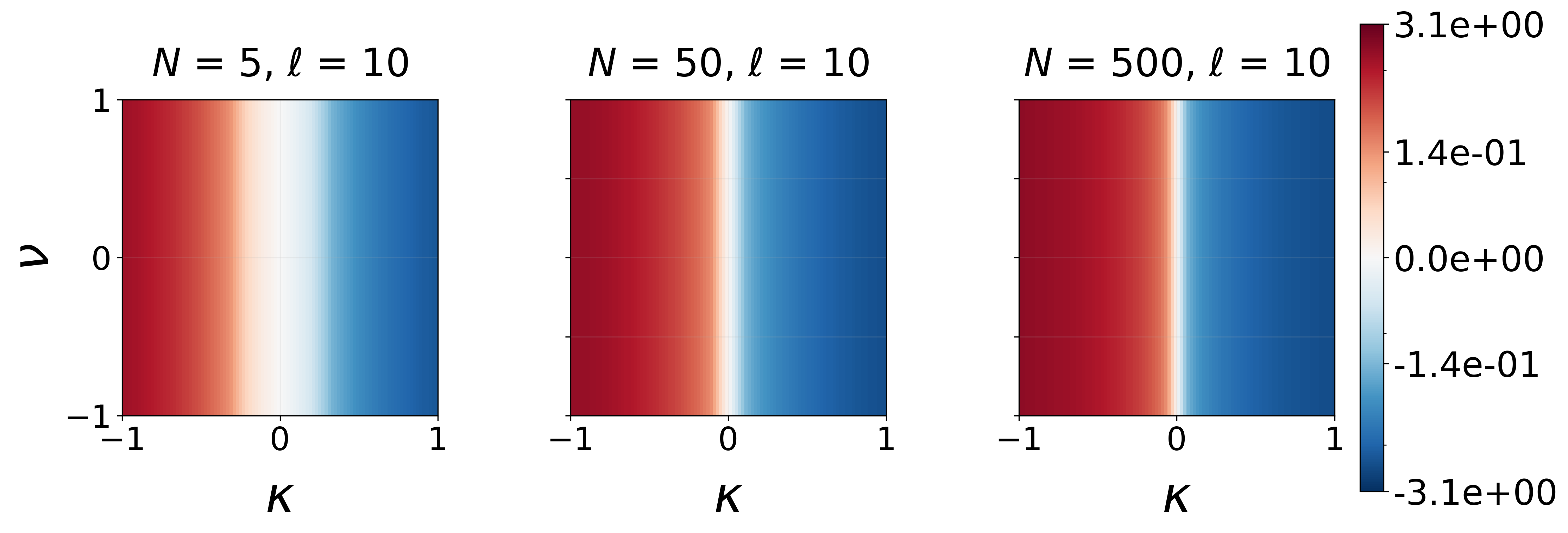}
        \caption{Partial derivative $\partial_\nu \mathcal{R}^<_{\lambda_r, \lambda_c}$ for $\lambda_r = \lambda_c = 0.1$, $\ell = 10$ and $N \in \{5, 50, 500\}$.}
        \label{fig:grad_ell_10_nu}
    \end{subfigure}
    \caption{Numerically computed gradients of the theoretical risk $\mathcal{R}^<_{\lambda_r, \lambda_c}(\kappa, \nu)$ for $\lambda_r = \lambda_c = 0.1$, $\ell = 10$ and varying values of $N$. The gradients, given by Equation~\eqref{eq:app_partials}, are approximated using the Monte Carlo method with a sample size of 10,000 on a $100 \times 100$ discretization grid.}
    \label{fig:grad_ell_10}
\end{figure}

The values considered are not following the asymptotics required for Bayes optimality of the predictor, but better reflect real-world scenarios in which TSFMs are trained to take tens of channels and hundreds of time patches as input, while an individual token's shift in the mean cannot be too large due to normalization layers. We restrict our analysis to exploring the gradient across multiple pairs of values $(N, \ell)$, as conducting full trainings for each pair would be computationally expensive.

\begin{figure}[htbp]
    \centering
    \begin{subfigure}[b]{1\textwidth}
        \centering
        \includegraphics[width=\textwidth]{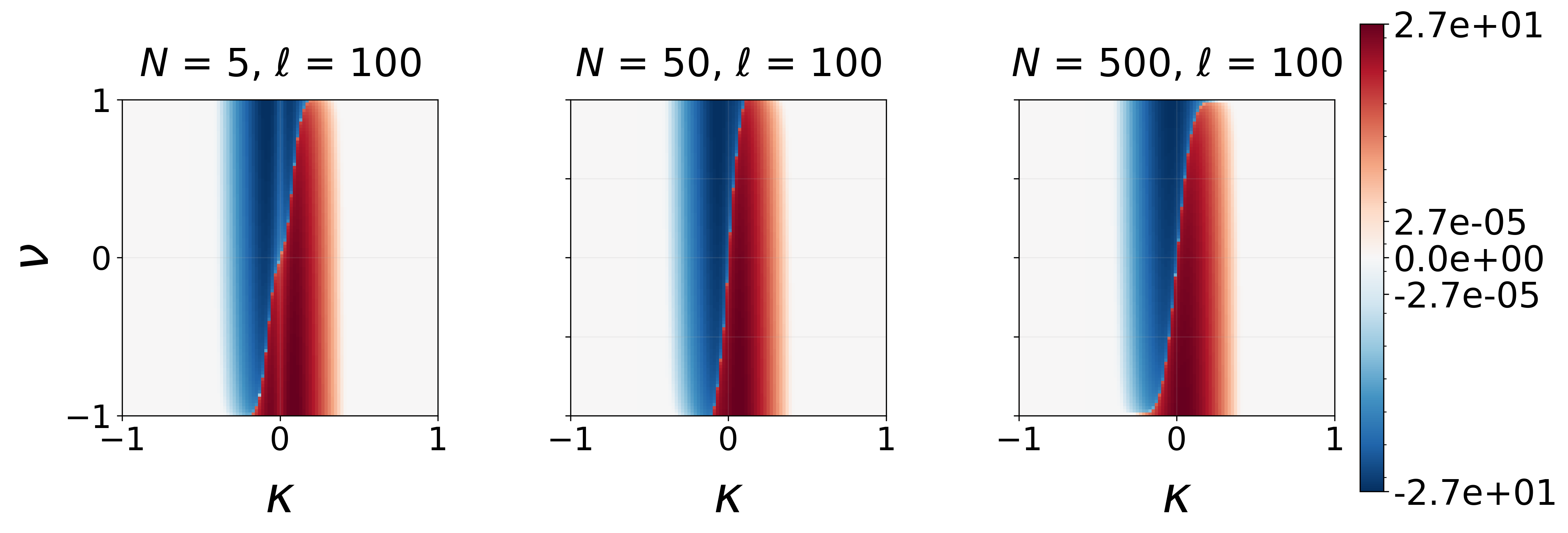}
        \caption{Partial derivative $\partial_\kappa \mathcal{R}^<_{\lambda_r, \lambda_c}$ for $\lambda_r = \lambda_c = 0.1$, $\ell = 100$ and $N \in \{5, 50, 500\}$.}
        \label{fig:grad_ell_100_kappa}
    \end{subfigure}
    \begin{subfigure}[b]{1\textwidth}
        \centering
        \includegraphics[width=\textwidth]{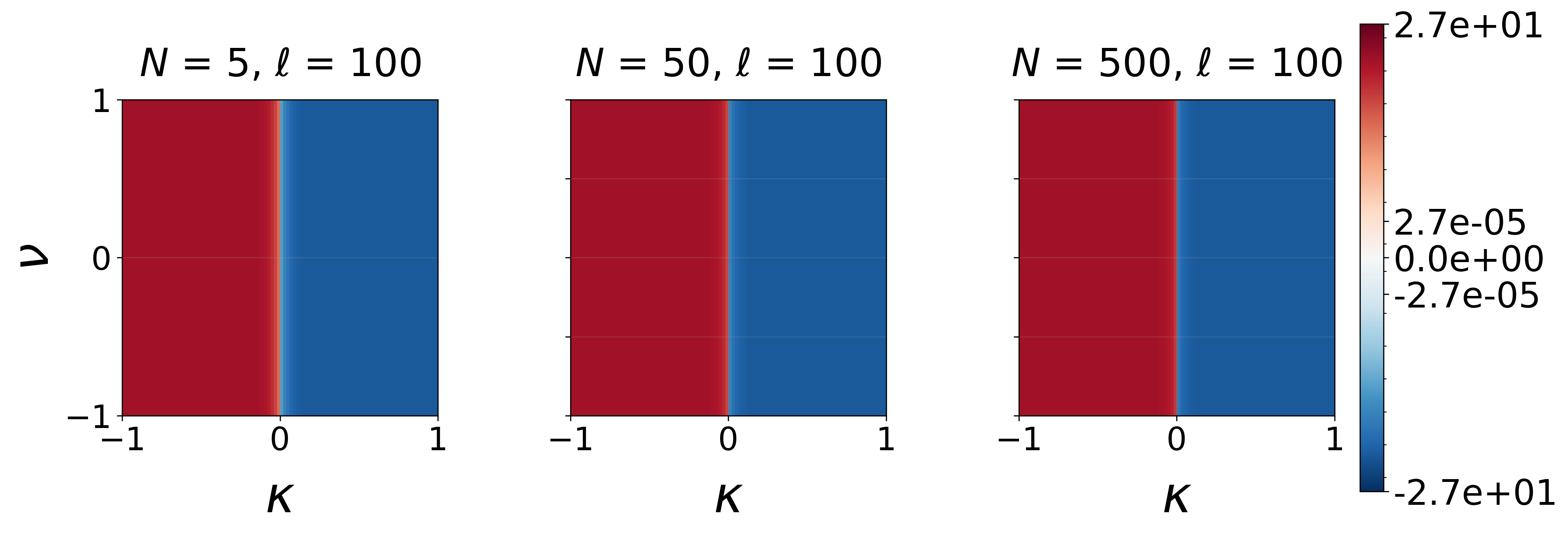}
        \caption{Partial derivative $\partial_\nu \mathcal{R}^<_{\lambda_r, \lambda_c}$ for $\lambda_r = \lambda_c = 0.1$, $\ell = 100$ and $N \in \{5, 50, 500\}$.}
        \label{fig:grad_ell_100_nu}
    \end{subfigure}
    \caption{Numerically computed gradients of the theoretical risk $\mathcal{R}^<_{\lambda_r, \lambda_c}(\kappa, \nu)$ for $\lambda_r = \lambda_c = 0.1$, $\ell = 100$ and varying values of $N$. The gradients, given by Equation~\eqref{eq:app_partials}, are approximated using the Monte Carlo method with a sample size of 10,000 on a $100 \times 100$ discretization grid.}
    \label{fig:grad_ell_100}
\end{figure}

\begin{figure}[htbp]
    \centering
    \begin{subfigure}[b]{1\textwidth}
        \centering
        \includegraphics[width=\textwidth]{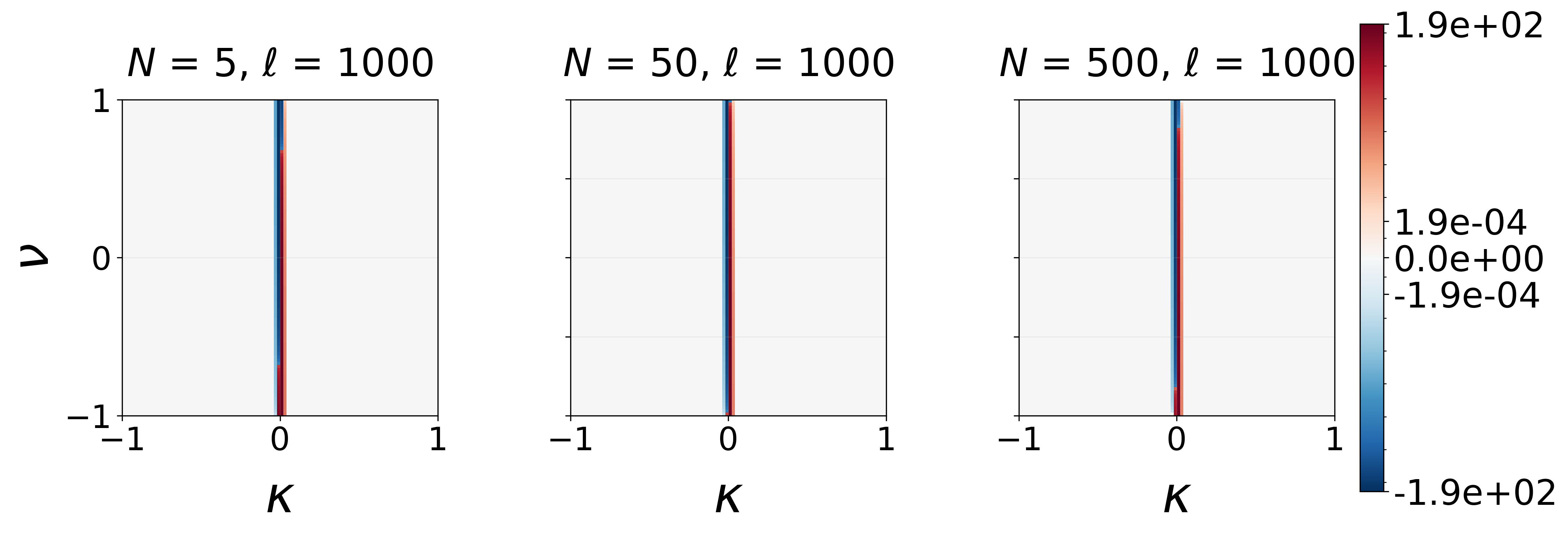}
        \caption{Partial derivative $\partial_\kappa \mathcal{R}^<_{\lambda_r, \lambda_c}$ for $\lambda_r = \lambda_c = 0.1$, $\ell = 1000$ and $N \in \{5, 50, 500\}$.}
        \label{fig:grad_ell_1000_kappa}
    \end{subfigure}
    \begin{subfigure}[b]{1\textwidth}
        \centering
        \includegraphics[width=\textwidth]{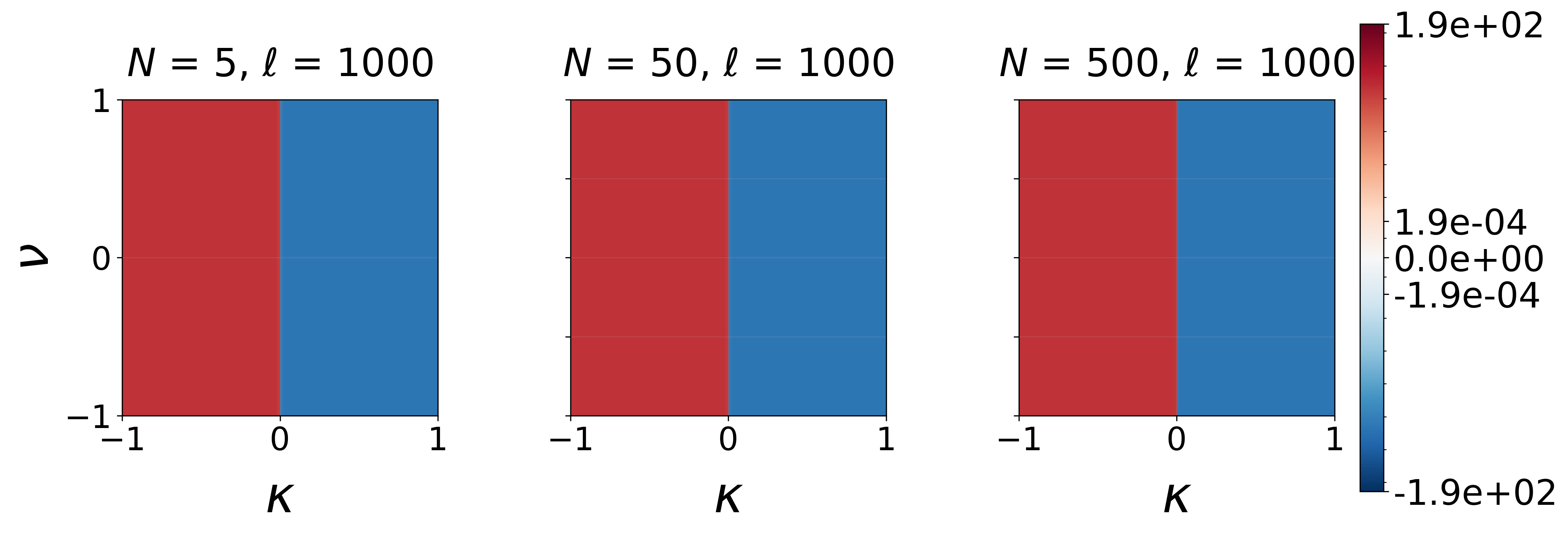}
        \caption{Partial derivative $\partial_\nu \mathcal{R}^<_{\lambda_r, \lambda_c}$ for $\lambda_r = \lambda_c = 0.1$, $\ell = 1000$ and $N \in \{5, 50, 500\}$.}
        \label{fig:grad_ell_1000_nu}
    \end{subfigure}
    \caption{Numerically computed gradients of the theoretical risk $\mathcal{R}^<_{\lambda_r, \lambda_c}(\kappa, \nu)$ for $\lambda_r = \lambda_c = 0.1$, $\ell = 1000$ and varying values of $N$. The gradients, given by Equation~\eqref{eq:app_partials}, are approximated using the Monte Carlo method with a sample size of 10,000 on a $100 \times 100$ discretization grid.}
    \label{fig:grad_ell_1000}
\end{figure}

As can be seen in Figures~\ref{fig:grad_ell_10}, \ref{fig:grad_ell_100}, \ref{fig:grad_ell_1000}, the effect of $\ell$ overshadows that of $N$. Although for small $\ell$, a higher $N$ appears to eliminate the problem of escaping the origin for $\kappa$, this disappears for larger $\ell$ values.

Furthermore, the higher $\ell$ is, the higher the values of $\partial_\nu\mathcal{R}^<$ and the steeper its sign change around $\kappa = 0$.

\newpage


\newpage
\section*{NeurIPS Paper Checklist}

\begin{enumerate}

\item {\bf Claims}
    \item[] Question: Do the main claims made in the abstract and introduction accurately reflect the paper's contributions and scope?
    \item[] Answer: \answerYes{} 
    \item[] Justification: We provide proofs for all the theoretical claims made and extensive numerical experiments both in the main text and in the supplemental material for all the empirically observed claims.
    \item[] Guidelines:
    \begin{itemize}
        \item The answer \answerNA{} means that the abstract and introduction do not include the claims made in the paper.
        \item The abstract and/or introduction should clearly state the claims made, including the contributions made in the paper and important assumptions and limitations. A \answerNo{} or \answerNA{} answer to this question will not be perceived well by the reviewers. 
        \item The claims made should match theoretical and experimental results, and reflect how much the results can be expected to generalize to other settings. 
        \item It is fine to include aspirational goals as motivation as long as it is clear that these goals are not attained by the paper. 
    \end{itemize}

\item {\bf Limitations}
    \item[] Question: Does the paper discuss the limitations of the work performed by the authors?
    \item[] Answer: \answerYes{} 
    \item[] Justification: While not having a separate "Limitations" section in the paper, we discuss limitations of our work in the "Conclusions and limitations" section.
    \item[] Guidelines:
    \begin{itemize}
        \item The answer \answerNA{} means that the paper has no limitation while the answer \answerNo{} means that the paper has limitations, but those are not discussed in the paper. 
        \item The authors are encouraged to create a separate ``Limitations'' section in their paper.
        \item The paper should point out any strong assumptions and how robust the results are to violations of these assumptions (e.g., independence assumptions, noiseless settings, model well-specification, asymptotic approximations only holding locally). The authors should reflect on how these assumptions might be violated in practice and what the implications would be.
        \item The authors should reflect on the scope of the claims made, e.g., if the approach was only tested on a few datasets or with a few runs. In general, empirical results often depend on implicit assumptions, which should be articulated.
        \item The authors should reflect on the factors that influence the performance of the approach. For example, a facial recognition algorithm may perform poorly when image resolution is low or images are taken in low lighting. Or a speech-to-text system might not be used reliably to provide closed captions for online lectures because it fails to handle technical jargon.
        \item The authors should discuss the computational efficiency of the proposed algorithms and how they scale with dataset size.
        \item If applicable, the authors should discuss possible limitations of their approach to address problems of privacy and fairness.
        \item While the authors might fear that complete honesty about limitations might be used by reviewers as grounds for rejection, a worse outcome might be that reviewers discover limitations that aren't acknowledged in the paper. The authors should use their best judgment and recognize that individual actions in favor of transparency play an important role in developing norms that preserve the integrity of the community. Reviewers will be specifically instructed to not penalize honesty concerning limitations.
    \end{itemize}

\item {\bf Theory assumptions and proofs}
    \item[] Question: For each theoretical result, does the paper provide the full set of assumptions and a complete (and correct) proof?
    \item[] Answer: \answerYes{} 
    \item[] Justification: All the theoretical results and claims are accompanied by full proofs in the supplemental material.
    \item[] Guidelines:
    \begin{itemize}
        \item The answer \answerNA{} means that the paper does not include theoretical results. 
        \item All the theorems, formulas, and proofs in the paper should be numbered and cross-referenced.
        \item All assumptions should be clearly stated or referenced in the statement of any theorems.
        \item The proofs can either appear in the main paper or the supplemental material, but if they appear in the supplemental material, the authors are encouraged to provide a short proof sketch to provide intuition. 
        \item Inversely, any informal proof provided in the core of the paper should be complemented by formal proofs provided in appendix or supplemental material.
        \item Theorems and Lemmas that the proof relies upon should be properly referenced. 
    \end{itemize}

    \item {\bf Experimental result reproducibility}
    \item[] Question: Does the paper fully disclose all the information needed to reproduce the main experimental results of the paper to the extent that it affects the main claims and/or conclusions of the paper (regardless of whether the code and data are provided or not)?
    \item[] Answer: \answerYes{} 
    \item[] Justification: We provide the full code used for the numerical experiments in the paper. Furthermore, we provide the complete set of parameters for each experiment in the paper.
    \item[] Guidelines:
    \begin{itemize}
        \item The answer \answerNA{} means that the paper does not include experiments.
        \item If the paper includes experiments, a \answerNo{} answer to this question will not be perceived well by the reviewers: Making the paper reproducible is important, regardless of whether the code and data are provided or not.
        \item If the contribution is a dataset and\slash or model, the authors should describe the steps taken to make their results reproducible or verifiable. 
        \item Depending on the contribution, reproducibility can be accomplished in various ways. For example, if the contribution is a novel architecture, describing the architecture fully might suffice, or if the contribution is a specific model and empirical evaluation, it may be necessary to either make it possible for others to replicate the model with the same dataset, or provide access to the model. In general. releasing code and data is often one good way to accomplish this, but reproducibility can also be provided via detailed instructions for how to replicate the results, access to a hosted model (e.g., in the case of a large language model), releasing of a model checkpoint, or other means that are appropriate to the research performed.
        \item While NeurIPS does not require releasing code, the conference does require all submissions to provide some reasonable avenue for reproducibility, which may depend on the nature of the contribution. For example
        \begin{enumerate}
            \item If the contribution is primarily a new algorithm, the paper should make it clear how to reproduce that algorithm.
            \item If the contribution is primarily a new model architecture, the paper should describe the architecture clearly and fully.
            \item If the contribution is a new model (e.g., a large language model), then there should either be a way to access this model for reproducing the results or a way to reproduce the model (e.g., with an open-source dataset or instructions for how to construct the dataset).
            \item We recognize that reproducibility may be tricky in some cases, in which case authors are welcome to describe the particular way they provide for reproducibility. In the case of closed-source models, it may be that access to the model is limited in some way (e.g., to registered users), but it should be possible for other researchers to have some path to reproducing or verifying the results.
        \end{enumerate}
    \end{itemize}

\item {\bf Open access to data and code}
    \item[] Question: Does the paper provide open access to the data and code, with sufficient instructions to faithfully reproduce the main experimental results, as described in supplemental material?
    \item[] Answer: \answerYes{} 
    \item[] Justification: The code is given as a resource in the supplemental materials.
    \item[] Guidelines:
    \begin{itemize}
        \item The answer \answerNA{} means that paper does not include experiments requiring code.
        \item Please see the NeurIPS code and data submission guidelines (\url{https://neurips.cc/public/guides/CodeSubmissionPolicy}) for more details.
        \item While we encourage the release of code and data, we understand that this might not be possible, so \answerNo{} is an acceptable answer. Papers cannot be rejected simply for not including code, unless this is central to the contribution (e.g., for a new open-source benchmark).
        \item The instructions should contain the exact command and environment needed to run to reproduce the results. See the NeurIPS code and data submission guidelines (\url{https://neurips.cc/public/guides/CodeSubmissionPolicy}) for more details.
        \item The authors should provide instructions on data access and preparation, including how to access the raw data, preprocessed data, intermediate data, and generated data, etc.
        \item The authors should provide scripts to reproduce all experimental results for the new proposed method and baselines. If only a subset of experiments are reproducible, they should state which ones are omitted from the script and why.
        \item At submission time, to preserve anonymity, the authors should release anonymized versions (if applicable).
        \item Providing as much information as possible in supplemental material (appended to the paper) is recommended, but including URLs to data and code is permitted.
    \end{itemize}

\item {\bf Experimental setting/details}
    \item[] Question: Does the paper specify all the training and test details (e.g., data splits, hyperparameters, how they were chosen, type of optimizer) necessary to understand the results?
    \item[] Answer: \answerYes{} 
    \item[] Justification: The hyperparameters used in all the experiments are listed in the supplemental material.
    \item[] Guidelines:
    \begin{itemize}
        \item The answer \answerNA{} means that the paper does not include experiments.
        \item The experimental setting should be presented in the core of the paper to a level of detail that is necessary to appreciate the results and make sense of them.
        \item The full details can be provided either with the code, in appendix, or as supplemental material.
    \end{itemize}

\item {\bf Experiment statistical significance}
    \item[] Question: Does the paper report error bars suitably and correctly defined or other appropriate information about the statistical significance of the experiments?
    \item[] Answer: \answerYes{} 
    \item[] Justification: Empirical distributions are always plotted alongside the main curves.
    \item[] Guidelines:
    \begin{itemize}
        \item The answer \answerNA{} means that the paper does not include experiments.
        \item The authors should answer \answerYes{} if the results are accompanied by error bars, confidence intervals, or statistical significance tests, at least for the experiments that support the main claims of the paper.
        \item The factors of variability that the error bars are capturing should be clearly stated (for example, train/test split, initialization, random drawing of some parameter, or overall run with given experimental conditions).
        \item The method for calculating the error bars should be explained (closed form formula, call to a library function, bootstrap, etc.)
        \item The assumptions made should be given (e.g., Normally distributed errors).
        \item It should be clear whether the error bar is the standard deviation or the standard error of the mean.
        \item It is OK to report 1-sigma error bars, but one should state it. The authors should preferably report a 2-sigma error bar than state that they have a 96\% CI, if the hypothesis of Normality of errors is not verified.
        \item For asymmetric distributions, the authors should be careful not to show in tables or figures symmetric error bars that would yield results that are out of range (e.g., negative error rates).
        \item If error bars are reported in tables or plots, the authors should explain in the text how they were calculated and reference the corresponding figures or tables in the text.
    \end{itemize}

\item {\bf Experiments compute resources}
    \item[] Question: For each experiment, does the paper provide sufficient information on the computer resources (type of compute workers, memory, time of execution) needed to reproduce the experiments?
    \item[] Answer: \answerYes{} 
    \item[] Justification: Details regarding the compute resource are given in Appendix~\ref{sec:app_numerical}.
    \item[] Guidelines:
    \begin{itemize}
        \item The answer \answerNA{} means that the paper does not include experiments.
        \item The paper should indicate the type of compute workers CPU or GPU, internal cluster, or cloud provider, including relevant memory and storage.
        \item The paper should provide the amount of compute required for each of the individual experimental runs as well as estimate the total compute. 
        \item The paper should disclose whether the full research project required more compute than the experiments reported in the paper (e.g., preliminary or failed experiments that didn't make it into the paper). 
    \end{itemize}
    
\item {\bf Code of ethics}
    \item[] Question: Does the research conducted in the paper conform, in every respect, with the NeurIPS Code of Ethics \url{https://neurips.cc/public/EthicsGuidelines}?
    \item[] Answer: \answerYes{} 
    \item[] Justification: All the experiments are conducted using synthetically generated data. The authors don't see any possible harm arising from the present research.
    \item[] Guidelines:
    \begin{itemize}
        \item The answer \answerNA{} means that the authors have not reviewed the NeurIPS Code of Ethics.
        \item If the authors answer \answerNo, they should explain the special circumstances that require a deviation from the Code of Ethics.
        \item The authors should make sure to preserve anonymity (e.g., if there is a special consideration due to laws or regulations in their jurisdiction).
    \end{itemize}

\item {\bf Broader impacts}
    \item[] Question: Does the paper discuss both potential positive societal impacts and negative societal impacts of the work performed?
    \item[] Answer: \answerNA{} 
    \item[] Justification: The work in this paper is purely theory-oriented, aiming to better understand existing architectures. The authors see no societal impact related to the present paper.
    \item[] Guidelines:
    \begin{itemize}
        \item The answer \answerNA{} means that there is no societal impact of the work performed.
        \item If the authors answer \answerNA{} or \answerNo, they should explain why their work has no societal impact or why the paper does not address societal impact.
        \item Examples of negative societal impacts include potential malicious or unintended uses (e.g., disinformation, generating fake profiles, surveillance), fairness considerations (e.g., deployment of technologies that could make decisions that unfairly impact specific groups), privacy considerations, and security considerations.
        \item The conference expects that many papers will be foundational research and not tied to particular applications, let alone deployments. However, if there is a direct path to any negative applications, the authors should point it out. For example, it is legitimate to point out that an improvement in the quality of generative models could be used to generate Deepfakes for disinformation. On the other hand, it is not needed to point out that a generic algorithm for optimizing neural networks could enable people to train models that generate Deepfakes faster.
        \item The authors should consider possible harms that could arise when the technology is being used as intended and functioning correctly, harms that could arise when the technology is being used as intended but gives incorrect results, and harms following from (intentional or unintentional) misuse of the technology.
        \item If there are negative societal impacts, the authors could also discuss possible mitigation strategies (e.g., gated release of models, providing defenses in addition to attacks, mechanisms for monitoring misuse, mechanisms to monitor how a system learns from feedback over time, improving the efficiency and accessibility of ML).
    \end{itemize}
    
\item {\bf Safeguards}
    \item[] Question: Does the paper describe safeguards that have been put in place for responsible release of data or models that have a high risk for misuse (e.g., pre-trained language models, image generators, or scraped datasets)?
    \item[] Answer: \answerNA{} 
    \item[] Justification: The authors don't see any risk in the models used for this research.
    \item[] Guidelines:
    \begin{itemize}
        \item The answer \answerNA{} means that the paper poses no such risks.
        \item Released models that have a high risk for misuse or dual-use should be released with necessary safeguards to allow for controlled use of the model, for example by requiring that users adhere to usage guidelines or restrictions to access the model or implementing safety filters. 
        \item Datasets that have been scraped from the Internet could pose safety risks. The authors should describe how they avoided releasing unsafe images.
        \item We recognize that providing effective safeguards is challenging, and many papers do not require this, but we encourage authors to take this into account and make a best faith effort.
    \end{itemize}

\item {\bf Licenses for existing assets}
    \item[] Question: Are the creators or original owners of assets (e.g., code, data, models), used in the paper, properly credited and are the license and terms of use explicitly mentioned and properly respected?
    \item[] Answer: \answerNA{} 
    \item[] Justification: No existing asset is used.
    \item[] Guidelines:
    \begin{itemize}
        \item The answer \answerNA{} means that the paper does not use existing assets.
        \item The authors should cite the original paper that produced the code package or dataset.
        \item The authors should state which version of the asset is used and, if possible, include a URL.
        \item The name of the license (e.g., CC-BY 4.0) should be included for each asset.
        \item For scraped data from a particular source (e.g., website), the copyright and terms of service of that source should be provided.
        \item If assets are released, the license, copyright information, and terms of use in the package should be provided. For popular datasets, \url{paperswithcode.com/datasets} has curated licenses for some datasets. Their licensing guide can help determine the license of a dataset.
        \item For existing datasets that are re-packaged, both the original license and the license of the derived asset (if it has changed) should be provided.
        \item If this information is not available online, the authors are encouraged to reach out to the asset's creators.
    \end{itemize}

\item {\bf New assets}
    \item[] Question: Are new assets introduced in the paper well documented and is the documentation provided alongside the assets?
    \item[] Answer: \answerYes{} 
    \item[] Justification: The only asset released is code for the experiments in the main paper. The code is well documented in the opinion of the authors.
    \item[] Guidelines:
    \begin{itemize}
        \item The answer \answerNA{} means that the paper does not release new assets.
        \item Researchers should communicate the details of the dataset\slash code\slash model as part of their submissions via structured templates. This includes details about training, license, limitations, etc. 
        \item The paper should discuss whether and how consent was obtained from people whose asset is used.
        \item At submission time, remember to anonymize your assets (if applicable). You can either create an anonymized URL or include an anonymized zip file.
    \end{itemize}

\item {\bf Crowdsourcing and research with human subjects}
    \item[] Question: For crowdsourcing experiments and research with human subjects, does the paper include the full text of instructions given to participants and screenshots, if applicable, as well as details about compensation (if any)? 
    \item[] Answer: \answerNA{} 
    \item[] Justification: The paper does not involve crowdsourcing nor research with human subjects.
    \item[] Guidelines:
    \begin{itemize}
        \item The answer \answerNA{} means that the paper does not involve crowdsourcing nor research with human subjects.
        \item Including this information in the supplemental material is fine, but if the main contribution of the paper involves human subjects, then as much detail as possible should be included in the main paper. 
        \item According to the NeurIPS Code of Ethics, workers involved in data collection, curation, or other labor should be paid at least the minimum wage in the country of the data collector. 
    \end{itemize}

\item {\bf Institutional review board (IRB) approvals or equivalent for research with human subjects}
    \item[] Question: Does the paper describe potential risks incurred by study participants, whether such risks were disclosed to the subjects, and whether Institutional Review Board (IRB) approvals (or an equivalent approval/review based on the requirements of your country or institution) were obtained?
    \item[] Answer: \answerNA{} 
    \item[] Justification: The paper does not involve crowdsourcing nor research with human subjects.
    \item[] Guidelines:
    \begin{itemize}
        \item The answer \answerNA{} means that the paper does not involve crowdsourcing nor research with human subjects.
        \item Depending on the country in which research is conducted, IRB approval (or equivalent) may be required for any human subjects research. If you obtained IRB approval, you should clearly state this in the paper. 
        \item We recognize that the procedures for this may vary significantly between institutions and locations, and we expect authors to adhere to the NeurIPS Code of Ethics and the guidelines for their institution. 
        \item For initial submissions, do not include any information that would break anonymity (if applicable), such as the institution conducting the review.
    \end{itemize}

\item {\bf Declaration of LLM usage}
    \item[] Question: Does the paper describe the usage of LLMs if it is an important, original, or non-standard component of the core methods in this research? Note that if the LLM is used only for writing, editing, or formatting purposes and does \emph{not} impact the core methodology, scientific rigor, or originality of the research, declaration is not required.
    \item[] Answer: \answerNA{} 
    \item[] Justification: LLMs were used only for writing, editing, or formatting purposes.
    \item[] Guidelines:
    \begin{itemize}
        \item The answer \answerNA{} means that the core method development in this research does not involve LLMs as any important, original, or non-standard components.
        \item Please refer to our LLM policy in the NeurIPS handbook for what should or should not be described.
    \end{itemize}

\end{enumerate}

\end{document}